\definecolor{Gred}{RGB}{219, 50, 54}
\definecolor{Ggreen}{RGB}{60, 186, 84}
\definecolor{Gblue}{RGB}{72, 133, 237}
\definecolor{Gyellow}{RGB}{247, 178, 16}
\definecolor{ToCgreen}{RGB}{0, 128, 0}
\definecolor{myGold}{RGB}{231,141,20}
\definecolor{myBlue}{rgb}{0.19,0.41,.65}
\definecolor{myPurple}{RGB}{175,0,124}
\definecolor{niceRed}{RGB}{153,0,0}
\definecolor{niceRed}{RGB}{190,38,38}
\definecolor{blueGrotto}{HTML}{059DC0}
\definecolor{royalBlue}{HTML}{057DCD}
\definecolor{navyBlueP}{HTML}{0B579C}
\definecolor{limeGreen}{HTML}{81B622}
\definecolor{nicePink}{RGB}{247,83,148}
\def\compactify{\itemsep=0pt \topsep=0pt \partopsep=0pt \parsep=0pt}
\let\latexusecounter=\usecounter
\definecolor{myC}{rgb}{0, 255, 255}
\definecolor{myY}{rgb}{204, 204, 0}
\definecolor{myM}{rgb}{255, 0, 255}
\definecolor{secinhead}{RGB}{249,196,95}
\definecolor{lgray}{gray}{0.8}
\newtheorem{theorem}{Theorem} 
\newtheorem*{theorem*}{Theorem} 
\newtheorem*{proposition*}{Proposition} 
\newtheorem{lemma}{Lemma}
\newtheorem{claim}{Claim}
\newtheorem{conjecture}{Conjecture}
\newtheorem{question}{Question}
\newtheorem{inftheorem}{Informal Theorem}
\newtheorem{definition}{Definition}
\newtheorem{remark}{Remark}
\newtheorem{example}{Example}
\renewcommand{\Pr}{\mathop{\bf Pr\/}}
\newcommand{\E}{\mathop{\bf E\/}}
\newcommand{\medboost}{\mathrm{MedBoost}}
\newcommand{\med}{\mathrm{Median}}
\newcommand{\onlinedim}{\mathbb D^{\mathrm{onl}}}
\newcommand{\reals}{\mathbb R}
\newcommand{\nats}{\mathbb N}
\newcommand{\eps}{\epsilon}
\newcommand{\calA}{\mathcal{A}}
\newcommand{\calD}{\mathcal{D}}
\newcommand{\calF}{\mathcal{F}}
\newcommand{\calH}{\mathcal{H}}
\newcommand{\calM}{\mathcal{M}}
\newcommand{\calP}{\mathcal{P}}
\newcommand{\calT}{\mathcal{T}}
\newcommand{\calX}{\mathcal{X}}
\newcommand{\calY}{\mathcal{Y}}
\newcommand{\calZ}{\mathcal{Z}}
\newcommand{\hI}{\mathbb{I}}
\newcommand{\hN}{\mathbb{N}}
\def\l{\ell}
\def\<{\langle}
\def\>{\rangle}
\newcommand{\lr}[1]{\mathopen{}\left(#1\right)}
\newcommand{\lrbra}[1]{\mathopen{}\left[#1\right]}
\newcommand{\Lrbra}[1]{\mathopen{}\big[#1\big]}
\newcommand{\lrset}[1]{\mathopen{}\left\{#1\right\}}
\def\wt{\widetilde}
\def\wh{\widehat}
\def\eps{\varepsilon}
\def\vec{\bm}
\newmdtheoremenv{theo}{Informal Theorem}
\newmdtheoremenv{definition-frame}{Definition}
\renewenvironment{abstract}{%
	\if@twocolumn
	\section*{\abstractname}%
	\else 
	\begin{center}%
		{\bfseries \large\abstractname\vspace{\z@}}
	\end{center}%
	\quotation
	\fi}
{\if@twocolumn\else\endquotation\fi}
\title{Optimal Learners for Realizable Regression:\\
PAC Learning and Online Learning}
\author{%
    {Idan Attias}\footnote{
  \texttt{\color{magenta}idanatti@post.bgu.ac.il}} \\
     Ben-Gurion University
    \and
   {Steve Hanneke}\footnote{
   \texttt{\color{magenta}steve.hanneke@gmail.com}}\\
 Purdue University
 \and 
  {Alkis Kalavasis}\footnote{
  \texttt{\color{magenta}alvertos.kalavasis@yale.edu}}  \\
Yale University
  \and
    {Amin Karbasi}\footnote{
\texttt{\color{magenta}amin.karbasi@yale.edu}}
 \\
    Yale University, Google Research
\and 
{Grigoris Velegkas}\footnote{
\texttt{\color{magenta}grigoris.velegkas@yale.edu}}
 \\
Yale University 
}
\begin{document}

\maketitle

\begin{abstract}
 In this work, we aim to characterize the statistical complexity of realizable regression both in the PAC learning setting and the online learning setting.

Previous work had established the sufficiency of finiteness of the fat shattering dimension for PAC learnability and the necessity of finiteness of the scaled Natarajan dimension, but little progress had been made towards a more complete characterization since  the work of Simon (SICOMP '97). To this end,  we first introduce a minimax instance optimal learner for realizable regression and propose a novel dimension that both qualitatively and quantitatively characterizes which classes of real-valued predictors are learnable.  We then identify a combinatorial dimension related to the Graph dimension that characterizes ERM learnability in the realizable setting. Finally, we establish a necessary condition for learnability based on a combinatorial dimension related to the DS dimension, and conjecture that it may also be sufficient in this context.

Additionally, in the context of online learning we provide a dimension that characterizes the minimax instance optimal cumulative loss up to a constant factor and design an optimal online learner for realizable regression, thus resolving an open question raised by Daskalakis and Golowich in STOC '22.

 \end{abstract}


\section{Introduction}

Real-valued regression is one of the most fundamental and well-studied problems in statistics and data science \cite{vapnik1999overview,goodfellow2016deep,bach2021learning}, with numerous applications in domains such as economics and medicine \cite{dua2017uci}.
However, despite its significance and applicability,
theoretical understanding of the statistical complexity of real-valued regression is still lacking. 

Perhaps surprisingly, in the fundamental \textbf{realizable} Probably Approximately Correct (PAC) setting \cite{valiant1984theory} and the \textbf{realizable} online setting \cite{littlestone1988learning,daskalakis2022fast}, we do not know of any characterizing dimension or optimal learners for the regression task. This comes in sharp contrast with binary and multiclass classification, both in the offline and the online settings, where the situation is much more clear \cite{littlestone1988learning,haussler1994predicting,blumer1989learnability,hanneke2016optimal,daniely2014optimal,brukhim2022characterization,natarajan1989learning,daniely2015multiclass, raman2023characterization}. Our goal in this work is to make progress regarding the following important question:
\begin{center}
    \emph{Which dimensions characterize PAC and online learnability for realizable real-valued regression?}
\end{center}

Consider an instance space
$\calX$, label space $\calY = [0,1]$ and hypothesis class $\calH \subseteq [0,1]^\calX$. In this work, we focus on the case of the regression framework with respect to the \textbf{absolute loss} $\l(x,y) \triangleq |x-y|$ for any $x,y \in [0,1]$, 
following previous works 
\cite{bartlett1994fat,simon1997bounds,alon1997scale,bartlett1998prediction,daskalakis2022fast}.
Our qualitative results hold for more general losses,
namely any approximate pseudo-metric loss, which includes common losses such as $\ell_p$ for any $p \geq 1$. See the formal statements in \Cref{sec:extension}.

\paragraph{PAC/Offline Realizable Regression.} Let us first recall the definition of realizable real-valued regression in the PAC setting. Informally, the learner is given i.i.d. labeled examples drawn from an unknown distribution $\calD$ with the promise that there exists a target hypothesis $h^\star \in \calH$ that perfectly labels the data. The goal is to use this training set to design a predictor with small error on future examples from the same distribution.
Note that given a learner $A$ and sample $S \sim \calD^n$, we let $A(S;x)$ be its prediction on $x \in \calX$ when the training set is $S.$

\begin{definition}
[PAC Realizable Regression]
\label{def:realizable-PAC-regression}
Let $\l : [0,1]^2 \to \reals_{\geq 0}$ be the absolute loss function.
Consider a class $\calH \subseteq [0,1]^\calX$ for some domain $\calX$. Let $h^\star \in \calH$ be an unknown target function and let $\calD_\calX$ be an
unknown distribution on $\calX$. A random sample $S$ of size $n$ consists of
points $x_1, \ldots, x_n$ drawn i.i.d. from $\calD_\calX$ and the corresponding
values $h^\star(x_1), \ldots, h^\star(x_n)$ of the target function.
\begin{itemize}    
   \item  An algorithm $A : (\calX \times [0,1])^n \to [0,1]^\calX$ is an \textbf{$n$-sample PAC learner for $\calH$ with respect to $\l$} if, for all $0 < \varepsilon, \delta < 1$, there exists $n = n(\eps, \delta) \in \nats$ such that for any $h^\star \in \calH$ and any domain distribution $\calD_\calX$, it holds that 
$\E_{x \sim \calD_\calX}[\l(A(S;x), h^\star(x))] \leq \varepsilon$, 
with probability at least $1-\delta$ over
    $S \sim \calD_\calX^n$.

\item An algorithm $A : (\calX \times [0,1])^n \to [0,1]^\calX$ is an \textbf{$n$-sample cut-off PAC learner for $\calH$ with respect to $\l$} if, for all $0 < \varepsilon, \delta, \gamma < 1$, there exists $n = n(\eps, \delta, \gamma) \in \nats$ such that for any $h^\star \in \calH$ and any domain distribution $\calD_\calX$, it holds that 
$\Pr_{x \sim \calD_\calX}[\l(A(S;x), h^\star(x)) > \gamma] \leq \varepsilon$, with probability at least $1-\delta$ over
    $S \sim \calD_\calX^n$.
\end{itemize}
\end{definition}

We remark that these two PAC learning definitions are qualitatively equivalent (cf. \Cref{lem:equivalence sample complexity defs}).
We note that, throughout the paper, we implicitly assume, as e.g., in \cite{hanneke2019sample}, that all hypothesis classes are \emph{admissible} in the sense that they satisfy mild measure-theoretic conditions, such as those specified in \cite{dudley1984course}
(Section 10.3.1) or \cite{pollard2012convergence} (Appendix C).

The question we would like to understand in this setting (and was raised by \cite{simon1997bounds}) follows:

\begin{question}
\label{q:1}
Can we characterize learnability and design minimax optimal PAC learners for realizable regression?
\end{question}

Traditionally, in the context of statistical
learning a minimax optimal learner $A^\star$ is one that, for every class $\calH$, given
an error parameter $\eps$ and a confidence parameter $\delta$ requires
$\min_{A}\max_{\calD} \calM_A(\calH;\eps,\delta,\calD)$ samples to achieve it, 
where $\calM_A(\calH;\eps,\delta,\calD)$ is the number of samples 
that some learner $A$ requires to achieve error $\eps$ with confidence $\delta$ when the 
data-generating distribution is $\calD.$ There seems to be some inconsistency
in the literature about realizable regression, where some works present 
minimax optimal learners when the maximum is also taken over
the hypothesis class $\calH$, i.e.,
$\min_{A}\max_{\calD, \calH} \calM_A(\calH,\eps,\delta,\calD)$. This
is a weaker result compared to ours, since it shows that 
there exists \emph{some} hypothesis class for which these learners
are optimal.

Our main results in this setting, together with the uniform convergence
results from prior work, give
rise to an interesting landscape of realizable PAC learning that is depicted in \Cref{fig:realizable pac learnability landscape}.

\begin{figure}[ht!]
    \centering
    \includegraphics[scale=0.25]{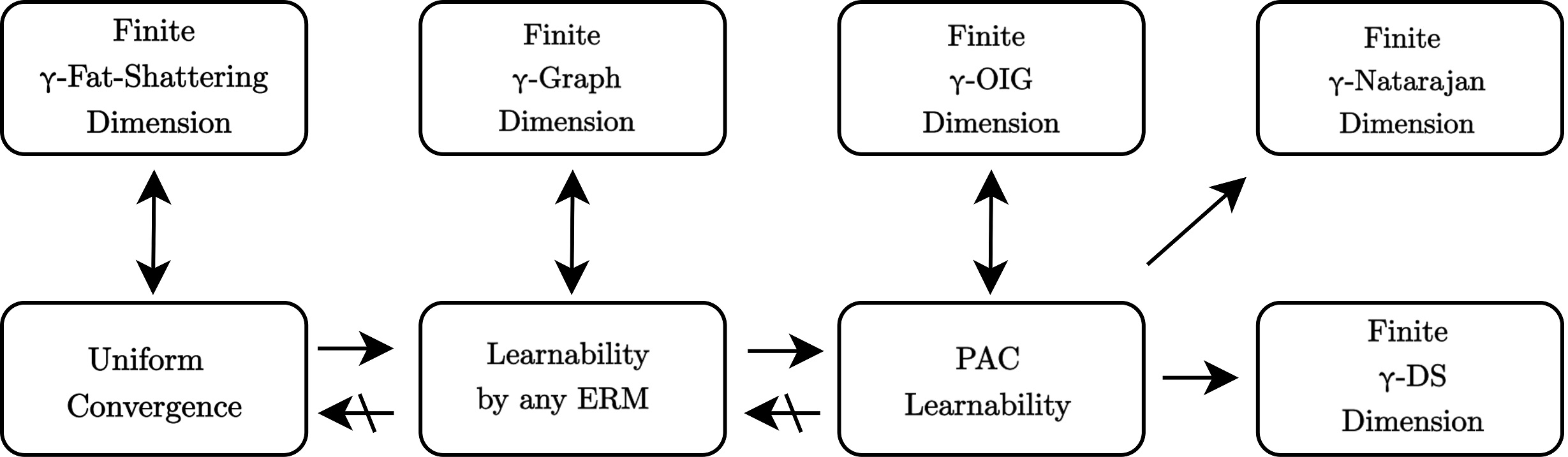}
    \caption{Landscape of Realizable PAC Regression:
    the ``deleted'' arrows mean that the implication is \emph{not} true. The equivalence between finite fat-shattering dimension and the uniform convergence property is known even in the realizable case (see \cite{shalev2010learnability}) and the fact that PAC learnability requires finite scaled Natarajan dimension is proved in \cite{simon1997bounds}. The properties of the other three dimensions (scaled Graph dimension, scaled One-Inclusion-Graph (OIG) dimension, and scaled Daniely-Shalev Shwartz (DS) dimension) are shown in this work. We further conjecture that finite scaled Natajaran dimension is not sufficient for PAC learning, while finite scaled DS does suffice. Interestingly, we observe that the notions of uniform convergence, learnability by any ERM and PAC learnability are separated in realizable regression. }
    \label{fig:realizable pac learnability landscape}
\end{figure}

\paragraph{Online Realizable Regression.} We next shift our attention to the classical setting of online learning where 
the learner interacts with the adversary over a sequence of $T$ rounds: in every round
the adversary
presents an example $x_t \in \calX$, the learner predicts a label $\wh{y}_t$ and then the
adversary reveals the correct label $y^\star_t.$ In this context, realizability means that
there always exists some function $h_t \in \calH$ that perfectly explains the examples and
the labels that the adversary has chosen. In the agnostic setting, the goal of the learner
is to compete with the performance of the best function in $\calH$, i.e., 
achieve small \emph{regret}.
This setting was
introduced by \cite{littlestone1988learning} in the context of binary classification, where they also
characterized learnability in the realizable setting 
and provided an optimal algorithm. Later,
\cite{ben2009agnostic} provided an almost optimal algorithm in the agnostic setting, which
suffered from an additional $\log T$ factor in its regret bound. This extra factor
was later shaved by \cite{alon2021adversarial}. The study of multiclass online classification
was initiated by \cite{daniely2015multiclass}, who provided an optimal algorithm for the 
realizable setting and an algorithm that is suboptimal by a factor of $\log k \cdot \log T$ 
in the agnostic setting, where $k$ is the total number of labels. 
Recently, \cite{raman2023characterization} shaved off the $\log k$ factor. The problem of online regression differs significantly 
from that of online classification, since the loss function is not binary. The agnostic
online regression setting has received a lot of attention and there is a series of works
that provides optimal minimax guarantees \cite{rakhlin2010online, rakhlin2014online, rakhlin2015sequential, rakhlin2015online, block2021majorizing}. 

To the best of our knowledge,
the realizable setting has received much less attention. A notable exception is the
work of \cite{daskalakis2022fast} that focuses on realizable online regression using 
\emph{proper}\footnote{A learner is proper when the predictions $\wh{y}_t$ 
can be realized by some function
$h_t \in \calH.$}
learners. They provide an optimal regret bound with respect to the \emph{sequential fat-shattering
dimension}. However, as they mention in their work (cf. \Cref{example:sfat fails,ex:learable-inf-fat}), this
dimension does \emph{not} characterize
the optimal cumulative loss bound.

Interestingly, Daskalakis and Golowich \cite{daskalakis2022fast} leave
the question of providing a dimension that characterizes online realizable regression open. In our work, we resolve this question by providing upper and lower bounds
for the cumulative loss of the learner that are tight up to a constant factor of $2$
using a novel combinatorial dimension that is related to (scaled) Littlestone trees 
(cf. \Cref{def:scaled-littlestone}). Formally, the setting of realizable online regression
is defined as follows:
\begin{definition}
[Online Realizable Regression]
\label{def:realizable-online-regression}
Let $\l: [0,1]^2 \rightarrow \reals_{\geq 0}$ be a loss function. Consider a class
$\calH \subseteq [0,1]^\calX$ for some domain $\calX$. The realizable online regression 
setting over $T$ rounds consists of the following interaction between the learner and the 
adversary:
\begin{itemize}
    \item The adversary presents $x_t \in \calX$.
    \item The learner predicts $\wh{y}_t \in [0,1]$, possibly using randomization. 
    \item The adversary reveals the true label $y^\star_t \in [0,1]$ with the constraint
    that $\exists h^\star_t \in \calH, \forall \tau \leq t, h(x_\tau) = y^\star_\tau.$
    \item The learner suffers loss $\l(\wh{y}_t, y^\star_t).$  
\end{itemize}
The goal of the learner is
to minimize its expected cumulative loss
$\mathfrak{C}_T = \E\left[ \sum_{t \in [T]} \l(\wh{y}_t, y_t^\star) \right]$.
\end{definition}

We remark that in the definition of the cumulative loss $\mathfrak{C}_T$, the expectation is over the randomness of the algorithm (which is the only stochastic aspect of the online setting).
As we explained before, the main question we study in this setting is the following:
\begin{question}
\label{q:2}
Can we characterize the optimal cumulative loss and design optimal online learners for realizable regression?
\end{question}

Our main result in this setting provides a dimension that characterizes the optimal cumulative
loss up to a factor of 2 and provides an algorithm that achieves this bound.

\subsection{Related Work}

Our work makes progress towards the characterization of realizable  regression both in the
offline and online settings. Similar results are known for the more studied settings of binary and multiclass (PAC/online) classification \cite{littlestone1988learning,haussler1994predicting,blumer1989learnability,hanneke2016optimal,daniely2014optimal,brukhim2022characterization,natarajan1989learning,daniely2015multiclass,rubinstein2009shifting,bun2020equivalence,raman2023characterization,alon2021adversarial,hopkins2022realizable,kalavasis2023statistical,bun2023stability}. 
The fundamental works of \cite{bartlett1994fat,simon1997bounds,alon1997scale,bartlett1998prediction}
study the question of PAC learnability for the regression task. However, none of them characterizes learnability in the realizable setting.
\cite{simon1997bounds} showed that finiteness of scaled Natarajan dimension is necessary for realizable PAC regression.
\cite{bartlett1998prediction} employed the 
one-inclusion graph
(OIG) algorithm to get a real-valued predictor whose expected error is upper bounded by the $V_\gamma$-dimension, whose finiteness is sufficient but not necessary for realizable  learnability in this setting.
We refer to \cite{kleer2023primal} for details about this dimension which was introduced in \cite{alon1997scale}.
\cite{alon1997scale} showed that finiteness of the fat shattering dimension at all scales is equivalent to PAC learnability in the agnostic setting. Nevertheless, this does not hold in the realizable case as \Cref{ex:learable-inf-fat} demonstrates. Similarly, the work of \cite{bartlett1994fat} shows that fat shattering dimension characterizes the regression task when the labels are corrupted by noise.
Recently, the concurrent and independent work of \cite{aden2023optimal} provides high-probability bounds for the one-inclusion graph algorithm in the realizable PAC regression setting using the $V_\gamma$-dimension. We underline that this dimension does not characterize realizable regression.
For more general losses, see \cite{mendelson2002improving,bartlett2005local}.

In the area of online regression, the work of \cite{daskalakis2022fast} studies the realizable setting of online regression
with the absolute loss (as we do) and presents a randomized proper learning algorithm which achieves a near-optimal cumulative loss in terms of the sequential fat-shattering dimension of the hypothesis class. We emphasize that this dimension does not tightly capture this setting and, hence, does not address the question that we study. In particular, the lower bound they provide is related to the unimprovability of a bound concerning the sequential fat-shattering dimension and does not tightly capture the complexity of the problem. 
In the more general agnostic case, regret bounds have been obtained in the work of \cite{rakhlin2015online}, using the sequential fat-shattering dimension and sequential covering numbers. These quantities are also not tight in the realizable setting. See also \cite{rakhlin2014online,rakhlin2015sequential,rakhlin2017empirical}. Moreover, regression oracles have been used in contextual bandits problems, see \cite{foster2020beyond,simchi2022bypassing} for further details.

Our characterization (cf. \Cref{thm: OIG dimension result}) for the offline setting via the OIG algorithm \cite{haussler1994predicting} is further motivated by the work of \cite{montasser2022adversarially}, where they propose a dimension of similar flavor for adversarially robust PAC learnability. We mention that recently the OIG algorithm has received a lot of attention from the statistical learning theory community \cite{aden2022one,bousquet2021theory,kalavasis2022multiclass,shao2022theory,alon2022theory,charikar2022characterization}.
Finally, for a small sample of works that deal with offline and online regression problems, see \cite{block2022smoothed,golowich2021differentially,attias2022adversarially,hanneke2019sample,sheffet2017differentially,milionis2022differentially} and the references therein.

\section{PAC Learnability for Realizable Regression}
In this section, we present various combinatorial dimensions that 
provide necessary or sufficient conditions for learnability
of real-valued functions. All the definitions that we consider have
a similar flavor. We first define what it means for a class $\calH$
to ``shatter'' a set of $n$ points and then we define the dimension
to be equal to the cardinality of the largest set that $\calH$ can shatter. Moreover,
since we are considering real-valued learning, these dimensions
are parameterized by a scaling factor $\gamma \in (0,1)$ which should be interpreted
as the distance that we can get to the optimal function.
We start with the standard notion of projection of a class to a set of unlabeled examples.

\begin{definition}
[Projection of $\calH$ to $S$] \label{def:projection-H-S}
Given $S = \lrset{x_1,\ldots,x_n} \in \calX^n$, the projection  of $\calH \subseteq \calY^\calX$ to $S$ is
$
\calH|_{S}=\{(h(x_1),\ldots,$ $h(x_n)):h \in \calH\}$.
\end{definition}

Furthermore, we say that a labeled sample 
$S \in \left(\calX \times [0,1]\right)^n$ is realizable with respect to $\calH$
if there exists $h \in \calH$ such that $ h(x_i) = y_i, \forall i \in [n].$

\subsection{$\gamma$-Fat Shattering Dimension}
Perhaps the most well-known dimension in the real-valued learning setting
is the fat shattering dimension that was introduced in \cite{kearns1994efficient}. Its definition is inspired 
by the pseudo-dimension \cite{pollard1990empirical} and it is, essentially, a scaled version of it.
\begin{definition}[$\gamma$-Fat Shattering Dimension \cite{kearns1994efficient}]
\label{def:fat-shattering dimension}
Let $\calH \subseteq[0,1]^{\calX}.$ We say that
a sample $S \in \calX^n$
is $\gamma$-fat shattered by $\calH$
 if there exist $s_1,\ldots,s_n \in [0,1]^n$ such that for all
$b \in \{0,1\}^n$ there exists $h_b \in \calH$ such that:
\begin{itemize}
    \item $h_b(x_i) \geq s_i + \gamma, \forall i \in [n]$ such that $b_i = 1.$
    \item $h_b(x_i) \leq s_i - \gamma, \forall i \in [n]$ such that $b_i = 0.$
\end{itemize}
The \textbf{$\gamma$-fat shattering dimension} $\mathbb{D}^{\mathrm{fat}}_\gamma$ is defined to be the maximum
size of a $\gamma$-fat shattered set.
\end{definition}

In the realizable setting, finiteness of the fat-shattering dimension (at all scales) is sufficient for learnability and it is equivalent to uniform convergence\footnote{Informally,
uniform convergence means that for all distributions $\calD$, with high probability
over the sample, the error of all $h \in \calH$ on the sample is close 
to their true population error.} \cite{bartlett1994fat}. 
However, the next example shows that it is \textbf{not} a necessary condition for learnability. This comes in contrast to the agnostic case for real-valued functions, in which
learnability and uniform convergence are equivalent for all $\calH.$
\begin{example}[Realizable Learnability $\nRightarrow$ Finite Fat-Shattering Dimension, see Section 6 in \cite{bartlett1994fat}]
\label{ex:learable-inf-fat}
Consider a class $\calH \subseteq [0,1]^\calX$ where each hypothesis is uniquely identifiable by a single example, i.e., for any $x \in \calX$ and any $f, g \in \calH$ we have that $f(x) \neq g(x)$, unless $f \equiv g$. Concretely,
for every $d \in \nats$, let $\left\{S_j\right\}_{0 \leq j \leq d-1}$ 
be a partition of $\calX$ and define $
    \calH_d = \left\{h_{b_0,\ldots,b_{d-1}}: b_i \in \{0,1\},  0 \leq i \leq d-1\right\}$\,,
where
\[
    h_{b_0,\ldots,b_{d-1}}(x) = \frac{3}{4} \sum_{j=0}^{d-1}\mathbbm{1}_{S_j}(x)b_j + 
    \frac{1}{8} \sum_{k=0}^{d-1}b_k2^{-k} \,.
\]
For any $\gamma \leq 1/4, \mathbb{D}^{\mathrm{fat}}_\gamma(\calH_d) = d$, since for a set of points $x_0,\ldots,x_{d-1}\in \calX$ such that each $x_j$ belongs to $S_j$, $0\leq j\leq d-1$, $\calH_d$ contains all possible patterns of values above $3/4$ and below $1/4$. Indeed, it is 
not hard to verify that for any $j \in \{0,\ldots,d-1\}$ if we consider any $h' \coloneqq h_{b_0,\ldots,b_{d-1}} \in \calH_d$ with  $b_j = 1$ we have $h'(x_j) \geq 3/4$. Similarly, if $b_j = 0$ then it holds that
$h'(x_j) \leq 1/4.$
Hence, $\mathbb{D}^{\mathrm{fat}}_\gamma\left(\cup_{d \in \nats} \calH_d\right) = \infty.$ Nevertheless, by just observing one example $(x,h^\star(x))$ any ERM learner finds
the exact labeling function $h^\star$.
\end{example}
We remark that this example already shows that the PAC learnability landscape of regression
is quite different from that of multiclass classification, where agnostic learning and realizable
learning are characterized by the same dimension \cite{brukhim2022characterization}.

{To summarize this subsection, the fat-shattering dimension is a natural way to quantify how well the function class can interpolate (with gap $\gamma$) some fixed function. Crucially, this interpolation contains only inequalities (see \Cref{def:fat-shattering dimension}) and hence (at least intuitively) cannot be tight for the realizable setting, where there exists some function that exactly labels the features. \Cref{ex:learable-inf-fat} gives a natural example of a class with infinite fat-shattering dimension that can, nevertheless, be learned with a single sample in the realizable setting.}

\subsection{$\gamma$-Natarajan Dimension}

The $\gamma$-Natarajan dimension was introduced by 
\cite{simon1997bounds} and is inspired by the Natarajan dimension
\cite{natarajan1989learning}, which has been used to derive
bounds in the multiclass classification setting. 
Before explaining the 
$\gamma$-Natarajan dimension, let us begin by reminding to the reader the standard Natarajan dimension. We say that a set 
$S = \{x_1,...,x_n\}$ of size $n$ is Natarajan-shattered by a concept class $\calH \subseteq \calY^\calX$ 
 if there exist two functions 
 $f,g : S \to \calY$
 so that $f(x_i) \neq g(x_i)$
 for all $i \in [n]$,
 and for all $b \in \{0,1\}^n$
 there exists $h \in \calH$
 such that $h(x_i) = f(x_i)$
 if $b_i = 1$
 and $h(x_i) = g(x_i)$
 if $b_i = 0$.
 Note that here we have equalities instead of inequalities (recall the fat-shattering case \Cref{def:fat-shattering dimension}).

From a geometric perspective (see \cite{brukhim2022characterization}), this means that the space $\calH$
 projected on the set $S$
 contains the set $\{f(x_1),g(x_1)\} \times ... \times \{f(x_n), g(x_n)\}$. This set is "isomorphic" to the Boolean hypercube of size $n$
 by mapping $f(x_i)$ 
 to 1 and $g(x_i)$
 to 0 for all $i \in [n]$. This means that the Natarajan dimension is essentially the size of the largest Boolean cube contained in $\calH$.

\cite{simon1997bounds} defines the scaled analogue of the above dimension as follows. 

\begin{definition}
[$\gamma$-Natarajan Dimension \cite{simon1997bounds}]
\label{def:gamma-natarajan}
Let $\calH \subseteq[0,1]^{\calX}.$ We say that a set $S \in \calX^n$ is $\gamma$-Natarajan-shattered by $\calH$ if there exist
$f, g: [n] \rightarrow [0,1]$ such that for every $i \in [n]$
we have $\l(f(i), g(i)) \geq 2\gamma,$ and
\begin{align*}
    \calH|_S \supseteq \{f(1),g(1)\}\times\ldots\times\{f(n),g(n)\}\,.
\end{align*}
The \textbf{$\gamma$-Natarajan dimension} $\mathbb{D}^{\mathrm{Nat}}_\gamma$ is defined to be the maximum
size of a $\gamma$-Natarajan-shattered set.
\end{definition}

Intuitively, one should think of the $\gamma$-Natarajan dimension
as indicating the size of the largest Boolean cube that is contained
in $\calH.$ Essentially, every coordinate $i \in [n]$ gets its own translation
of the $0,1$ labels of the Boolean cube, with the requirement that these 
two labels are at least $2\gamma$ far from each other. \cite{simon1997bounds} showed the following result, which states that finiteness of the Natarajan dimension at all scales is a necessary condition for realizable PAC regression:
\begin{inftheorem}
[Theorem 3.1 in \cite{simon1997bounds}]
$\calH \subseteq [0,1]^\calX$ is PAC learnable in the realizable regression setting  
only if $\mathbb D^{\mathrm{Nat}}_\gamma(\calH) < \infty$ for all $\gamma \in (0,1)$.    
\end{inftheorem}

Concluding these two subsections, 
we have explained the main known general results
about realizable offline regression: (i) finiteness of fat-shattering is sufficient but not necessary for PAC learning and (ii) finiteness of scaled Natarajan is necessary for PAC learning. 

\subsection{$\gamma$-Graph Dimension}
We are now ready to introduce the $\gamma$-graph dimension, which is
a relaxation of the definition of the $\gamma$-Natarajan dimension. To the best
of our knowledge, it has not appeared in the literature before. Its definition is
inspired by its non-scaled analogue in multiclass classification 
\cite{natarajan1989learning,daniely2014optimal}.

\begin{definition}
[$\gamma$-Graph Dimension]
\label{def:gamma graph dimension}
Let $\calH \subseteq[0,1]^{\calX}, \ell:\reals^2 \rightarrow [0,1].$
We say that a sample 
$S \in \calX^n$ is 
$\gamma$-graph shattered by $\calH$ if there exists
$f:[n]:\rightarrow[0,1]$
such that for all $b \in \{0,1\}^n$ there exists
$h_b \in \calH$ such that:
\begin{itemize}
    \item $h_b(x_i) = f(i), \forall i \in [n]$ such that $b_i = 0.$
    \item $\l(h_b(x_i), f(i)) > \gamma, \forall i \in [n]$ such that $b_i = 1.$
\end{itemize}
The \textbf{$\gamma$-graph dimension} $\mathbb{D}^{\mathrm{G}}_\gamma$ is defined
to be the maximum size of a $\gamma$-graph shattered set.
\end{definition}

We mention that the asymmetry in the above definition is crucial. In particular, replacing the equality $h_b(x_i) = f(i)$ with $\l(h_b(x_i), f(i)) \leq \gamma$ fails to capture the properties of the graph dimension. Intuitively, the equality in the definition reflects the assumption of realizability, i.e., the guarantee that there exists a hypothesis $h^\star$ that exactly fits the labels.
Before stating our main result, we can collect some useful observations about this new combinatorial measure. In particular, we provide examples inspired by \cite{daniely2014optimal,daniely2015multiclass}
which show (i) that there exist gaps between different ERM learners (see \Cref{ex:large-gaps-erm-learners}) and (ii)
that any learning algorithm with a close to optimal
sample complexity must be improper (see \Cref{example:no optimal pac learner can be proper}). 

Our first main result relates the scaled graph dimension with the learnability of any class $\calH \subseteq [0,1]^\calX$ using a (worst case) ERM learner.
This result is the scaled analogue of known multiclass results \cite{daniely2014optimal,daniely2015multiclass}
but its proof for the upper bound follows a different path. For the formal statement of our result and its full proof, we refer the reader
to \Cref{sec:graph dimension result}. 

\begin{inftheorem}
[Informal, see \Cref{thm: graph dimension result}]
Any $\calH \subseteq [0,1]^\calX$ is PAC learnable in the realizable regression setting by a worst-case ERM learner
if and only if $\mathbb D^\mathrm{G}_\gamma(\calH) < \infty$ for all $\gamma \in (0,1)$.   
\end{inftheorem}

\textbf{Proof Sketch.} The proof of the lower bound
follows in a similar way as the lower bound regarding
learnability of binary hypothesis classes that have infinite VC 
dimension \cite{vapnik:71, blumer1989learnability}. If
$\calH$ has infinite $\gamma$-graph dimension for some $\gamma \in (0,1)$, then
for any $n \in \nats$ we can find a sequence of $n$ points $x_1,\ldots,x_n$
that are $\gamma$-graph
shattered by $\calH.$ Then, we can define a distribution $\calD$ that puts most
of its mass on $x_1$, so if the learner observes $n$ samples then with high probability
it will not observe at least half of the shattered points. By the definition of the
$\gamma$-graph dimension this shows that, with high probability, there exists
at least one ERM learner that is $\gamma$-far on at least half of these points.
The result follows by taking $n \rightarrow \infty.$

The most
technically challenging part of our result is the upper bound. There are three main steps in our approach.
First, we argue that by introducing a ``ghost'' sample of size $n$ 
on top of the training sample, which is also of size $n$,
we can bound the true error of any ERM learner on the distribution by twice its error
on the ghost sample. This requires a slightly more subtle treatment compared
to the argument of \cite{blumer1989learnability} for binary classification.
The second step is to use a ``random swaps'' type of argument in order
to bound the performance of the ERM learner on the ghost sample as follows:
we ``merge'' these two samples by considering any realizable sequence $S$ of $2n$ points,
we randomly
swap elements whose indices differ by $n$, and then we consider an ERM learner
who gets trained on the first $n$ points. We can show that if such a learner
does not make many mistakes on the unseen part of the sequence, in expectation
over the random swaps, then it has a small
error on the true distribution. The main advantage of this argument is that it allows us
to bound the number of mistakes of such a learner on the unseen part of the sequence
without using any information about $\calD.$ The last step of the proof is 
where we diverge from the argument of \cite{blumer1989learnability}. In order
to show that this expectation is small, we first map $\calH$ to a \emph{partial
concept class}\footnote{For an introduction to partial concept classes and 
their disambiguations we refer the reader to \Cref{sec:partial concept}.} 
$\overline{\calH}$ \cite{alon2022theory}, then we project $\overline{\calH}$ to the 
sequence $S$, and finally we map $\overline{\calH}$ back to a total concept class
by considering a \emph{disambiguation} of it. Through these steps, we can show
that there will be at most $(2n)^{O(\mathbb D^\mathrm{G}_\gamma(\calH) \log(2n))}$
many different functions in this ``projected'' class. Then, we can argue that, 
with high probability, any ERM learner who sees the first half of the sequence
makes, in expectation over the random swaps, a small number of mistakes on the 
second half of the sequence. Finally, we can take a union bound over all the 
 $(2n)^{O(\mathbb D^\mathrm{G}_\gamma(\calH) \log(2n))}$ possible such learners to conclude
 the proof.

\subsection{$\gamma$-One-Inclusion Graph Dimension}

In this section, we provide a minimax optimal learner for realizable PAC regression.
We first review a fundamental construction which is a crucial ingredient in the design of our learner, namely the one-inclusion (hyper)graph (OIG) algorithm $\mathbb A^\mathrm{OIG}$ for a class $\calH \subseteq \calY^\calX$ \cite{haussler1994predicting,rubinstein2009shifting,daniely2014optimal,brukhim2022characterization}. This algorithm gets as input a training set $(x_1,y_1),...,(x_n,y_n)$ realizable by $\calH$ and an additional example $x$. The goal is to predict the label of $x$. 
Let $S = \{x_1,...,x_n,x\}$.
The idea is to construct the one-inclusion graph $G_{\calH|_S}^\mathrm{OIG}$ induced by the pair $(S, \calH)$. The node set $V$ of this graph corresponds to the set $\calH|_S$ (projection of $\calH$ to $S$) and, so, $V \subseteq \calY^{[n+1]}$. For the binary classification case, two vertices are connected with an edge if they differ in exactly one element $x$ of the $n+1$ points in $S$. For the case where $\calY$ is discrete and $|\calY| > 2$, the hyperedge is generalized accordingly. 

\begin{definition}
[One-Inclusion Hypergraph \cite{haussler1994predicting,rubinstein2009shifting,brukhim2022characterization}]
\label{def:oig}
Consider the set $[n]$ and a hypothesis class $\calH \subseteq \calY^{[n]}$.
We define a graph $G^\mathrm{OIG}_{\calH} = (V,E)$ such that
$V = \calH$.
Consider a direction $i \in [n]$ and a mapping $f : [n]\setminus \{i\} \to \calY$.
We introduce the hyperedge 
$e_{i,f} = \{ h \in V : h(j) = f(j), ~ \forall j \in [n] \setminus \{i\} \}$.
We define the edge set of $G^\mathrm{OIG}_\calH$ to be the collection
\[
E = \{ e_{i,f} : i \in [n], f : [n] \setminus \{i\} \to \calY, e_{i,f} \neq \emptyset \}\,.
\]
\end{definition}

In the regression setting, having created the one-inclusion graph with $\calY = [0,1]$, the goal is to \emph{orient} the edges; the crucial property is that ``good''
orientations of this graph yield learning algorithms with low error. An orientation is good if the maximum out-degree of the graph is small (cf. \Cref{def:scaled orientation}). Informally, if the maximum out-degree of any node is $M$, then we can create a predictor
whose expected error rate is at most $M/(n+1)$.
Note that in the above discussion we have not addressed the issue that we deal with regression tasks and not classification and, hence, some notion of \emph{scale} is
required in the definition of the out-degree.

Intuitively, the set $S$ that induces the vertices $\calH|_S$ of the OIG consists of the
features of the training examples $\{x_1,...,x_n\}$ and the test point $x$. 
Hence, each edge of the OIG should be thought of as the set of all potential labels
of the test point that are realizable by $\calH$,
so it corresponds to the set of all possible meaningful predictions for $x$.
An orientation $\sigma$ maps every edge $e$ to a vertex $v \in e$ and, hence,
it is equivalent to the prediction of a learning algorithm. 
We can now formally define the notion of an orientation and the scaled out-degree.
\begin{definition}
[Orientation and Scaled Out-Degree]\label{def:scaled orientation}
Let $\gamma \in [0,1], n \in \nats, \calH \subseteq [0,1]^{[n]}$. 
An orientation of the one-inclusion graph $G^\mathrm{OIG}_{ \calH} = (V,E)$ is
a mapping $\sigma : E \to V$ so that $\sigma(e) \in e$ for any $e \in E$. Let $\sigma_i(e) \in [0,1]$ denote the $i$-th entry of the orientation.

For a vertex $v \in V$, corresponding to some hypothesis $h \in \calH$ (see \Cref{def:oig}), let $v_i$ be the $i$-th entry of $v$, which corresponds to $h(i)$.
The (scaled) out-degree of a vertex $v$ under $\sigma$ is $\mathrm{outdeg}(v; \sigma, \gamma) = |\{i \in [n] : \ell(\sigma_i(e_{i,v}), v_i) > \gamma\}|$. The maximum (scaled) out-degree of $\sigma$ is
$\mathrm{outdeg}(\sigma, \gamma) = \max_{v \in V} \mathrm{outdeg}(v; \sigma, \gamma)$.
\end{definition}

Finally, we introduce the following novel dimension in the context of
real-valued regression. An analogous dimension was proposed
in the context of learning under adversarial robustness \cite{montasser2022adversarially}.

\begin{definition}[$\gamma$-OIG Dimension]
\label{def:oig-dimension}
Consider a class $\calH \subseteq [0,1]^\calX$ and let $\gamma \in [0,1]$. We define the $\gamma$-one-inclusion graph dimension $\mathbb D_\gamma^\mathrm{OIG}$ of $\calH$ as follows:
\begin{align*}
    \mathbb D_\gamma^\mathrm{OIG}(\calH) = \sup \{n \in \nats: \exists S \in \calX^n \text{ such that } \exists \text{ finite subgraph } G = (V,E) \text{ of } G^\mathrm{OIG}_{\calH|_S} = (V_n, E_n) \\\text{ such that }
    \forall \text{ orientations } \sigma, \exists v \in V, 
    \text{ where } \mathrm{outdeg}(v; \sigma, \gamma) > n/3 
    \}\,.
\end{align*}
We define the dimension to be infinite if the supremum is not attained by a finite $n$.
\end{definition}
In words, it is the largest $n \in \nats$ (potentially $\infty$) such
that there exists an (unlabeled) sequence $S$ of length $n$ with the property
that no matter how one orients some finite subgraph of the
one-inclusion graph, there is always some 
vertex for which at least $1/3$ of its coordinates are $\gamma$-different
from the labels of the edges that are attached to this vertex.
We remark that the hypothesis class in
\Cref{ex:learable-inf-fat} has $\gamma$-OIG dimension equal to $O(1)$. 
We also mention that the above dimension satisfies the ``finite character'' property 
and the remaining criteria that dimensions should satisfy according to 
\cite{ben2019learnability} (see
\Cref{appendix:finite character}).
As our main result in this section, we show that any class
$\calH$ is learnable if and only if this dimension
is finite and we design an (almost) optimal learner for it.
\begin{inftheorem}
[Informal, see \Cref{thm: OIG dimension result}]
Any $\calH \subseteq [0,1]^\calX$ is PAC learnable in the realizable regression setting  
if and  only if $\mathbb D^{\mathrm{OIG}}_\gamma(\calH) < \infty$ for all $\gamma \in (0,1)$.    
\end{inftheorem}
The formal statement and its full proof are postponed to \Cref{sec:gamma-OIG-details}.\\
\textbf{Proof Sketch.} We start with the lower
bound. As we explained before, 
orientations of the one-inclusion graph are, in some sense, equivalent to
learning algorithms. Therefore, if this dimension is infinite for 
some $\gamma > 0$, then for any $n \in \nats$, there are no orientations with small maximum out-degree. Thus, for \emph{any} learner we can construct \emph{some} distribution $\calD$ under which it makes a prediction that is $\gamma$-far from the
correct one with constant probability, which means that $\calH$ is not PAC learnable.\\
Let us now describe the proof of the converse direction, which consists
of several steps. First, notice
that the finiteness of this dimension provides good orientations for
\emph{finite} subgraphs of the one-inclusion graph. Using
the compactness theorem of first-order logic, we can
extend them to a good orientation of the whole, potentially infinite,
one-inclusion graph. This step
gives us a weak learner with the following property: for any given $\gamma$
there is some $n_0 \in \nats$ so that, with high probability
over the training set, when it is given $n_0$ examples as its training set it
makes mistakes that are of order at least $\gamma$ on a randomly drawn 
point from $\calD$ with probability at most $1/3.$ The next step is 
to boost the performance of this weak learner. This is done using the
``median-boosting'' technique \cite{kegl2003robust} (cf. \Cref{alg:med-boost})
which guarantees that after a small number of iterations we can create an ensemble of 
weak learners such that, a prediction rule according to their (weighted) median
will not make
any $\gamma$-mistakes on the training set.
However, this is not sufficient to prove that the ensemble of these
learners has small loss on the distribution $\calD$. This is done by establishing
\emph{sample compression} schemes that have small length. Essentially, such
schemes consist of a \emph{compression} function $\kappa$ which takes as input
a training set and outputs a subset of it, and a reconstruction function $\rho$ which
takes as input the output of $\kappa$ and returns a predictor whose error on every point of the training set $S$ is at most $\gamma.$ Extending the arguments
of \cite{littlestone1986relating} from the binary setting to the real-valued
setting we show that the existence of such a scheme whose compression function
returns a set of ``small'' cardinality implies generalization properties of the underlying
learning rule. Finally, we show that our weak learner combined with the boosting
procedure admit such a sample compression scheme.

Before proceeding to the next section, one could naturally ask whether there is a natural property of the concept class that  implies finiteness of the scaled OIG dimension. The work of \cite{mendelson2002improving} provides a sufficient and natural condition that implies finiteness of our complexity measure. In particular, Mendelson shows that classes that contain functions with bounded oscillation (as defined in \cite{mendelson2002improving}) have finite fat-shattering dimension. This implies that the class is learnable in the agnostic setting and hence is also learnable in the realizable setting. As a result, the OIG-based dimension is also finite. So, bounded oscillations are a general property that guarantees that the finiteness of OIG-based dimension and fat-shattering dimension coincide. We also mention that deriving bounds for the OIG-dimension for interesting families of functions is an important yet non-trivial question.

\paragraph{
A finite fat-shattering dimension implies a finite OIG dimension.} Let $\calF\subseteq [0,1]^\calX$ be a function class with finite $\gamma$-fat shattering dimension for any $\gamma>0$. We show that $\mathbb{D}^{\mathrm{OIG}}_\gamma(\calF)$ is upper bounded (up to constants and log factors) by $\mathbb{D}^{\mathrm{fat}}_{c\gamma}(\calF)$, for some $c>0$, where the OIG dimension is defined with respect to the $\ell_1$ loss.  Note that the opposite direction does not hold. \Cref{ex:learable-inf-fat} exhibits a function class with an infinite fat-shattering dimension that can be learned with a single example, and as a result, the OIG dimension has to be finite.
On the one hand, we have an upper bound on the sample complexity of $O\left(\frac{1}{\epsilon}\left(
\mathbb{D}^{\mathrm{fat}}_{\tilde{c}\eps}(\calF)\log^2\frac{1}{\epsilon}+\log\frac{1}{\delta}
\right)\right)$, for any $\eps,\delta\in (0,1)$. See sections 19.6 and 20.4 about the restricted model in \cite{anthony1999neural}. On the other hand, we prove in \Cref{lem:lower-bound-oig} a lower bound on the sample complexity of $\Omega\left(\frac{\mathbb{D}^{\mathrm{OIG}}_{2\eps}(\calF)}{\epsilon}\right)$, for any $\eps\in (0,1)$, and so $\mathbb{D}^{\mathrm{OIG}}_{\gamma}(\calF)$ is upper bounded by $\mathbb{D}^{\mathrm{fat}}_{c\gamma}(\calF)$ up to constants and log factors.

\subsection{$\gamma$-DS Dimension}\label{sec:gamma-DS}
So far we have identified a dimension (cf.~\Cref{def:oig-dimension})
that characterizes the PAC learnability
of realizable regression. However, it might not be easy
to calculate it in some settings. Our goal in this section 
is to introduce a relaxation of this definition which we conjecture
that also characterizes learnability in this context.
This new dimension is inspired by the DS dimension, a combinatorial dimension defined by Daniely and Shalev-Shwartz in \cite{daniely2014optimal}. In a recent breakthrough result,
\cite{brukhim2022characterization} showed that the DS dimension characterizes multiclass learnability (with a possibly unbounded number of labels and the 0-1 loss).
We introduce a scaled version of the \textrm{DS} dimension. To this end, we first define the notion of a scaled pseudo-cube. 
\begin{definition}
[Scaled Pseudo-Cube]
Let $\gamma \in [0,1]$.
A class $\calH \subseteq [0,1]^d$
is called a \textbf{$\gamma$-pseudo-cube} of dimension $d$
if
it is non-empty, finite
and, for any $f \in \calH$ and direction $i \in [d]$, 
the hyper-edge $e_{i,f} = \{g \in \calH : g(j) = f(j)~~\forall j \in [d], i \neq j\}$
satisfies $|e_{i,f}| > 1$ and $\l(g_1(i), g_2(i)) > \gamma$ for any 
$g_1,g_2 \in e_{i,f}, g_1 \neq g_2$.
\end{definition}

Pseudo-cubes can be seen as a relaxation of the notion of a Boolean cube (which should be intuitively related with the Natarajan dimension) and were a crucial tool in the proof of \cite{brukhim2022characterization}.
In our setting, scaled pseudo-cubes will give us the following combinatorial dimension.

\begin{definition}
[$\gamma$-DS Dimension]
Let $\calH \subseteq [0,1]^\calX$.
A set $S \in \calX^n$ is $\gamma$-$\mathrm{DS}$ shattered if $\calH|_S$ contains an $n$-dimensional $\gamma$-pseudo-cube. The \textbf{$\gamma$-$\mathrm{DS}$ dimension}
$\mathbb{D}^{\mathrm{DS}}_\gamma$
is the maximum size of a $\gamma$-$\mathrm{DS}$-shattered set.
\end{definition}

Extending the ideas from the multiclass classification setting, we show that the scaled-DS dimension is necessary for realizable PAC regression.
 Simon (Section 6, \cite{simon1997bounds}) left as an open direction
to ``obtain supplementary lower bounds [for realizable regression] (perhaps completely unrelated to the combinatorial or Natarajan dimension)''.
Our next result is a novel contribution to this direction.
\begin{inftheorem}[Informal, see \Cref{theorem:necessity-of-gamma-ds}]
Any $\calH \subseteq [0,1]^\calX$ is PAC learnable in the realizable regression setting  
only if $\mathbb D^{\mathrm{DS}}_\gamma(\calH) < \infty$ for all $\gamma \in (0,1)$.    
\end{inftheorem}

The  proof is postponed to \Cref{sec:gamma DS details}. We believe that finiteness of $\mathbb D^{\mathrm{DS}}_\gamma(\calH)$ is also 
a \emph{sufficient} condition for realizable regression. However,
the approach of \cite{brukhim2022characterization} that establishes a similar result
in the setting of multiclass classification does not extend trivially
to the regression setting. 
\begin{conjecture}[Finite $\gamma$-DS is Sufficient]
    Let $\calH \subseteq [0,1]^\calX$. If $\mathbb D^{\mathrm{DS}}_\gamma(\calH) < \infty$ for all $\gamma \in (0,1)$, then $\calH$ is PAC learnable in the realizable regression setting.
\end{conjecture}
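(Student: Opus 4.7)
The plan is to adapt the framework of Brukhim, Carmon, Dinur, Moran, and Yehudayoff \cite{brukhim2022characterization}, who characterized multiclass PAC learnability via the DS dimension, to the scaled regression setting. In view of our characterization via the $\gamma$-OIG dimension (\Cref{thm: OIG dimension result}), the natural strategy is to prove the structural reduction: if $\mathbb D^{\mathrm{DS}}_\gamma(\calH) < \infty$ for every $\gamma \in (0,1)$, then $\mathbb D^{\mathrm{OIG}}_{\gamma'}(\calH) < \infty$ for every $\gamma' \in (0,1)$. Concretely, assuming $\mathbb D^{\mathrm{DS}}_\gamma(\calH) = d$, one would aim to show that for every unlabeled sequence $S$ of length $n$ larger than some function of $d$, every finite subgraph of $G^{\mathrm{OIG}}_{\calH|_S}$ admits an orientation whose maximum $\gamma$-scaled out-degree is at most $n/3$ (in fact, a bound of $O(d)$ independent of $n$ would suffice). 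Combining such a reduction with \Cref{thm: OIG dimension result} would immediately yield PAC learnability.

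The technical engine would be a scaled analog of the BCDMY pseudo-cube peeling argument. Their proof establishes the contrapositive: if no low out-degree orientation of a finite subgraph exists, then one can greedily extract a pseudo-cube whose dimension exceeds the DS dimension. In our setting, the idea is to start from a vertex $v$ for which every orientation forces more than $n/3$ coordinates to be $\gamma$-bad, and to inductively build hyperedges whose members are pairwise $\gamma$-separated in the active coordinate, merging them into a $\gamma$-pseudo-cube of dimension exceeding $d$. Two auxiliary reductions are needed to deploy this engine. First, one restricts to \emph{finite} subclasses via the finite character property (\Cref{appendix:finite character}) and then extends orientations from finite subgraphs to the full one-inclusion hypergraph by a compactness argument, exactly as in the proof of \Cref{thm: OIG dimension result}. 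Second, one discretizes the label space at scale $\gamma/C$ for a small constant $C$, which replaces continuous hyperedges by finite ones while preserving the required $\gamma$-separation between distinct values.

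The main obstacle is controlling the scale loss in the reduction. A naive pseudo-cube extraction from a vertex of large $\gamma$-scaled out-degree only guarantees pairwise separation $\gamma' < \gamma$ in the resulting cube, because committing to a neighbor along one coordinate may shrink the separation available along subsequent coordinates, and the peeling step also consumes a constant fraction of the scale. Ensuring only a constant-factor (or, ideally, no) scale loss---so that finite $\gamma$-DS at \emph{all} scales implies finite $\gamma'$-OIG at \emph{all} scales---is the crux of the conjecture. Closing this gap likely requires either a multi-scale analysis that exploits finiteness of $\mathbb D^{\mathrm{DS}}_\gamma(\calH)$ simultaneously across a dyadic sequence of scales, or a more refined extraction procedure that preserves separation in an amortized sense across the inductive coordinates.

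An alternative route bypasses OIG entirely: convert the pseudo-cube bound directly into a weak $(\gamma, 1/3)$-list-learner via a Haussler-packing style argument, collapse the list to a point prediction by a median selector, boost it via the median-boosting scheme of \Cref{alg:med-boost}, and conclude via the sample compression machinery used in our proof of \Cref{thm: OIG dimension result}. Either path reduces to the same bottleneck---a tight scaled pseudo-cube extraction lemma that recovers, from the absence of low-degree orientations, a pseudo-cube at the \emph{same} scale $\gamma$ rather than at a degraded scale.
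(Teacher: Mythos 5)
What you have written is a research plan, not a proof, and the statement you are addressing is in fact left as an open conjecture in the paper: the authors explicitly state that the approach of \cite{brukhim2022characterization} ``does not extend trivially to the regression setting'' and offer no argument for sufficiency of finite $\mathbb D^{\mathrm{DS}}_\gamma$. Your proposal does not close this gap. The entire argument hinges on a reduction of the form ``finite $\gamma$-DS at all scales implies finite $\gamma'$-OIG at all scales,'' whose engine would be a scaled pseudo-cube extraction lemma: from a finite subgraph of $G^{\mathrm{OIG}}_{\calH|_S}$ in which every orientation has some vertex with $\gamma$-scaled out-degree exceeding $n/3$, extract a $\gamma$-pseudo-cube (or one at a scale degraded by only a constant factor) of dimension exceeding $\mathbb D^{\mathrm{DS}}_\gamma(\calH)$. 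You state this lemma as the ``crux'' and explicitly acknowledge that you do not know how to prove it without an uncontrolled loss of scale; the alternative route via list learning, median selection, boosting, and compression is said by you to ``reduce to the same bottleneck.'' A proof whose central lemma is announced but not established is a gap, not a proof.

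Beyond the admitted bottleneck, the reduction itself is on shakier ground than the proposal suggests. Even in the unscaled multiclass setting, \cite{brukhim2022characterization} do not show that finite DS dimension yields orientations of the one-inclusion hypergraph with maximum out-degree $O(d)$ (or $n/3$); their sufficiency proof goes through \emph{list} one-inclusion learners, a reduction from list learning to standard learning, and sample compression, with polynomial overhead in $d$. So the step ``no low out-degree orientation $\Rightarrow$ large pseudo-cube'' that you attribute to a ``BCDMY peeling argument'' is not a known statement you can import; it would be a new combinatorial result even before one worries about scale. The discretization step also needs care: quantizing labels at scale $\gamma/C$ can merge values that are exactly $\gamma$-separated or destroy realizability of edges of the hypergraph, so one must verify that the quantized class has finite $\gamma'$-DS dimension at a comparable scale, which is again a scale-loss issue of the same kind you flag. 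In short, your outline correctly identifies the plausible attack routes and the precise obstruction, but it proves nothing that the paper does not already discuss, and the conjecture remains open.
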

\section{Online Learnability for Realizable Regression}
In this section we will provide our main result regarding realizable online regression.
Littlestone trees have been the workhorse of online classification problems \cite{littlestone1988learning}. First, we provide
a definition for scaled Littlestone trees.

\begin{definition}
[Scaled Littlestone Tree]\label{def:scaled littlestone tree}
A scaled Littlestone tree of depth $d \leq \infty$ is a complete binary tree of depth $d$ defined as a collection of nodes
\[
\bigcup_{0 \leq \l < d} \left\{ x_u \in \calX : u \in \{0,1\}^\l \right\} = \{ x_{\emptyset} \} \cup \{x_{0}, x_1\} \cup \{x_{00}, x_{01}, x_{10}, x_{11} \} \cup ... 
\]
and real-valued gaps
\[
\bigcup_{0 \leq \l < d} \left\{ \gamma_u \in [0,1] : u \in \{0,1\}^\l \right\} = \{ \gamma_{\emptyset} \} \cup \{ \gamma_{0}, \gamma_1\} \cup \{\gamma_{00}, \gamma_{01}, \gamma_{10}, \gamma_{11} \} \cup ... 
\]
such that for every path $\vec y \in \{0,1\}^d$ and finite $n < d$, there exists $h \in \calH$ so that $h(x_{\vec y_{\leq \l}}) = s_{\vec y_{\leq \l+1}}$ for $0 \leq \l \leq n$, where $s_{\vec y_{\leq \l+1}}$ is the label of the edge connecting the nodes $x_{\vec y_{\leq \l}}$ and $x_{\vec y_{ \leq \l+1}}$ and 
$
    \l\left(s_{\vec y \leq \ell,0}, s_{\vec y \leq \ell,1 }\right) = \gamma_{\vec y \leq \ell} \,.$
\end{definition}

In words, scaled Littlestone trees are complete binary trees
whose nodes are points of $\calX$ and the two edges attached 
to every node are its potential classifications. An important
quantity is the \emph{gap} between the two values of the edges.
We define the online dimension $\mathbb D^\mathrm{onl}(\calH)$ as follows.

\begin{definition}
[Online Dimension]
\label{def:scaled-littlestone}
    Let $\calH \subseteq [0,1]^\calX$. 
    Let $\calT_d$ be the space of all scaled Littlestone trees of depth $d$ (cf. \Cref{def:scaled littlestone tree}) and $\calT = \bigcup_{d = 0}^\infty \calT_d$. For any scaled tree 
$T = \bigcup_{0 \leq \ell \leq \mathrm{dep}(T)}\left\{(x_u, \gamma_u) \in (\calX, [0,1]): u \in \{0,1\}^\ell \right\} \,,$
    let $\calP(T) = \{ y = (y_0,...,y_{\mathrm{dep}(T)}) : y_i \in \{0,1\}^i) \}$ be the set of all paths in $T$. The dimension $\onlinedim(\calH)$ is 
    \begin{equation}
        \onlinedim(\calH) = \sup_{T \in \calT} \inf_{y \in \calP(T)} \sum_{i = 0}^{\mathrm{dep}(T)} \gamma_{y_i}\,.
    \end{equation}
\end{definition}

In words, this dimension considers the tree that has the maximum sum of label gaps over its path with the smallest such sum, among all the trees of arbitrary depth. Note that we are taking the supremum (infimum) in case there is no tree (path) that achieves the optimal value.
Providing a characterization and a learner with optimal cumulative loss $\mathfrak{C}_T$ for realizable online regression was left as an open problem by
\cite{daskalakis2022fast}. We resolve this question (up to a factor of 2) by showing the following result.
\begin{inftheorem}
[Informal, see \Cref{theorem:online}]
For any $\calH \subseteq [0,1]^\calX$ and $\eps > 0$, there exists a deterministic learner with $\mathfrak{C}_\infty \leq \mathbb{D}^\mathrm{onl}(\calH) + \eps$ and any, potentially randomized, learner satisfies $\mathfrak{C}_\infty \geq \mathbb{D}^\mathrm{onl}(\calH)/2 - \eps$.
\end{inftheorem}

The formal statement and the full proof of our results
can be found in \Cref{sec:online-regression-formal}.
First, we underline that this result holds when there is no bound
on the number of rounds that the learner and the adversary interact.
This follows the same spirit
as the results in the realizable binary and multiclass classification settings \cite{littlestone1988learning, daniely2015multiclass}. The dimension that characterizes
the minimax optimal cumulative loss for any given $T$ follows by taking the supremum in
\Cref{def:scaled-littlestone} over trees whose depth is at most $T$ and the proof follows
in an identical way (note that even with finite fixed depth $T$ the supremum is over infinitely many trees). Let us now give
a sketch of our proofs.\\
\textbf{Proof Sketch.} The lower bound follows using similar arguments as in the 
classification setting: for any $\varepsilon > 0$, the adversary can create a scaled
Littlestone tree $T$ that achieves the $\sup\inf$ bound, up to an additive $\varepsilon.$ In the
first round, the adversary presents the root of the tree $x_\emptyset$. 
Then, no matter what the learner picks
the adversary can force error at least $\gamma_\emptyset/2.$ The game is repeated on the new subtree.
The proof of the upper bound presents the main technical challenge to establish \Cref{theorem:online}. The strategy of the learner can be found in \Cref{alg:scaled-soa}.
The key insight in the proof is that, due to realizability,
we can show that in every round $t$ there is some 
$\wh y_t \in [0,1]$ the learner
can predict so that, no matter what the adversary picks as the true label $y^\star_t$, the online
dimension of the class under the extra restriction that $h(x_t) = y^\star_t$, i.e, the updated \emph{version space} $V = \left\{h \in \calH: h(x_\tau) = y^\star_\tau, 1 \leq \tau \leq t \right\}$, will decrease by $\l(y^\star_t, \wh y_t).$ Thus, under this strategy of the learner, 
the adversary can only distribute up to $\mathbb{D}^\mathrm{onl}(\calH)$ across all the rounds 
of the interaction.
We explain how the learner can find such a $\wh y_t$ and we handle technical issues that arise
due to the fact that we are dealing with $\sup\inf$ instead of $\max\min$ in the
formal proof (cf. \Cref{sec:online-regression-formal}).
\section{Conclusion}
 In this work, we developed optimal learners for realizable regression in PAC learning and online learning. Moreover, we identified combinatorial dimensions that characterize
learnability in these settings. We hope that our work can lead to simplified characterizations for these problems.
We believe that the main limitation of our work is that the OIG-based dimension we propose is more complicated than the dimensions that have been proposed in the past, like the fat-shattering dimension (which, as we explain, does not characterize learnability in the realizable regression setting). Nevertheless, despite its complexity, this is the first dimension that characterizes learnability in the realizable regression setting. More to that, our work leaves as an important next step to prove (or disprove) the conjecture that the (combinatorial and simpler) 
$\gamma$-DS dimension is qualitatively equivalent to the 
$\gamma$-OIG dimension. Another future direction, that is not directly related to this conjecture, is to better understand the gap between the fat-shattering dimension and the OIG-based dimension. 

\section*{Acknowledgements}
Amin Karbasi acknowledges funding in direct support of this work from NSF (IIS-1845032), ONR (N00014- 19-1-2406), and the AI Institute for Learning-Enabled Optimization at Scale (TILOS).  
    Grigoris Velegkas is supported by TILOS, the Onassis Foundation, and the Bodossaki Foundation.
This work was done in part while some of the authors were visiting Archimedes AI Research Center.

\bibliography{bib}

\newpage
\appendix
\section{Notation and Definitions}
We first overview standard definitions about sample complexity in PAC learning.

\paragraph{PAC Sample Complexity.} 
The realizable PAC sample complexity $\calM(\calH; \varepsilon, \delta)$ of $\calH$ is defined as
\begin{equation}
    \calM(\calH; \eps, \delta) = \inf_{A \in \calA} \calM_{A}(\calH; \eps, \delta)\,,     
\end{equation}
where the infimum is over all possible learning algorithms
and $\calM_{A}(\calH; \eps, \delta)$ is the minimal integer such that for any $m \geq \calM_{A}(\calH; \eps, \delta)$, every distribution $\calD_{\calX}$ on $\calX$, and, true target $h^\star \in \calH$, the expected loss $\E_{x \sim \calD_{\calX}}[\l(A(T)(x), h^\star(x))]$ of $A$ is at most $\eps$ with probability $1-\delta$ over the training set $T = \{(x, h^\star(x)) : x \in S\}$, $S \sim \calD_{\calX}^m$.

\paragraph{PAC Cut-Off Sample Complexity.} We slightly overload the notation of the sample complexity and we define
\begin{equation}
    \calM(\calH; \eps, \delta,\gamma) = \inf_{A \in \calA} \calM_{A}(\calH; \eps, \delta,\gamma)\,, 
\end{equation}
where the infimum is over all possible learning algorithms
and $\calM_{A}(\calH; \eps, \delta, \gamma)$ is the minimal integer such that for any $m \geq \calM_{A}(\calH; \eps, \delta,\gamma)$, every distribution $\calD_{\calX}$ on $\calX$, and, true target $h^\star \in \calH$, the expected cut-off loss $\Pr_{x \sim \calD_{\calX}}[\l(A(T)(x), h^\star(x)) > \gamma]$ of $A$ is at most $\eps$ with probability $1-\delta$ over the training set $T = \{(x, h^\star(x)) : x \in S\}$, $S \sim \calD_{\calX}^m$.

\begin{lemma}
[Equivalence Between Sample Complexities]\label{lem:equivalence sample complexity defs}
For every $\varepsilon, \delta \in (0,1)^2$ and every $\calH \subseteq [0,1]^\calX$,
where $\calX$ is the input domain, it holds that
\[
   \calM(\calH; \sqrt{\eps}, \delta, \sqrt{\eps})  \leq \calM(\calH; \eps, \delta) \leq  \calM(\calH; \eps/2, \delta,\eps/2)
\]
\end{lemma}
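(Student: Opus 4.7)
The plan is to prove both inequalities by direct conversions between a learner that controls expected loss and one that controls the probability of exceeding a cut-off threshold. The two ingredients are Markov's inequality (for going from expected-loss guarantees to cut-off guarantees) and the boundedness of $\ell \in [0,1]$ together with a simple case split on whether the loss exceeds a threshold (for the reverse direction). Throughout, I will fix an arbitrary $h^\star \in \calH$ and distribution $\calD_\calX$ and show that the same learner $A$, with the same training set $T$, witnesses both types of guarantees with only a change in the parameters.

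For the left inequality $\calM(\calH;\sqrt{\eps},\delta,\sqrt{\eps}) \le \calM(\calH;\eps,\delta)$, let $A$ be a learner achieving expected loss $\E_{x\sim\calD_\calX}[\ell(A(T)(x),h^\star(x))] \le \eps$ with probability at least $1-\delta$ over $T$. Condition on this high-probability event. By Markov's inequality applied to the nonnegative random variable $\ell(A(T)(x),h^\star(x))$ under $x \sim \calD_\calX$,
\[
\Pr_{x\sim\calD_\calX}\big[\ell(A(T)(x),h^\star(x)) > \sqrt{\eps}\big] \le \frac{\E_{x\sim\calD_\calX}[\ell(A(T)(x),h^\star(x))]}{\sqrt{\eps}} \le \frac{\eps}{\sqrt{\eps}} = \sqrt{\eps}.
\]
Hence $A$ is a cut-off PAC learner with parameters $(\sqrt{\eps},\delta,\sqrt{\eps})$, which gives the desired inequality on sample complexities.

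For the right inequality $\calM(\calH;\eps,\delta) \le \calM(\calH;\eps/2,\delta,\eps/2)$, let $A$ be a cut-off PAC learner with parameters $(\eps/2,\delta,\eps/2)$, so $\Pr_{x\sim\calD_\calX}[\ell(A(T)(x),h^\star(x)) > \eps/2] \le \eps/2$ with probability at least $1-\delta$. Conditioning on this event and using that $\ell$ is bounded by $1$, I split the expectation according to whether the loss exceeds $\eps/2$:
\[
\E_{x\sim\calD_\calX}\big[\ell(A(T)(x),h^\star(x))\big] \le \tfrac{\eps}{2}\cdot \Pr\big[\ell \le \tfrac{\eps}{2}\big] + 1\cdot \Pr\big[\ell > \tfrac{\eps}{2}\big] \le \tfrac{\eps}{2} + \tfrac{\eps}{2} = \eps.
\]
Thus $A$ is also a PAC learner with parameters $(\eps,\delta)$, yielding the claimed sample complexity inequality. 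Neither direction presents a real technical obstacle: the only subtle point is to keep track of where the high-probability event (over $T$) is used versus where the expectation or probability (over $x$) is taken, and to make sure the same training-set event witnesses both guarantees simultaneously so that the same algorithm $A$ transfers across the two definitions with no additional loss in $\delta$.
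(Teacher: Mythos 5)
Your proof is correct and follows essentially the same route as the paper: Markov's inequality for the direction $\calM(\calH;\sqrt{\eps},\delta,\sqrt{\eps}) \leq \calM(\calH;\eps,\delta)$, and the bounded-loss case split (the paper writes the bound as $\eps/2 + (1-\eps/2)\cdot\eps/2 \leq \eps$, which is your decomposition) for the other direction, arguing for a fixed learner and distribution and then passing to the infimum over algorithms.
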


\begin{proof}
    Let $A$ be a learning algorithm. We will prove the statement for each fixed $A$
    and for each data-generating distribution $\calD$,
    so the result follows by taking the infimum over the learning algorithms.

    Assume that the cut-off sample complexity of $A$ is 
    $\calM_A(\calH; \eps/2, \delta,\eps/2)$. Then, with probability $1-\delta$
    over the training sample $S \sim \calD$,
    for its expected loss it holds 
    that
    \[
        \E_{x \sim \calD_\calX}[\ell(A(S;x), h^\star(x))] \leq \frac{\eps}{2} + \left(1-\frac{\eps}{2}\right)\cdot\frac{\eps}{2} \leq \eps \,,
    \]
    thus, $\calM_A(\calH; \eps, \delta) \leq \calM_A(\calH; \eps/2, \delta,\eps/2).$

    The other direction follows by using Markov's inequality. In particular, if
    we have that with probability at least $1-\delta$ over $S \sim \calD$ it holds that
    \[
        \E_{x \sim \calD_\calX}[\ell(A(S;x), h^\star(x))] \leq \eps \,,
    \]
    then Markov's inequality gives us that
    \[
        \Pr_{x \sim \calD_\calX}[\ell(A(S;x),h^\star(x)) \geq \sqrt{\eps}] \leq \sqrt{\eps} \,,
    \]
    which shows that $\calM_A(\calH; \eps, \delta) \geq \calM_A(\calH; \sqrt{\eps}, \delta,\sqrt{\eps})$.
\end{proof}

\paragraph{ERM Sample Complexity.} In the special case where $\calA$ is the class $\mathrm{ERM}$ of all possible ERM algorithms, i.e., algorithms that return a hypothesis whose sample error is exactly 0,
we define the ERM sample complexity as the number of samples required by the worst-case ERM algorithm, i.e.,  
\begin{equation}
    \calM_{\mathrm{ERM}}(\calH; \eps, \delta) = \sup_{A \in \mathrm{ERM}} \calM_{A}(\calH; \eps, \delta)\,,     
\end{equation}
and its cut-off analogue as
\begin{equation}
    \calM_{\mathrm{ERM}}(\calH; \eps, \delta,\gamma) = \sup_{A \in \mathrm{ERM}} \calM_{A}(\calH; \eps, \delta,\gamma)\,, 
\end{equation}

\section{$\gamma$-Graph Dimension and ERM Learnability}
\label{sec:graph dimension result}

In this section, we show that $\gamma$-graph dimension determines the learnability of $\calH \subseteq [0,1]^\calX$ using \emph{any} ERM learner. We first revisit the notion of partial concept classes which will be useful for deriving our algorithms.

\subsection{Partial Concept Classes and A Naive Approach that Fails}\label{sec:partial concept}

\cite{alon2022theory} proposed an extension of the binary PAC model to handle \emph{partial concept classes}, where $\calH \subseteq \{0,1,\star\}^\calX$,
for some input domain $\calX$, where $h(x) = \star$ should be thought 
of as $h$ not knowing the label of $x \in \calX$.
The main motivation behind their work is that partial classes allow one to conveniently express
\emph{data-dependent} assumptions.
As an intuitive example, a halfspace with margin is a partial function that is undefined inside the forbidden margin and is a well-defined halfspace outside the margin boundaries. Instead of dealing with concept classes $\calH \subseteq \calY^\calX$ where each concept $h \in \calH$ is a \textbf{total function} $h : \calX \to \calY$, we study \textbf{partial concept classes} $\calH \subseteq (\calY \cup \{\star\})^\calX$, where each concept $h$ is now a \textbf{partial function} and $h(x) = \star$ means that the function $h$ is \textbf{undefined} at $x$. We define the support of $h$ as the set $\mathrm{supp}(h) = \{ x \in \calX : h(x) \neq \star \}.$ Similarly as in the
case of total classes, we say that a finite sequence $S = (x_1,y_1,\ldots,x_n,y_n)$
is realizable with respect to $\calH$ if there exists some $h^* \in \calH$ such that
$h^*(x_i) = y_i, \forall i \in [n].$

An important notion related to partial concept classes is that of \emph{disambiguation}.
\begin{definition}[Disambiguation of Partial Concept Class \cite{alon2022theory}]\label{def:disambiguation}
    Let $\calX$ be an input domain. 
    A total concept class $\overline{\calH} \subseteq \{0,1\}^\calX$
    is a special type of 
    a partial concept. 
    Given some partial concept class $\calH \subseteq \{0,1,\star\}^\calX$
    we say that $\overline{H}$ is a disambiguation of $\calH$ if for any finite 
    sequence $S \in (\calX \times \{0,1\})^*$ if $S$ is realizable with respect to
    $\calH$, then $S$ is realizable with respect to $\overline{\calH}.$
\end{definition}
Intuitively, by disambiguating a partial concept class we convert it 
to a total concept class without reducing its ``expressivity''.

Let us first describe an approach to prove the upper bound, i.e., that
if the scaled-graph dimension is finite for all scales then the class
is ERM learnable, that does not work.
We could perform the following transformation, inspired by the multiclass setting \cite{daniely2014optimal}: for any $h \in \calH \subseteq [0,1]^\calX$, let us consider the function $\wt{h} : \calX \times [0,1] \to \{0,1\}$ with $\wt{h}(x,y) = 1$ if and only if $h(x) =  y$, $\wt{h}(x,y) = 0$ if and only if $\l(h(x),y) > \eps$ and $\wt{h}(x,y) = \star$ otherwise. This induces a new binary \emph{partial} hypothesis class $\wt{\calH} = \{ \wt{h} : h \in \calH \} \subseteq \{0,1,\star\}^\calX$. We note that $\mathbb D^\mathrm{G}_\eps(\calH) = \mathrm{VC}(\wt{\calH})$. However, we cannot use ERM for the partial concept class since in general this approach fails. In particular, a sufficient condition for applying ERM is that $\mathrm{VC}(\{\mathrm{supp}(\wt{h}) : \wt{h} \in \wt{\calH})\} < \infty$.

\begin{remark}
Predicting $\star$ in \cite{alon2022theory} implies a mistake for the setting of partial concept classes. However, in our regression setting, $\star$ is interpreted differently and corresponds to loss at most $\gamma$ which is desirable. In particular, the hard instance for proper learners in the partial concepts paper (see Proposition 4 in \cite{alon2022theory}) is good in settings where predicting $\star$ does not count as a mistake, as in our regression case. 
\end{remark}

\subsection{Main Result} 
We are now ready to state the main result of this section. We will prove the next statement.

\begin{theorem}
\label{thm: graph dimension result}
    Let $\l$ be the absolute loss function.
    For every class $\calH \subseteq [0,1]^\calX$ and
    for any $\varepsilon, 
    \delta, \gamma \in (0,1)^3,$ 
    the sample complexity bound for realizable PAC regression by any ERM satisfies
    \[
    \Omega\left( \frac{\mathbb{D}_\gamma^{\mathrm{G}}(\calH) + \log(1/\delta)}{\varepsilon} \right) 
    \leq \calM_{\mathrm{ERM}}(\calH; \varepsilon, \delta, \gamma)  \leq O\left(  \frac{\mathbb{D}_\gamma^{\mathrm{G}}(\calH)  \log(1/\varepsilon) + \log(1/\delta)}{\varepsilon} \right)\,. 
    \]
    In particular, any ERM algorithm $\mathbb A$ achieves
    \[
    \E_{x \sim \calD_\calX}[\l( A(S;x), h^\star(x)]
    \leq \inf_{\gamma \in [0,1]} \gamma + \wt{\Theta}\left(\frac{\mathbb D_\gamma^\mathrm{G}(\calH) + \log(1/\delta)}{n}\right)\,,
    \]
    with probability at least $1-\delta$ over $S$ of size $n$.
\end{theorem}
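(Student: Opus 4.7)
The proof splits into a lower bound and an upper bound, and I will treat them separately.

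For the lower bound, the plan is to mimic the classical VC-style argument. Suppose $\mathbb{D}_\gamma^{\mathrm{G}}(\calH) \geq d$, so there exist points $x_1,\dots,x_d$ and a reference function $f : [d] \to [0,1]$ witnessing $\gamma$-graph shattering. Define a distribution $\calD_\calX$ that places mass $1 - 8\eps$ on $x_1$ and mass $8\eps/(d-1)$ on each of $x_2,\dots,x_d$, and take the target to be the hypothesis $h_{0^d} \in \calH$ (the one corresponding to $b = 0^d$, which satisfies $h_{0^d}(x_i) = f(i)$). With constant probability over $n = o(d/\eps)$ samples, at least half of $x_2,\dots,x_d$ are unobserved; an adversarial ERM may pick the bit string $b$ that is $1$ precisely on the unseen indices, yielding some $h_b \in \calH$ which is a valid ERM on the training set but incurs loss $>\gamma$ on a constant fraction of the mass. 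The $\log(1/\delta)/\eps$ term follows from a standard single-point lower bound.

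For the upper bound, I follow the three-step program sketched in the introduction. First, a ghost-sample step: if $\wh h$ is any ERM on $S \sim \calD^n$ and $S'$ is an independent ghost sample of size $n$, a Chernoff-type argument (adapted from binary classification but requiring some care because the event ``loss exceeds $\gamma$'' is not preserved by standard symmetrization tricks) shows that it suffices to bound, with high probability over $(S,S')$, the empirical fraction of points in $S'$ on which $\wh h$ incurs loss $>\gamma$. Second, a symmetrization-by-swaps step: fix a realizable sequence $T$ of $2n$ points and a uniformly random permutation $\pi$ that either swaps or fixes each pair $(T_i, T_{n+i})$; letting $\wh h_\pi$ be any ERM on the first half of $\pi(T)$, it suffices to show that $\E_\pi \bigl[ \#\{i : \l(\wh h_\pi(x_{n+i}), y_{n+i}) > \gamma\} \bigr]$ is small, uniformly in the realizable $T$.

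The core (and hardest) step is bounding the number of effectively distinct ERMs on $T$ so that a union bound over them closes the argument. My plan is to pass to a binary partial concept class: define $\wt h : \calX \times [0,1] \to \{0,1,\star\}$ by $\wt h(x,y) = 1$ if $h(x) = y$, $\wt h(x,y) = 0$ if $\l(h(x),y) > \gamma$, and $\wt h(x,y) = \star$ otherwise, and set $\wt{\calH} = \{\wt h : h \in \calH\}$. One checks directly from the definitions that $\mathrm{VC}(\wt{\calH}) = \mathbb{D}^\mathrm{G}_\gamma(\calH)$. Projecting $\wt{\calH}$ to $T$ (viewed as $2n$ labeled query points $(x_i,y_i)$) and then invoking a disambiguation $\overline{\calH}$ in the sense of \cite{alon2022theory} yields a total class on $2n$ points with the same VC dimension; Sauer–Shelah then gives at most $(2n)^{O(\mathbb{D}^\mathrm{G}_\gamma(\calH))}$ distinct behaviors on $T$. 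Crucially, any ERM $\wh h_\pi$ on the first half of $\pi(T)$ has $\wt{\wh h_\pi}(x_i,y_i) \in \{1,\star\}$ on that half (since its training loss is $0$), so its pattern on $T$ is determined by one of the at most $(2n)^{O(\mathbb{D}^\mathrm{G}_\gamma(\calH))}$ functions in the projected disambiguation. For each fixed such function, the number of $\gamma$-mistakes (i.e., $\wt{}=0$) on the ghost half is invariant under swaps that keep the first half error-free, so a counting/swap argument gives expected ghost-mistake count $O(\log(2n))$ per function, and a union bound over the $(2n)^{O(\mathbb{D}^\mathrm{G}_\gamma(\calH))}$ functions yields the claimed $O((\mathbb{D}^\mathrm{G}_\gamma(\calH)\log(1/\eps) + \log(1/\delta))/\eps)$ bound.

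The main obstacle I anticipate is the disambiguation step: one must verify that a disambiguation exists with VC dimension equal to $\mathrm{VC}(\wt{\calH})$ up to constants and that the combined swap-and-projection argument still goes through when we replace $\wt{\calH}$ by $\overline{\calH}$. This is where the proof truly diverges from \cite{blumer1989learnability}; the asymmetry in the definition of $\gamma$-graph shattering (the ``equality'' side being preserved under realizability) is exactly what makes the translation to a partial class, and subsequent disambiguation, well behaved. The final $\inf_\gamma$ in-expectation bound follows by applying the cut-off bound at scale $\gamma$ and integrating the tail $\Pr[\l > \gamma]$, which translates the cut-off sample complexity into an expected-loss guarantee via \Cref{lem:equivalence sample complexity defs}.
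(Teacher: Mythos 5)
Your overall route coincides with the paper's: the same adversarial-ERM lower bound on a $\gamma$-graph-shattered set with mass concentrated on $x_1$, and the same three-step upper bound (ghost sample, random swaps over pairings $(i,n+i)$, then passing to the binary \emph{partial} class recording ``exact fit'' / ``loss $>\gamma$'' / $\star$ and disambiguating it on the $2n$ points before a union bound). The lower bound and the first two steps are fine and essentially identical to the paper's argument.

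The genuine gap is in your counting step. You assert that the disambiguation $\overline{\calH}$ of the projected partial class has ``the same VC dimension'' as $\wt{\calH}$ (up to constants) and then invoke Sauer--Shelah to get $(2n)^{O(\mathbb{D}^{\mathrm{G}}_\gamma(\calH))}$ behaviors. This is exactly the step you flagged as needing verification, and it cannot be verified: disambiguations (\Cref{def:disambiguation}) do \emph{not} preserve VC dimension in general. Indeed, \cite{alon2022theory} show that there are partial classes of VC dimension $1$ all of whose disambiguations have arbitrarily large (even infinite) VC dimension, and on a finite domain of size $N$ the best general guarantee is only a \emph{cardinality} bound: a disambiguation of size $N^{O(d\log N)}$ exists, with known obstructions (related to the Alon--Saks--Seymour phenomenon) to improving this to $N^{O(d)}$. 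The paper's proof avoids your step entirely by union-bounding directly over the $(2n)^{O(\mathbb{D}^{\mathrm{G}}_\gamma(\calH)\log(2n))}$ functions of such a compact disambiguation, paying an extra $\log$ factor but never claiming anything about the VC dimension of $\overline{\calH}$; combined with the per-function probability $2^{-n\eps/2}$ under random swaps this closes the argument. So to repair your proof you should replace ``same VC dimension $+$ Sauer--Shelah'' by the cardinality bound for compact disambiguations; as stated, your union bound over $(2n)^{O(\mathbb{D}^{\mathrm{G}}_\gamma(\calH))}$ behaviors is unjustified. (Minor additional imprecision: the per-function swap estimate is not that the ghost-half mistake count is ``invariant'' with expected value $O(\log 2n)$, but that for a fixed disambiguated witness with at least $\eps n/2$ mistake coordinates and no mistake pair $(i,n+i)$, a uniformly random swap places all mistakes in the second half with probability at most $2^{-n\eps/2}$.)
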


\begin{proof}
We prove the upper bound and the lower bound of the statement 
separately.
\paragraph{Upper Bound for the ERM learner.}
We deal with the cut-off loss problem with parameters $(\eps, \delta,\gamma) \in (0,1)^3$.
Our proof is based on a technique that uses a ``ghost'' sample to 
establish generalization guarantees of the algorithm.
Let us denote
\begin{equation}
    \mathrm{er}_{\calD, \gamma}(h) \triangleq \Pr_{x \sim \calD_\calX}[\l(h(x), h^\star(x)) > \gamma]\,, 
\end{equation}
and for a dataset $z \in (\calX \times [0,1])^n$,
\begin{equation}
    \wh{\mathrm{er}}_{z, \gamma}(h) \triangleq \frac{1}{|z|}\sum_{(x,y) \in z} \hI\{\l(h(x), y) > \gamma\}\,.
\end{equation}
We will start by showing the next symmetrization lemma in our setting. Essentially,
it bounds the probability that there exists a bad ERM learner by the probability that
there exists an ERM learner on a sample $r$ whose performance on a hidden 
sample $s$ is bad. For a similar result, see Lemma 4.4 in \cite{anthony1999neural}.

\begin{lemma}
[Symmetrization] 
Let $\eps,\gamma \in (0,1)^2, n > 0.$ Fix $Z = \calX \times [0,1]$.
Let 
\begin{equation}
    Q_{\eps,\gamma} = \{
    z \in Z^n : \exists h \in \calH : 
    \wh{\mathrm{er}}_{z,\gamma}(h) = 0,~
    \mathrm{er}_{\calD,\gamma}(h) > \eps
    \}
\end{equation}
and
\begin{equation}
    R_{\eps,\gamma} = \{(r,s) \in Z^n \times Z^n : \exists h \in \calH : \mathrm{er}_{\calD,\gamma}(h) > \eps,~\wh{\mathrm{er}}_{r,\gamma}(h)  = 0,~
    \wh{\mathrm{er}}_{s,\gamma}(h) \geq \eps/2\}\,.
\end{equation}
Then, for $n \geq c/\varepsilon$, where $c$ is some absolute constant, we have
that
\[
\calD^n(Q_{\eps, \gamma}) \leq 2 \calD^{2n}(R_{\eps,\gamma})\,.
\]
\end{lemma}
\begin{proof}
We will show that $\calD^{2n}(R_{\eps, \gamma}) \geq \frac{\calD^n(Q_{\eps, \gamma})}{2}$.
By the definition of $R_{\varepsilon, \gamma}$ we can write 
\[
\calD^{2n}(R_{\varepsilon,\gamma}) =\int_{Q_{\eps,\gamma}} \calD^n(s : \exists h \in \calH, 
\mathrm{er}_{\calD,\gamma}(h) > \eps,  \wh{\mathrm{er}}_{r, \gamma}(h) = 0,
 \wh{\mathrm{er}}_{s, \gamma}(h) \geq \eps/2
) d\calD^n(r)\,.
\]
For $r \in Q_{\varepsilon,\gamma}$, fix $h_r \in \calH$ that satisfies
$\wh{\mathrm{er}}_{r,\gamma}(h_r) = 0,~
    \mathrm{er}_{\calD,\gamma}(h) > \eps$.
    It suffices to show that for $h_r$
    \[
    \calD^n(s :  
 \wh{\mathrm{er}}_{s, \gamma}(h_r) \geq \eps/2
) \geq 1/2\,.
    \]
    Then, the proof of the lemma follows immediately. 
    Since $\mathrm{er}_{\calD,\gamma}(h_r) > \varepsilon$, we know that
    $n\cdot \wh{\mathrm{er}}_{s, \gamma}(h)$ follows a binomial distribution
    with probability of success on every try at least $\varepsilon$ and 
    $n$ number of tries. Thus, the multiplicative version of Chernoff's bound
    gives us
    \[
        \calD^n(s :  
 \wh{\mathrm{er}}_{s, \gamma}(h_r) < \eps/2 
)  \leq e^{-\frac{n\cdot \varepsilon}{8}} \,.
    \]
    Thus, if $n = c/\varepsilon$, for some appropriate absolute constant
    $c$ we see that 
     \[
        \calD^n(s :  
 \wh{\mathrm{er}}_{s, \gamma}(h_r) < \eps/2 
)  < 1/2 \,,
    \]
    which concludes the proof.
\end{proof}

Next, we can use a random swap argument to upper bound $\calD^{2n}(R_{\eps, \gamma})$
with a quantity that involves a set of permutations over the sample
of length $2n.$
The main idea behind the proof is to try to leverage
the fact that each of the labeled examples is as likely to occur
among the first $n$ examples or the last $n$ examples.

Following \cite{anthony1999neural}, we 
denote by $\Gamma_n$ the set of all permutations 
on $\lrset{1,\ldots,2n}$ that swap $i$ and $n+i$, for
all $i$ that belongs to $\lrset{1,\ldots,n}.$
In other words, for all $\sigma \in \Gamma_n$ and $i \in \lrset{1,\ldots,n}$
either $\sigma(i) = i, \sigma(n+i) = n+i$ or $\sigma(i) = n+i, 
\sigma(n+i) = i.$ Thus, we can think of $\sigma$ as acting on coordinates
where it (potentially) swaps one element from the first half of the sample
with the corresponding element on the second half of the sample.
For some $z \in Z^{2n}$ we overload the notation and denote
$\sigma(z)$ the effect of applying $\sigma$ to the sample $z.$

We are now ready to state the bound. Importantly, it shows that 
by (uniformly) randomly choosing a permutation $\sigma \in \Gamma_n$
we can bound the probability that a sample falls into the bad set $R_{\eps, \gamma}$
by a quantity that does not depend on the distribution $\calD.$

\begin{lemma}
[Random Swaps; Adaptation of Lemma 4.5 in \cite{anthony1999neural}]
Fix $Z = \calX \times [0,1]$.
Let $R_{\eps, \gamma}$ be any subset of $Z^{2n}$ and $\calD$ any probability distribution on $Z$. Then
\[
\calD^{2n}(R_{\eps, \gamma})
=
\E_{z \sim \calD^{2n}} \Pr_{\sigma \sim \mathbb U(\Gamma_n)}[\sigma(z) \in R_{\eps,\gamma}]
\leq 
\max_{z \in Z^{2n}} 
\Pr_{\sigma \sim \mathbb U(\Gamma_n)}[\sigma(z) \in R_{\eps,\gamma}]\,,
\]
where $\mathbb U(\Gamma_n)$ is the uniform distribution over the set
of swapping permutations $\Gamma_n.$
\end{lemma}
\begin{proof}
    First, notice that the bound
    \[
        \E_{z \sim \calD^{2n}} \Pr_{\sigma \sim \mathbb U(\Gamma_n)}[\sigma(z) \in R_{\eps,\gamma}]
\leq 
\max_{z \in Z^{2n}} 
\Pr_{\sigma \sim \mathbb U(\Gamma_n)}[\sigma(z) \in R_{\eps,\gamma}]\,,
    \]
    follows trivially and the maximum exists since $\Pr_{\sigma \sim \mathbb U(\Gamma_n)}[\sigma(z) \in R_{\eps,\gamma}]$ takes finitely many values for any finite $n$
    and all $z \in Z^{2n}.$ Thus, the bulk of the proof is to show that
    \[
       \calD^{2n}(R_{\eps,\gamma})
=
\E_{z \sim \calD^{2n}} \Pr_{\sigma \sim \mathbb U(\Gamma_n)}[\sigma(z) \in R_{\eps,\gamma}] \,. 
    \]
    First, notice that since example is drawn i.i.d., for any swapping 
    permutation $\sigma \in \Gamma_n$ we have that
    \begin{equation}\label{eq:1}
        \calD^{2n}(R_{\eps,\gamma}) = \calD^{2n}\left(\left\{z \in Z^{2n}: \sigma(z) \in R_{\eps,\gamma}\right\}\right)\,.
    \end{equation}
        
    Thus, the following holds
    \begin{align*}
        \calD^{2n}(R_{\eps,\gamma}) &= \int_{Z^{2n}} \hI\lrset{z \in R_{\eps,\gamma}} d\calD^{2n}(z) \\
                      &= \frac{1}{|\Gamma_n|} \sum_{\sigma \in \Gamma_n} \int_{Z^{2n}} \hI\lrset{\sigma(z) \in R_{\eps,\gamma}} d\calD^{2n}(z) \\
                      &=   \int_{Z^{2n}} \left(\frac{1}{|\Gamma_n|} \sum_{\sigma \in \Gamma_n}  \hI\lrset{\sigma(z) \in R_{\eps,\gamma}}\right)  d\calD^{2n}(z) \\
                      &= \E_{z \sim \calD^{2n}}  \Pr_{\sigma \sim \mathbb U(\Gamma_n)}[\sigma(z) \in R_{\eps,\gamma}] \,,
    \end{align*}
    where the first equation follows by definition, the second by \Cref{eq:1}, the third because the number of terms in the summation is finite,
    and the last one by definition.
\end{proof}

As a last step we can bound the above RHS by using all possible patterns when $\calH$ is (roughly speaking) projected in the sample $rs \in Z^{2n}$.

\begin{lemma}
[Bounding the Bad Event]
Fix $Z = \calX \times [0,1]$.
Let $R_{\eps,\gamma} \subseteq Z^{2n}$ be the set 
\[
R_{\eps,\gamma} = \{(r,s) \in Z^n \times Z^n : \exists h \in \calH : \wh{\mathrm{er}}_{r,\gamma}(h) = 0,~
\wh{\mathrm{er}}_{s,\gamma}(h) \geq \eps/2\}\,.
\]
Then
\[
\Pr_{\sigma \sim \mathbb U(\Gamma_n)}
[\sigma(z) \in R_{\eps,\gamma}] \leq 
 (2n)^{O(\mathbb D^\mathrm{G}_\gamma(\calH) \log(2n))}
2^{-n \eps/2}\,.
\]
\end{lemma}

\begin{proof}
Throughout the proof we fix $z = (z_1,...,z_{2n}) \in Z^{2n}$, where $z_i = (x_i, y_i) = (x_i, h^\star(x_i))$ and let $S = \{x_1,...,x_{2n}\}$.
Consider the projection set $\calH|_S$. We define a
\emph{partial binary} concept class $\calH' \subseteq \{0,1,\star\}^{2n}$ as follows:
\[
    \calH' := \left\{h' \in \{0,1,\star\}^{2n}: \exists h \in \calH|_S:  \forall i \in [2n]
     \left.\begin{cases}
        h(x_i) = y_i, & h'(i) = 0\\
        \l(h(x_i), y_i) > \gamma, & h'(i) = 1\\
        0 < \l(h(x_i), y_i) \leq \gamma, & h'(i) = \star
    \end{cases}
    \right\}\right\} \,.
\]
Importantly, we note that, by definition,
$\mathrm{VC}(\calH') \leq \mathbb D^\mathrm{G}_\gamma(\calH)$. 

Currently, we have a partial binary concept class $\calH'$. As a next step, we would like to replace the $\star$ symbols and essentially reduce the problem to a total concept class. This procedure is called disambiguation (cf. \Cref{def:disambiguation}).
The next key lemma shows that there exists a compact (in terms of cardinality) disambiguation of a VC partial concept class for finite instance domains.
\begin{lemma}
[Compact Disambiguations, see \cite{alon2022theory}]
Let $\calH$ be a partial concept class on a finite instance domain $\calX$ with $\mathrm{VC}(\calH) = d$. Then there exists a disambiguation $\overline{\calH}$ of $\calH$ with size
$|\overline{\calH}| = |\calX|^{O(d \log |\calX|)}.$
\end{lemma}
This means that there exists a disambiguation $\overline{\calH'}$ of $\calH'$ of size at most
\[
(2n)^{O(\mathbb D^\mathrm{G}_\gamma(\calH) \log(2n))}\,.
\]
Since this (total) binary concept class is finite, we can apply the following union bound argument.
We have that
$\sigma(z) \in R$ if and only if some $h \in \calH$ satisfies
\[
\frac{\sum_{i = 1}^n \hI\{\l(h(x_{\sigma(i)}), y_{\sigma(i)})  > \gamma\}}{n} 
 = 0,~~~
\frac{\sum_{i = 1}^n 
\hI\{
\l(
h(x_{\sigma(n+i)}), y_{\sigma(n+i)}) 
> \gamma
\}}{n}
\geq \eps/2\,.
\]
We can relate this event with an event about the disambiguated partial concept class $\overline{\calH'}$ since the number of 1's can only increase. In particular,
for any swapping permutation $\sigma$ of the $2n$ points, if there exists a function in $\calH$ that is correct on the first $n$ points and is off by at least $\gamma$ on at least $\eps n/2$ of the remaining $n$ points, then there is a function in the disambiguation $\overline{\calH'}$ that is $0$ on the first $n$ points and is $1$ on those same $\epsilon n/2$ of the remaining points. 

If we fix some $\sigma \in \Gamma_n$, and some $\overline{h'} \in \overline{\calH'}$ then $\overline{h'}$ is a witness that $\sigma(z) \in R_{\eps,\gamma}$ only if $\forall i \in [n]$ we do not have that $ \overline{h'}(i) = 1, \overline{h'}(i+n) = 1$. Thus at least one of $\overline{h'}(i), \overline{h'}(n+i)$ must be zero. Moreover, at least $n\eps/2$ entries must
be non-zero. Thus, when we draw random swapping permutation the probability
that all the non-zero entries land on the second half of the sample sample is at most $2^{-n\eps/2}.$

Crucially since the number of possible functions is at most $|\overline{\calH'}| \leq (2n)^{O(\mathbb D^\mathrm{G}_\gamma(\calH) \log(2n))}$ a union bound gives us that
\[
\Pr_{\sigma \sim  \mathbb U(\Gamma_n)}[\sigma(z) \in R_{\eps,\gamma}] \leq 
(2n)^{O(\mathbb D^\mathrm{G}_\gamma(\calH) \log(2n))} \cdot 
2^{-n \eps/2}\,.
\]
Thus, since $z \in Z^{2n}$ was arbitrary we have that
\[
\max_{z \in Z^{2n}}\Pr_{\sigma \mathbb U(\Gamma_n)}[\sigma(z) \in R_{\eps,\gamma}] \leq 
(2n)^{O(\mathbb D^\mathrm{G}_\gamma(\calH) \log(2n))} \cdot 
2^{-n \eps/2}\,.
\]
This concludes the proof.
\end{proof}

\paragraph{Lower Bound for the ERM learner.} Our next goal is to show that
\[
\calM_{\mathrm{ERM}}(\calH; \varepsilon, \delta, \gamma) \geq C_0 \cdot  \frac{\mathbb D^\mathrm{G}_\gamma(\calH) + \log(1/\delta)}{\varepsilon} \,.
\]
To this end, we will show that there exists an ERM learner satisfying this lower bound. This will establish that the finiteness of $\gamma$-graph dimension for any $\gamma \in (0,1)$, is  necessary for PAC learnability using \emph{a worst-case} ERM.
It suffices to show that there exists a bad ERM algorithm that requires at least
$ C_0 \frac{\mathbb{D}_\gamma^{\mathrm{G}}(\calH) + \log(1/\delta)}{\varepsilon}$ samples to cut-off PAC learn $\calH$.
First let us consider the case where $d = \mathbb D^\mathrm{G}_\gamma(\calH) < \infty$ and let $S = \{x_1, ...., x_d\}$ be a $\gamma$-graph-shattered set by $\calH$ with witness $f_0$. 
Consider the ERM learner $\calA$ that works as follows: Upon seeing a sample $T \subseteq S$ consistent with $f_0$, $\calA$ returns a function $\calA(T)$ that is equal to $f_0$ on elements of $T$ and $\gamma$-far from $f_0$ on $S \setminus T$. Such a function exists since $S$ is $\gamma$-graph-shattered with witness $f_0$.
Let us take $\delta < 1/100$ and $\varepsilon < 1/12$. Define a distribution over $S \subseteq \calX$ such that
\[
\Pr[x_1] = 1-2\varepsilon,~~ \Pr[x_i] = 2\varepsilon/(d-1), ~\forall i \in \{2,...,d\}\,.
\]
Let us set $h^\star = f_0$ and consider $m$ samples $\{(z_i, f_0(z_i)\}_{i \in [m]}$. Since we work in the scaled PAC model, $\calA$ will make a $\gamma$-error on all examples from $S$ which are not in the sample (since in that case the output will be $\gamma$-far from the true label). Let us take $m \leq \frac{d-1}{6 \varepsilon}$.
Then, the sample will include at most $(d-1)/2$ examples which are not $x_1$ with probability $1/100$, using Chernoff's bound. Conditioned on that event, this implies that the ERM learner will make a $\gamma$-error with probability at least $\frac{2\varepsilon}{d-1} \cdot (d-1 - \frac{d-1}{2}) = \varepsilon$, over the random draw of the test point. Thus, $\calM_{\calA}(\calH; \varepsilon, \delta, \gamma) = \Omega(\frac{d-1}{\varepsilon})$. Moreover, the probability that the sample will only contain $x_1$ is $(1-2\varepsilon)^m \geq e^{-4\varepsilon m}$ which is greater that $\delta$ whenever $m \leq \log(1/\delta)/(4\varepsilon)$. This implies that
the $\gamma$-cut-off ERM sample complexity is lower bounded by
\[
\max \left\{ \frac{d-1}{6\varepsilon}, \frac{\log(1/\delta)}{2\varepsilon} \right\}
= C_0 \cdot \frac{\mathbb D^\mathrm{G}_\gamma(\calH) + \log(1/\delta)}{\varepsilon}\,.
\]
Thus $
\calM_{\mathrm{ERM}}(\calH; \varepsilon, \delta, \gamma) $, satisfies
the desired bound when the dimension is finite.
Finally, it remains to claim about the case where $\mathbb D^\mathrm{G}_\gamma(\calH) = \infty$ for the given $\gamma$. We consider a sequence of $\gamma$-graph-shattered sets $S_n$ with $|S_n| = n$ and repeat the claim for the finite case. This will yield that for any $n$ the cut-off ERM sample complexity is lower bounded by $\Omega((n + \log(1/\delta))/\varepsilon)$ and this yields that $
\calM_{\mathrm{ERM}}(\calH; \varepsilon, \delta, \gamma) = \infty$.
\end{proof}

However, as \Cref{example:no optimal pac learner can be proper} shows, the optimal learner cannot be proper and as a result, this dimension does not characterize PAC learnability for real-valued regression (there exist classes whose $\gamma$-graph dimension is infinite but are PAC learnable in the realizable regression setting).

\section{$\gamma$-OIG Dimension and Learnability}\label{sec:gamma-OIG-details}

In this section, we identify a dimension
characterizing qualitatively and quantitatively what classes of predictors $\calH \subseteq [0,1]^\calX$ are PAC learnable and we provide PAC learners that
achieve (almost) optimal sample complexity.
In particular, we show the following result.

\begin{theorem}\label{thm: OIG dimension result}

    Let $\l$ be the absolute loss function.
    For every class $\calH \subseteq [0,1]^\calX$ and
    for any $\varepsilon, 
    \delta, \gamma \in (0,1)^3,$ 
    the sample complexity bound for realizable PAC regression satisfies
    \begin{align*}
       \Omega \left( \frac{\mathbb D_{2\gamma}^\mathrm{OIG}(\calH) }{\varepsilon} \right) \leq  \calM(\calH; \varepsilon,\delta, \gamma) 
    \leq 
    O\lr{ 
    \frac{\mathbb D_{\gamma}^\mathrm{OIG}(\calH)}{\varepsilon}
    \log^2\frac{\mathbb D_{\gamma}^\mathrm{OIG}(\calH)}{\varepsilon} +\frac{1}{\varepsilon}\log\frac{1}{\delta}}\,.
    \end{align*}
    In particular, there exists an algorithm $ A$ such that
    \[
    \E_{x \sim \calD_\calX}[\l( A(S;x), h^\star(x)]
    \leq \inf_{\gamma \in [0,1]} \gamma + \wt{\Theta}\left(\frac{\mathbb D_{\gamma}^\mathrm{OIG}(\calH) + \log(1/\delta)}{n}\right)\,,
    \]
    with probability at least $1-\delta$ over $S \sim \calD^n$.
\end{theorem}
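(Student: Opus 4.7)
The plan is to prove the two directions separately. For the lower bound, I would exploit the equivalence between PAC learners and orientations of one-inclusion graphs. Let $n = \mathbb{D}_{2\gamma}^{\mathrm{OIG}}(\calH)$ and let $S=(x_1,\ldots,x_n)$ witness the shattering, so some finite subgraph of $G^{\mathrm{OIG}}_{\calH|_S}$ admits no orientation with maximum scaled (at scale $2\gamma$) out-degree $\leq n/3$. Any candidate $m$-sample learner $A$ induces, via leave-one-out on a realizable labeling, an orientation of this subgraph in which the edge through coordinate $i$ is oriented toward $A(S_{-i};x_i)$. I would then place mass $1-\Theta(\eps)$ on $x_1$ and spread the remaining $\Theta(\eps)$ uniformly over $\{x_2,\ldots,x_n\}$; a Chernoff bound shows that when $m=o(n/\eps)$ only a small fraction of $\{x_2,\ldots,x_n\}$ appears in the training sample, and the OIG shattering guarantee then produces a target $h \in \calH|_S$ for which $A$'s predictions on the unseen coordinates are $2\gamma$-far from $h$, forcing $\gamma$-cutoff error $>\eps$ with constant probability.

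For the upper bound there are three stages. Stage 1 (weak learner): for any finite unlabeled sample $S$ of size $n{+}1$ with $n\geq 3\mathbb{D}_\gamma^{\mathrm{OIG}}(\calH)$, the definition of the dimension furnishes a good orientation of every finite subgraph of $G^{\mathrm{OIG}}_{\calH|_S}$; using the compactness theorem of first-order logic, as in standard OIG arguments, these local orientations can be amalgamated into a single orientation of the full, possibly infinite, one-inclusion graph with maximum scaled out-degree $\leq n/3$. Interpreting this global orientation as a prediction rule and applying the standard exchangeability argument over a random split of $n{+}1$ i.i.d.\ examples into $n$ training points and one held-out test point yields a weak learner: after $n=O(\mathbb{D}_\gamma^{\mathrm{OIG}}(\calH))$ realizable samples, the probability (over both the sample and test point) of $\l$-loss exceeding $\gamma$ is at most $1/3$.

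Stage 2 (boosting): I would feed this weak learner into $\medboost$, the real-valued boosting routine of \cite{kegl2003robust}, for $T=O(\log n)$ rounds. MedBoost outputs an ensemble $h_1,\ldots,h_T$ together with weights so that the weighted median predictor has $\l$-loss at most $\gamma$ on every training point; the boosted rule is thus a zero-cutoff-error ``ERM at scale $\gamma$''. Stage 3 (generalization via compression): each weak hypothesis $h_t$ is determined by the subsample MedBoost feeds it, so the whole ensemble is reconstructible from the union of these $T$ subsamples, whose total size is $k=\widetilde{O}(\mathbb{D}_\gamma^{\mathrm{OIG}}(\calH))$. This yields a sample compression scheme of size $k$ with zero $\gamma$-cutoff training error; extending the Littlestone--Warmuth generalization theorem from the $0/1$ loss to the $\gamma$-cutoff loss, following the approach of \cite{littlestone1986relating}, gives a cutoff-error bound of the form $\widetilde{O}(k\log(n)/n + \log(1/\delta)/n)$ with probability $\geq 1-\delta$, which rearranged yields the stated sample complexity; the in-expectation statement then follows from \Cref{lem:equivalence sample complexity defs}.

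The main obstacle is the sample compression step: the classical Littlestone--Warmuth argument is tailored to the $0/1$ loss, and I would need to rework the symmetrization/ghost-sample argument for real-valued predictions evaluated at a fixed scale $\gamma$, carefully verifying that the MedBoost ensemble of OIG-based weak learners can be exactly reconstructed from a short list of original training indices, so that the procedure genuinely realizes a compression scheme in the technical sense. A secondary subtlety is converting the weak learner's out-degree guarantee, which is naturally a leave-one-out bound on an $(n{+}1)$-sample with the test index chosen uniformly, into a high-probability guarantee for a fixed $n$-sample learner; this is handled by the standard exchangeability plus Markov argument familiar from the binary OIG analysis of Haussler--Littlestone--Warmuth, applied here to the $\gamma$-cutoff indicator rather than the $0/1$ loss.
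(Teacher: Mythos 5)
Your proposal follows essentially the same route as the paper's proof: the lower bound converts the learner into an orientation of the hard finite subgraph (the paper makes your leave-one-out step precise by orienting each hyperedge toward a vertex whose coordinate the learner predicts $\gamma$-close to with conditional probability at least $1/2$, which is exactly where the $2\gamma$ scale and the hard distribution with mass $1-\Theta(\eps)$ on $x_1$ enter), and the upper bound is the identical pipeline of a compactness-based global orientation, a leave-one-out weak learner with cutoff error $1/3$, MedBoost for $O(\log n)$ rounds, and a Littlestone--Warmuth-style generalization bound for the $\gamma$-cutoff loss via sample compression. The two subtleties you flag (adapting compression to the cutoff loss, and converting the out-degree guarantee into a weak-learning guarantee via exchangeability) are handled in the paper exactly as you anticipate.
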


\subsection{Proof of the Lower Bound}
\begin{lemma}[Lower Bound of PAC Regression]
\label{lem:lower-bound-oig}
Let $A$ be any learning algorithm and $\eps,\delta,\gamma\in (0,1)^3$
such that $\delta < \varepsilon$.
Then,
\[
\calM_A(\calH; \eps, \delta, \gamma) \geq \Omega\left( \frac{\mathbb D_{2\gamma}^\mathrm{OIG}(\calH)}{\varepsilon} \right)\,.
\]
\end{lemma}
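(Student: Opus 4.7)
Let $n := \mathbb D_{2\gamma}^\mathrm{OIG}(\calH)$. I will assume $n$ is finite; otherwise, running the construction below on arbitrarily large finite sub-witnesses yields a sample-complexity lower bound of $\infty$. Fix $S = (x_1,\ldots,x_n)$ and a finite subgraph $G = (V,E)$ of $G^\mathrm{OIG}_{\calH|_S}$ witnessing the dimension, so that every orientation $\sigma$ of $G$ admits some $v \in V$ with $\mathrm{outdeg}(v;\sigma,2\gamma) > n/3$. By Yao's minimax principle it suffices to argue against a deterministic $A$ on a randomized instance. My strategy has two pieces: first, reduce the learner's complete-information predictions to an orientation of $G$ and apply the dimension hypothesis to pick out a ``worst-case'' target with many bad coordinates; second, use a concentrated distribution and Chernoff to ensure that after $m = O(n/\varepsilon)$ samples most bad coordinates remain unseen, so the cut-off loss is at least $\Omega(\varepsilon)$.

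\textbf{From learner to orientation.} For each edge $e_{i,f} \in E$, set $T_{i,f} := ((x_j,f(j)))_{j \neq i}$ (the complete-information training on $S \setminus \{x_i\}$), let $\hat y_{i,f} := A(T_{i,f}; x_i)$, and orient $\sigma_A(e_{i,f}) := \argmin_{v \in e_{i,f}} |v_i - \hat y_{i,f}|$. For any $v \in V$ and direction $i$, writing $v_{-i}$ for the restriction of $v$ to $[n] \setminus \{i\}$, the triangle inequality yields $|\sigma_A(e_{i,v_{-i}})_i - v_i| \leq 2|v_i - \hat y_{i,v_{-i}}|$, so direction $i$ contributes to the $2\gamma$-out-degree at $v$ only when $|v_i - \hat y_{i,v_{-i}}| > \gamma$, i.e., when $A$ errs at scale $\gamma$ on $x_i$ given complete-information training from $v$. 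The dimension hypothesis then produces a target $v^\star \in V$ and a set $I \subseteq [n]$ with $|I| > n/3$ such that $A$ errs at scale $\gamma$ on $x_i$ under complete-information training from $v^\star$ for every $i \in I$.

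\textbf{Hard distribution and unseen bad coordinates.} Place mass $1 - c_0\varepsilon$ on $x_1$ and mass $c_0\varepsilon/(n-1)$ uniformly on $\{x_2,\ldots,x_n\}$ for a small absolute constant $c_0$; after re-indexing, assume $I \subseteq \{2,\ldots,n\}$. For $m \leq c_1 n/\varepsilon$ with $c_1$ small enough, the expected number of samples from $\{x_2,\ldots,x_n\}$ is $c_0 c_1 n$, and a Chernoff bound gives that with probability at least $1/2$ the sample contains at most $n/12$ distinct such points, leaving more than $|I| - n/12 > n/4$ indices of $I$ unseen. For each unseen $i \in I$, the partial training $(T, f_T)$ with $T \subseteq S \setminus \{x_i\}$ does not distinguish any two vertices in the edge $e_{i,v^\star_{-i}}$ (they all agree with $v^\star$ on $S \setminus \{x_i\} \supseteq T$), so $A$ emits a single prediction on $x_i$; choosing $v^\star_i$ adversarially within $\{v_i : v \in e_{i,v^\star_{-i}}\}$ then recovers a Step-1-style $\gamma$-error on $x_i$. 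Summing over the $n/4$ unseen bad indices, each carrying mass $c_0\varepsilon/(n-1)$, gives expected cut-off loss $\Omega(\varepsilon)$, and combining with Markov's inequality and the hypothesis $\delta < \varepsilon$ yields $\Pr[\text{cut-off loss} > \varepsilon] > \delta$.

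\textbf{Main obstacle.} The subtle point is the last step: the worst-case target $v^\star$ was fixed via the orientation of complete-information predictions, but we really need a single $v^\star$ simultaneously bad for the learner's partial-information predictions across many unseen indices and sample realizations. The cleanest fix is to redo the orientation argument against a partial-information variant of $G^\mathrm{OIG}$, whose edges group together all targets indistinguishable by a given partial sample. Every such partial-information edge is a superset of an original OIG edge (it only adds more consistent targets), so the $n/3$ out-degree bound propagates up to a constant factor; feeding this refined orientation back into the Chernoff argument then delivers the claimed $\Omega(\mathbb D_{2\gamma}^\mathrm{OIG}(\calH)/\varepsilon)$ lower bound.
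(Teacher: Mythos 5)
There is a genuine gap, and it is precisely the one you flag at the end but do not repair. Your Step 1 orientation is built from the learner's \emph{complete-information} predictions $A(T_{i,f};x_i)$, where $T_{i,f}$ contains every point of $S\setminus\{x_i\}$ exactly once. The dimension hypothesis then hands you a vertex $v^\star$ that is bad for those complete-information predictions, but in Step 2 the learner is trained on a random multiset of $m$ draws from the hard distribution, and nothing ties its behavior on such partial samples to its behavior on $T_{i,v^\star_{-i}}$: a learner can be perfectly accurate on every unseen coordinate when given a partial sample while erring on every coordinate when given the full complement. Your attempted rescue, ``choosing $v^\star_i$ adversarially within $\{v_i : v \in e_{i,v^\star_{-i}}\}$'', is not available: the target must be fixed before the sample is drawn (the observed labels depend on it), the adversarial value may need to differ across unseen indices and across sample realizations, and flipping several coordinates of $v^\star$ simultaneously need not yield a vertex of $V$ (i.e., a labeling realizable by $\calH$ on all of $S$). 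The sketched fix via a ``partial-information variant of $G^{\mathrm{OIG}}$'' is also unsubstantiated: the dimension only quantifies over orientations of subgraphs of the actual one-inclusion graph, each vertex lies in many ``partial-information edges'' per direction, and the claim that the $n/3$ out-degree bound ``propagates up to a constant factor'' does not follow from supersets of edges alone.

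The paper closes exactly this hole by defining the orientation directly from the learner's \emph{distributional} behavior on the hard instances: with $P_u$ the distribution concentrated on $(x_1,u_1)$ with mass $1-\varepsilon$ and spreading $\varepsilon$ over the rest, and $m = n/(2\varepsilon)$, set $p_t(u) = \Pr_{S \sim P_u^m}\!\left[\l(A(S;x_t),u_t) > \gamma \mid (x_t,u_t) \notin S\right]$ and orient each edge $e_{t,\cdot}$ toward any vertex $u$ with $p_t(u) < 1/2$ (arbitrarily if none exists). The key observation is that all vertices on an edge agree off coordinate $t$, so conditioned on $x_t$ not appearing in the sample they induce the \emph{same} sample distribution; hence if two vertices $u,v$ on the edge both have $p_t < 1/2$, with positive probability the single prediction $A(S;x_t)$ is $\gamma$-close to both $u_t$ and $v_t$, forcing $\l(u_t,v_t) \le 2\gamma$. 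Consequently any direction contributing to the $2\gamma$-out-degree of a vertex $v$ has $p_t(v) \ge 1/2$, and the dimension hypothesis yields a $v^\star$ with more than $n/3$ directions on which the learner errs at scale $\gamma$ with probability at least $1/2$ over random partial samples from $P_{v^\star}$ missing that point; the mass calculation you describe then gives expected error $\Omega(\varepsilon)$ with $m = \Theta(n/\varepsilon)$ samples, and the conclusion follows as in your last step. Your combinatorial counting of unseen points is unnecessary in this formulation, since the conditioning on $(x_t,u_t)\notin S$ is already built into the orientation. So the overall architecture (hard distribution, orientation-from-learner, triangle inequality at scale $2\gamma$ versus $\gamma$) is right, but the orientation must be defined against the sample distribution itself, not against complete-information training, and as written your argument does not establish the lemma.
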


\begin{proof}
Let $n_0 = \mathbb D_{2\gamma}^\mathrm{OIG}(\calH)$. Let 
$n \in \nats, 1< n \leq n_0.$ We know that for each such $n$ 
there exists some $S \in \calX^n$ such that the
one-inclusion graph of $\calH|_S$ has the property that:
there exists a finite subgraph $G = (V,E)$ of $G^\mathrm{OIG}_{\calH|_S}$ 
such that for any orientation $\sigma : E \to V$ of the subgraph, 
there exists a vertex $v \in V$ with $\mathrm{outdeg}(v; \sigma, 2\gamma) > \mathbb D_{2\gamma}^\mathrm{OIG}(\calH)/3.$

Given the learning algorithm $A : (\calX \times [0,1])^\star \times \calX \to [0,1]$, we can describe an orientation $\sigma_A$ of the edges in $E.$ For any  vertex $v = (v_1,\ldots,v_n) \in V$ let $P_v$ be the distribution over $(x_1, v_1),....,(x_n,v_n)$ defined as
\[
P_v((x_1,v_1)) = 1-\varepsilon, ~P_v((x_t,v_t)) = \frac{\varepsilon}{n-1},~ t \in \lrset{2,\ldots,n}\,.
\]
Let $m = n/(2\varepsilon)$.
For each vertex $v \in V$ and direction $t \in [n]$, consider the hyperedge $e_{t,v}$. For each
$u \in e_{t,v}$ we define
\[
p_t(u) = \Pr_{S \sim P_u^m}[
\l(A(S; x_t), u_t) > \gamma | (x_t,u_t) \notin S]\,,
\]
and let $C_{e_{t,v}} = \{u \in e_{t,v}: p_t(u) < 1/2 \}.$ If $C_{e_{t,v}} = \emptyset$, we orient the edge $e_{t,v}$ arbitrarily. 
Since for all 
$u, v \in e_{t,v}$ the distributions $P^m_u, P^m_v$ conditioned on the event
that $(x_t, u_t), (x_t,v_t)$ respectively are not in $S$ are the same,
we can see that $\forall u, v \in C_{e_{t,v}}$ it holds that $\ell(u_t, v_t) \leq 2\gamma.$ We orient the edge $e_{t,v}$ using an arbitrary element 
of $C_{e_{t,v}}.$

Because of the previous discussion, we can bound from above the out-degree of all vertices $v \in V$ with respect to the orientation $\sigma_A$ as follows:
\[
\mathrm{outdeg}(v; \sigma_A, 2\gamma)
\leq
\sum_t \hI\{p_t(v) \geq 1/2\}
\leq 
1 + 2\sum_{t=2}^n
\Pr_{S \sim P_v^m}[\l(A(S, x_t), y_t) > \gamma | (x_t,y_t) \notin S]\,.
\]
Notice that
\[
\Pr_{S \sim P_v^m}[\l(A(S, x_t), y_t) > \gamma | (x_t,y_t) \notin S]
=
\frac{\Pr_{S \sim P_v^m}[\{\l(A(S, x_t), y_t) > \gamma\} \land \lrset{(x_t,y_t) \notin S}]}{\Pr_{S \sim P_v^m}[(x_t,y_t) \notin S]}\,,
\]
and by the definition of $P_v$, we have that
\[
\Pr_{S \sim P_v^m}[(x_t,y_t) \notin S]
 =
 \left(1 - \frac{\varepsilon}{n-1}\right)^m \geq 1 - \frac{n}{2(n-1)}\,,
\]
since $m = n/(2\varepsilon)$. Combining the above, we get that
\[
\mathrm{outdeg}(v; \sigma_A, 2\gamma)
\leq 
1 + 2\left(1-\frac{n}{2(n-1)} \right)\sum_{t=2}^{n}
\E_{S \sim P_v^m}[\hI\{\l(A(S, x_t), y_t) > \gamma\} \cdot \hI\{ (x_t,y_t) \notin S\}]\,,
\]
and so
\begin{align*}
   \mathrm{outdeg}(v; \sigma_A, 2\gamma)
&\leq 
1 + 2\left(1-\frac{n}{2(n-1)}\right)
\E_{S \sim P_v^m}
\left[\sum_{t=2}^{n}
\hI\{\l(A(S, x_t), y_t) > \gamma\}\right]\\
&= 1 + 2\left(1-\frac{n}{2(n-1)}\right)
\frac{n-1}{\eps}\E_{S \sim P_v^m}
\left[\frac{\eps}{n-1}\sum_{t=2}^{n}
\hI\{\l(A(S, x_t), y_t) > \gamma\}\right] \\
&\leq 1 + 2\left(1-\frac{n}{2(n-1)}\right)
\frac{n-1}{\eps}\E_{S \sim P_v^m}
\left[\E_{(x,y) \sim P_v}[\hI\lrset{\ell(A(S;x), y) > \gamma}]\right] \\
&= 1 + 2\left(1-\frac{n}{2(n-1)}\right)
\frac{n-1}{\eps}\E_{S \sim P_v^m}
\left[\Pr_{(x,y) \sim P_v}[\ell(A(S;x),y) > \gamma]\right] \\
&\leq  1 + 
\frac{n-2}{\eps}\E_{S \sim P_v^m}
\left[\Pr_{(x,y) \sim P_v}[\ell(A(S;x),y) > \gamma]\right]
\end{align*}

By picking 
``hard'' distribution $\calD = P_{v^\star}$, where
$v^\star \in \arg\max_{v' \in V}\mathrm{outdeg}(v'; \sigma_A, 2\gamma)$
we get that that 
\begin{align*}
       \E_{S \sim P_{v^\star}^m}
\left[\Pr_{(x,y) \sim P_{v^\star}}[\ell(A(S;x),y) > \gamma]\right] &\geq 
\left(\mathrm{outdeg}(v^\star; \sigma_A, 2\gamma)-1\right)\cdot\frac{\varepsilon}{n-2} \\
&\geq \frac{\eps}{6} \,,
\end{align*}
since $\mathrm{outdeg}(v^\star; \sigma_A, 2\gamma) > n/3.$
By picking $n = n_0$ we see that when the learner uses $m = n_0/\eps$
samples then its expected error is at least $\eps/6.$ Notice that
when the learner uses $m' = \calM_A(\calH;\eps,\delta,\gamma)$ samples
we have that
\[
     \E_{S \sim P_{v^\star}^{m'}}
\left[\Pr_{(x,y) \sim P_{v^\star}}[\ell(A(S;x),y) > \gamma]\right]  \leq \delta + (1-\delta)\varepsilon \leq \delta + \varepsilon \leq 2\varepsilon \,.
\]
Thus, we see that for any algorithm $A$
\[
    \calM_A(\calH;\eps,\delta,\gamma) \geq \Omega\left(\frac{\mathbb D_{2\gamma}^\mathrm{OIG}(\calH)}{\eps} \right) \,,
\]
hence
\[
    \calM(\calH;\eps,\delta,\gamma) \geq \Omega\left(\frac{\mathbb D_{2\gamma}^\mathrm{OIG}(\calH)}{\eps} \right) \,.
\]
\end{proof}

\subsection{Proof of the Upper Bound}
Let us present the upper bound. For this proof, we need three tools: we will provide a weak learner based on the scaled one-inclusion graph, a boosting algorithm for real-valued functions, and consistent sample compression schemes for real-valued functions.

To this end, we introduce the one-inclusion graph (OIG) algorithm $\mathbb A^{\mathrm{OIG}}_\gamma$ for realizable regression at scale $\gamma$.

\subsubsection{Scaled One-Inclusion Graph Algorithm and Weak Learning}

First, we show that every scaled orientation $\sigma$ of the one-inclusion graph gives rise to a learner $\mathbb A_{\sigma}$ whose expected absolute loss is upper bounded by the maximum out-degree induced by $\sigma$.
\begin{lemma}
[From Orientations to Learners]
\label{lemma:orientation and learner}
Let $\calD_\calX$ be a distribution over $\calX$ and $h^\star \in  \calH\subseteq\lrbra{0,1}^\calX$, let $n \in \hN$ and $\gamma  \in (0,1)$.
 Then, for any orientation $\sigma : E_n \to V_n$ of the scaled-one-inclusion graph $G^\mathrm{OIG}_{\calH} = (V_n, E_n)$, there exists a learner $\mathbb A_{\sigma} : (\calX \times [0,1])^{n-1} \to [0,1]^\calX$, such that
\[
 \E_{ S \sim \calD_\calX^{n-1} }  
    \left[
    \Pr_{x \sim \calD_\calX}
    \left[
    \l(\mathbb A_{\sigma}(x), h^\star(x)) > \gamma \right]
    \right]
    \leq 
    \frac{\max_{v \in V_n}\mathrm{outdeg}(v;\sigma,\gamma)}{n}\,,
\]
where $\mathbb A_{\sigma}$ is trained using a sample $S$ of size $n-1$ realized by $h^\star$.
\end{lemma}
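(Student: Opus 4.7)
The plan is to follow the classical one-inclusion graph reduction due to Haussler--Littlestone--Warmuth, adapted to the scaled regression setting. First I would describe the learner $\mathbb A_\sigma$ explicitly: on a training sample $S = ((x_1,y_1),\dots,(x_{n-1},y_{n-1}))$ drawn from $\calD_\calX$ and labeled by $h^\star$, and a test point $x_n\sim\calD_\calX$, let $S^+ = \{x_1,\dots,x_n\}$ and form the one-inclusion graph $G^\mathrm{OIG}_{\calH|_{S^+}}$ as in \Cref{def:oig}. The training labels specify the map $f:[n]\setminus\{n\}\to[0,1]$ with $f(i) = y_i = h^\star(x_i)$, which in turn determines a hyperedge $e_{n,f}$ containing every projected hypothesis in $\calH|_{S^+}$ that agrees with $h^\star$ on the training indices. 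The algorithm then outputs $\mathbb A_\sigma(S;x_n) = \sigma(e_{n,f})_n$, i.e., the $n$-th coordinate of the vertex to which $\sigma$ orients this edge.

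Next I would invoke the standard symmetrization. Because the $n-1$ training points and the test point are drawn i.i.d.\ from $\calD_\calX$, the joint distribution of $(x_1,\dots,x_n)$ is exchangeable, so
\begin{equation*}
\E_{S \sim \calD_\calX^{n-1}}\Pr_{x\sim\calD_\calX}\bigl[\l(\mathbb A_\sigma(S;x),h^\star(x))>\gamma\bigr]
= \E_{(x_1,\dots,x_n)\sim\calD_\calX^n}\E_{I\sim\mathrm{Unif}[n]}\bigl[\hI\{\text{$\mathbb A_\sigma$ errs at scale $\gamma$ when $x_I$ is the test point}\}\bigr].
\end{equation*}
The inner indicator is exactly $\hI\{\l(\sigma(e_{I,v^\star})_I, v^\star_I) > \gamma\}$, where $v^\star \in V_n$ denotes the vertex corresponding to $h^\star|_{S^+}$, because when $x_I$ is the test point the training labels fix $f$ to agree with $h^\star$ on $[n]\setminus\{I\}$ and thus pick out the edge $e_{I,v^\star}$.

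Finally I would sum over directions: averaging over $I$ uniform on $[n]$ and using \Cref{def:scaled orientation},
\begin{equation*}
\E_{I}\bigl[\hI\{\l(\sigma(e_{I,v^\star})_I,v^\star_I)>\gamma\}\bigr]
= \frac{1}{n}\,\mathrm{outdeg}(v^\star;\sigma,\gamma)
\leq \frac{\max_{v\in V_n}\mathrm{outdeg}(v;\sigma,\gamma)}{n}.
\end{equation*}
Taking expectation over $(x_1,\dots,x_n)\sim\calD_\calX^n$ gives the stated bound.

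I do not anticipate any serious obstacle: the argument is a direct scaled analogue of the binary OIG bound. The only points that require minor care are (i) verifying that $\mathbb A_\sigma$ is well-defined as a (measurable) mapping when the one-inclusion graphs are allowed to be infinite — the admissibility assumption on $\calH$ declared in the introduction handles this — and (ii) checking that the scaled notion of out-degree $\mathrm{outdeg}(v;\sigma,\gamma)$ correctly aligns with the $\gamma$-cutoff loss, which follows immediately from its definition.
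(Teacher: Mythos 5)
Your proposal is correct and follows essentially the same route as the paper's proof: the explicit construction of $\mathbb A_\sigma$ via the oriented edge in the test direction matches the paper's algorithm, and your exchangeability step is exactly the classical leave-one-out argument the paper uses to reduce the expected $\gamma$-cutoff error to $\mathrm{outdeg}(v^\star;\sigma,\gamma)/n$ and then to the maximum scaled out-degree. No gaps.
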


\begin{algorithm}[H]
    \caption{From orientation $\sigma$ to learner $\mathbb A_\sigma$}
    \label{alg:orientation}
    \textbf{Input:} An $\calH$-realizable sample $\{(x_i,y_i)\}_{i=1}^{n-1}$ and a test point $x \in \calX$, $\gamma \in (0,1)$.\\
    \textbf{Output:} A prediction $\mathbb A_{\sigma}(x)$.\\
    \begin{enumerate}
        \item Create the  one-inclusion graph $G^\mathrm{OIG}_{\calH|_{(x_1,...,x_{n-1},x) }}.$
        \item Consider the edge in direction $n$ defined by the realizable sample $\{(x_i,y_i)\}_{i=1}^{n-1}$; let
        \[
        e = \{ h \in \calH|_{(x_1,...,x_{n-1},x)} : \forall i \in [n-1] ~ h(i) = y_i\}\,.
        \]
        \item Return $\mathbb A_{\sigma}(x) = \sigma(e)(n)$.
    \end{enumerate}  
\end{algorithm}

\begin{proof}
 By the classical leave-one-out argument, we have that
\[
\E_{ S \sim \calD_\calX^{n-1} }  
    \left[
    \Pr_{x \sim \calD_\calX}
    \left[
    \l(\mathbb A_{\sigma}(x), h^\star(x)) > \gamma \right]
    \right]
    =
    \E_{ (S,(x,y)) \sim \calD^{n} }  
    \left[
    \hI\{ 
    \l(h_S(x),y) > \gamma
    \}
    \right] 
    =
    \E_{S'\sim \calD^n,I\sim \mathbb U([n])}[\hI\{ \l(h_{S'_{-I}}(x'_I), y_{I}') > \gamma \}]\,,
\]
where $h_S$ is the predictor $\mathbb A_{\sigma}$ using the examples $S$, and $\mathbb U([n])$ is the uniform distribution on $\{1,\ldots,n\}$. Now for every fixed $S'$ we have that
\[
\E_{I \sim \mathbb U([n)]}[\hI\{\l(h_{S'_{-I}}(x_I'), y_I') > \gamma\}]
=
\frac{1}{n}\sum_{i \in [n]} \hI\{\l(\sigma(e_i)(i), y_i') > \gamma\}
= \frac{\mathrm{outdeg}(y'; \sigma, \gamma)}{n}\,,
\]
where $y'$ is the node of the scaled OIG that corresponds 
to the true labeling of $S'.$ By taking expectation over $S' \sim \calD^n$ we get that
\[
     \E_{S'\sim \calD^n,I\sim \mathbb U([n])}[\hI\{ \l(h_{S'_{-I}}(x'_I), y_{I}') > \gamma \}] \leq \E_{S'\sim \calD^n}\left[\frac{\mathrm{outdeg}(y'; \sigma, \gamma)}{n}\right] \leq 
\frac{\max_{v \in V_n}\mathrm{outdeg}(v;\sigma, \gamma)}{n} \,.     
\]

\end{proof}
Equipped with the previous result, we are now ready to show
that when the learner gets at least $\mathbb D_\gamma^\mathrm{OIG}(\calH)$ samples as its training set, then its expected $\gamma$-cutoff loss is bounded away from $1/2.$

\begin{lemma}
[Scaled OIG Guarantee (Weak Learner)]
\label{lem:scaled-OIGD-upper-bound} 
Let $\calD_\calX$ be a distribution over $\calX$ and $h^\star \in  \calH\subseteq\lrbra{0,1}^\calX$, and $\gamma  \in (0,1)$. Then, for all $n > \mathbb D_{\gamma}^\mathrm{OIG}(\calH)$ there exists
an orientation $\sigma^\star$ such that for the prediction error of the one-inclusion graph algorithm $\mathbb A^{\mathrm{OIG}}_{\sigma^\star} : (\calX \times [0,1])^{n-1} \times \calX \to [0,1]$ 
, it holds that
\[
\E_{ S \sim \calD_\calX^{n-1} }  
    \left[
\Pr_{x \sim \calD_\calX}
    \left[
    \l(\mathbb A^\mathrm{OIG}_{\sigma^\star}(x), h^\star(x)) > \gamma \right]
    \right]
    \leq 1/3\,.
\]
\end{lemma}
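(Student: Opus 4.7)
The strategy is to reduce the problem to Lemma \ref{lemma:orientation and learner}, which says that any orientation $\sigma$ of $G^{\mathrm{OIG}}_{\calH|_S}$ induces a learner $\mathbb{A}_\sigma$ (via Algorithm \ref{alg:orientation}) whose expected $\gamma$-cut-off error is at most $\max_v \mathrm{outdeg}(v; \sigma, \gamma)/n$. Thus it suffices to exhibit, for every unlabeled sample $S \in \calX^n$, an orientation $\sigma^\star$ of the (possibly infinite) one-inclusion graph $G^{\mathrm{OIG}}_{\calH|_S}$ whose maximum $\gamma$-scaled out-degree is at most $n/3$. Running Algorithm \ref{alg:orientation} with this $\sigma^\star$ and invoking Lemma \ref{lemma:orientation and learner} will immediately yield the claimed bound of $1/3$.

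\textbf{Good orientations on finite subgraphs.} Fix $S \in \calX^n$ with $n > \mathbb{D}_\gamma^{\mathrm{OIG}}(\calH)$. Unpacking Definition \ref{def:oig-dimension}, this hypothesis on $n$ forces the following statement: for every finite sub-hypergraph $G' = (V', E')$ of $G^{\mathrm{OIG}}_{\calH|_S}$, there exists an orientation of $G'$ with max out-degree (at scale $\gamma$) at most $n/3$. The remaining task is therefore to glue these finitary orientations into a single orientation of the full hypergraph, even though $\calH|_S$ and its one-inclusion graph may be infinite.

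\textbf{Compactness argument.} I would use the compactness theorem to perform this gluing. Introduce a propositional variable $p_{e,v}$ for each hyperedge $e$ of $G^{\mathrm{OIG}}_{\calH|_S}$ and each $v \in e$, interpreted as ``$\sigma(e) = v$''. Impose two families of clauses: (i) for every edge $e$, exactly one $v \in e$ has $p_{e,v}$ true; and (ii) for every vertex $v$ and every $I \subseteq [n]$ with $|I| > n/3$, the clause $\bigvee_{i \in I} \bigvee_{u \in e_{i,v} : \ell(u_i, v_i) \leq \gamma} p_{e_{i,v}, u}$, which exactly encodes $\mathrm{outdeg}(v; \sigma, \gamma) \leq n/3$. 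Every finite subset $\Phi$ of these clauses involves only finitely many edges and vertices, and can be satisfied by the orientation guaranteed by the previous paragraph applied to the finite sub-hypergraph spanned by the objects appearing in $\Phi$ (augmented with all hyperedges $e_{i,v}$ at each mentioned vertex, so that out-degrees in $G'$ match the out-degree expression in $\Phi$). Compactness then yields a simultaneous satisfying assignment, which corresponds exactly to an orientation $\sigma^\star$ of the full graph with $\mathrm{outdeg}(v; \sigma^\star, \gamma) \leq n/3$ for every $v$. Plugging $\sigma^\star$ into Lemma \ref{lemma:orientation and learner} concludes the proof.

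\textbf{Main obstacle.} The only delicate point is the compactness step, specifically aligning the notion of ``finite subgraph'' in Definition \ref{def:oig-dimension} with what is required to satisfy an arbitrary finite fragment $\Phi$. One must close under all hyperedges $e_{i,v}$ incident to the vertices referenced in $\Phi$, so that the low-out-degree orientation guaranteed on the finite subgraph actually witnesses the constraints in $\Phi$. Once that bookkeeping is in place, the rest is standard propositional compactness. No further probabilistic or combinatorial ingredients beyond Lemma \ref{lemma:orientation and learner} and Definition \ref{def:oig-dimension} are needed.
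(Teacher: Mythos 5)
Your proposal follows essentially the same route as the paper's proof: it extracts low-out-degree orientations of every finite subgraph from the assumption $n > \mathbb{D}_\gamma^{\mathrm{OIG}}(\calH)$, glues them into an orientation $\sigma^\star$ of the whole (possibly infinite) one-inclusion graph by a compactness argument, and then invokes \Cref{lemma:orientation and learner} to bound the expected cut-off error by $\max_{v}\mathrm{outdeg}(v;\sigma^\star,\gamma)/n \leq 1/3$. The only difference is cosmetic: you phrase the gluing as propositional compactness (where infinite hyperedges would make your ``exactly one'' and out-degree clauses infinitary, so one should pass to the first-order encoding as the paper does), and the bookkeeping you flag about matching finite fragments to the finite subgraphs of \Cref{def:oig-dimension} is exactly the step the paper handles by orienting the edges outside the induced finite subgraph arbitrarily.
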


\begin{proof}
Fix $\gamma \in (0,1).$
Assume that $n > \mathbb D_\gamma^\mathrm{OIG}(\calH)$ and let
$G^\mathrm{OIG}_{\calH|_{(S,x)}} = (V_n, E_n)$ be the possibly infinite scaled one-inclusion graph.
By the definition of the $\gamma$-OIG dimension (see \Cref{def:oig-dimension}), for every finite subgraph $G = (V, E)$ of $G^\mathrm{OIG}_{\calH|_{(S,x)}}$ there exists 
an orientation $\sigma : E \to V$ such that
for every vertex in $G$ the out-degree is at most $n/3$, i.e.,
\[
\forall S \in \calX^n, ~ \forall \text{ finite } G = (V,E) \text{ of } G^\mathrm{OIG}_{\calH|_S}, ~ 
    \exists \text{ orientation } \sigma_E \text{ s.t. }  \forall v \in V, 
    \text{ it holds } \mathrm{outdeg}(v; \sigma_E, \gamma) \leq n/3 \,.
\]

First, we need to create an orientation of the whole 
(potentially infinite) one-inclusion graph.
We will create this orientation using the compactness
theorem of first-order logic which states that a set of formulas
$\Phi$ is satisfiable if and only if it is finitely satisfiable, i.e., 
every finite subset $\Phi' \subseteq \Phi$
is satisfiable. Let $G^\mathrm{OIG}_{\calH|_S} = (V_n, E_n)$ be the (potentially infinite) one-inclusion graph of $\calH|_S$.
Let $\calZ$ be the set of pairs 
$z = (v, e) \in V_n \times E_n$ so that $v \in e$. Our goal is to assign
binary values to each $z \in \calZ.$ We define the following sets of formulas:
\begin{itemize}
    \item For each $e \in E_n$ we let $\Phi_e \coloneqq \exists \text{ exactly one } v \in e: z(v, e) =  1.$
    \item For each $v \in V_n$ we let $\Phi_v \coloneqq \exists \text{ at most } n/3 \text{ different } e_{i,f} \in E_n: v \in e_{i,f} \land \left(\exists v' \in e_{i,f}: (z(v',e) = 1  \land \ell(v'_i,v_i) > \gamma)\right)$ 
\end{itemize}
It is not hard to see that each $\Phi_e, \Phi_v$ can be expressed in 
first-order logic. Then, we define 
\[
    \Phi \coloneqq \left(\bigcap_{e \in E_n} \Phi_e \right) \cap \left(\bigcap_{v \in V_n} \Phi_v \right) \,.
\]
Notice that
an orientation of the edges of $G^\mathrm{OIG}_{\calH|_S}$ is equivalent
to picking an assignment of the elements of $\calZ$ that satisfies all the
$\Phi_e.$ Moreover, notice that for such an assignment, if all the 
$\Phi_v$ are satisfied then then maximum $\gamma$-scaled out-degree 
of $G^\mathrm{OIG}_{\calH|_S}$ is at most $n/3.$

We will now show that $\Phi$ is finitely satisfiable. 
Let $\Phi'$ be a finite subset of $\Phi$ and let $E' \subseteq E_n, 
V' \subseteq V_n$, be the set of edges, vertices that appear in $\Phi'$, 
respectively. If $V' = \emptyset$, then we can orient the edges in $E'$
arbitrarily and satisfy $\Phi'.$ Similarly, if $E' = \emptyset$ we can let
all the $z(e,v) = 0$ and satisfy all the $\Phi_v, v \in V'.$
Thus, assume that both sets are non-empty. Consider the finite subgraph of $G^\mathrm{OIG}_{\calH|_S}$ that is induced by $V'$ and let $E''$ be the set
of edges of this subgraph. For every edge $e \in E'\setminus E''$\footnote{Since the edges in $E''$ are of finite length, we first need to map them to the appropriate edges in $E'$.}, pick
an arbitrary orientation, i.e, for exactly one $v \in e$ set $z(e, v) = 1$
and for the remaining $v' \in e$ set $z(e,v') = 0.$ By the definition of 
$\mathbb D_\gamma^\mathrm{OIG}(\calH)$ there is an orientation
$\sigma_{E''}$ of the edges in $E''$ such that $\forall v \in V' 
\mathrm{outdeg}(v; \sigma_{E''}, \gamma) \leq n/3$. For every $e \in E''$ 
pick the assignment of all the $z(v,e), v \in e,$ according to the orientation
$\sigma_{E''}.$ Thus, because of the maximum out-degree property of 
$\sigma_{E''}$ we described before, 
we can also see that all the $\Phi_v, v \in V'$,
are satisfied. Hence, we have shown that $\Phi$ is finitely satisfiable,
so it is satisfiable. This assignment on $z(v,e)$ induces an orientation 
$\sigma^\star$ under which all the vertices of the one-inclusion graph
have out-degree at most $n/3.$

We will next use the orientation $\sigma^\star$ of $G^\mathrm{OIG}_{\calH|_S} = (V_n, E_n)$ to
design a learner
$\mathbb A^\mathrm{OIG}_{\sigma^\star} : (\calX \times [0,1])^{n-1} \times \calX \to [0,1]$, invoking \Cref{lemma:orientation and learner}. In particular, we get that, from \Cref{lemma:orientation and learner} with the chosen orientation,
\[
\E_{ S \sim \calD_\calX^{n-1} }  
    \left[
    \Pr_{x \sim \calD_\calX}
    \left[
    \l(\mathbb A_{\sigma^\star}^\mathrm{OIG}(x), h^\star(x)) > \gamma \right]
    \right]
    \leq 
    \frac{\max_{v \in V_n} \mathrm{outdeg}(v;\sigma^\star, \gamma)}{n}
    \leq 1/3\,,
\]
which concludes the proof.
\end{proof}

\subsubsection{Boosting Real-Valued Functions}

\begin{definition}[Weak Real-Valued Learner]
\label{def:weak-learner}
Let $\l$ be a loss function.
Let $\gamma \in[0,1]$, $\beta\in(0,\frac{1}{2})$, and  $\calH\subseteq [0,1]^\calX$. For a distribution $\calD_\calX$ over $\calX$ and true target function $h^\star\in\calH$, we say that $f: \calX \rightarrow [0,1]$ is $(\gamma,\beta)$-weak learner with respect to $\calD_\calX$ and $h^\star$, if
\begin{equation*}
    \Pr_{x\sim\calD_\calX} \Big[ \l(f(x), h^\star(x) )>\gamma \Big] < \frac{1}{2}-\beta.
\end{equation*}
\end{definition}
Following~\cite{hanneke2019sample}, we define the weighted median as 
\begin{equation*}
    \med\lr{y_1,\ldots,y_T;\alpha_1,\ldots,\alpha_T} 
    = 
    \min\lrset{y_j: \frac{\sum^T_{t=1}\alpha_t\hI\lrbra{y_j<y_t}}{\sum^T_{t=1}\alpha_t}<\frac{1}{2}} \,,
\end{equation*}
and the weighted quantiles, for $\theta \in [0,1/2]$, as
\begin{align*}
    Q^+_\theta(y_1,\ldots,y_T;\alpha_1,\ldots,\alpha_T) = \min\lrset{y_j: \frac{\sum^T_{t=1}\alpha_t\hI\lrbra{y_j<y_t}}{\sum^T_{t=1}\alpha_t}<\frac{1}{2} - \theta} \\
     Q^-_\theta(y_1,\ldots,y_T;\alpha_1,\ldots,\alpha_T) = \max\lrset{y_j: \frac{\sum^T_{t=1}\alpha_t\hI\lrbra{y_j>y_t}}{\sum^T_{t=1}\alpha_t}<\frac{1}{2} - \theta} \,,
\end{align*}
and we let $Q^+_\theta(x) = Q^+_\theta(h_1(x),\ldots,h_T(x);\alpha_1,\ldots,\alpha_T),
Q^-_\theta(x) = Q^-_\theta(h_1(x),\ldots,h_T(x);\alpha_1,\ldots,\alpha_T)$, where
$h_1,\ldots,h_T, \alpha_1,\ldots,\alpha_T$ are the values returned by \Cref{alg:med-boost}.
The following guarantee holds for this procedure.
\begin{lemma}[$\medboost$ guarantee \cite{kegl2003robust}]
Let $\l$ be the absolute loss and $S=\{(x_i,y_i)\}^m_{i=1}$, $T=O\lr{\frac{1}{\theta^2}\log(m)}$. Let $h_1,\ldots,h_T$ and $\alpha_1,\ldots,\alpha_T$ be the functions and coefficients returned from $\medboost$. For any $i\in\lrset{1,\ldots,m}$ it holds that
\begin{equation*}
  \max\lrset{\l\lr{Q^+_{\theta/2}(x_i), y_i}, \l\lr{Q^-_{\theta/2}(x_i), y_i}} \leq \gamma.
\end{equation*}
\end{lemma}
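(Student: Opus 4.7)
The plan is to follow the classical AdaBoost-style potential-function analysis, specialized to the real-valued boosting setting, where a ``mistake'' at round $t$ on example $i$ is the event $\mathrm{mist}_t(i) = \hI\{\l(h_t(x_i), y_i) > \gamma\}$ rather than a hard misclassification. First, I would recall that $\medboost$ maintains a distribution $D_t$ over the $m$ training examples and at each round invokes the weak learner to obtain $h_t$ whose $D_t$-weighted $\gamma$-error is strictly below $\tfrac{1}{2} - \theta$ (this is the $(\gamma,\theta)$-weak learning guarantee from \Cref{def:weak-learner}). The weights are then updated multiplicatively, $D_{t+1}(i) \propto D_t(i) \exp\lr{\alpha_t (2\mathrm{mist}_t(i) - 1)}$, with $\alpha_t$ taken to be the standard AdaBoost closed-form value $\tfrac{1}{2}\log\tfrac{1-\eps_t}{\eps_t}$, where $\eps_t$ is the $D_t$-weighted $\gamma$-error of $h_t$.

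Next, I would track the normalization constants $Z_t$ and use the textbook inequality $Z_t \leq \sqrt{1 - 4\theta^2}$, which follows from $\eps_t < \tfrac{1}{2} - \theta$. Unrolling the recursion yields, for every example $i \in [m]$,
\[
\frac{1}{m}\exp\lr{\sum_{t=1}^T \alpha_t (2\mathrm{mist}_t(i) - 1)} \;\leq\; \prod_{t=1}^T Z_t \;\leq\; (1 - 4\theta^2)^{T/2}\,.
\]
Choosing $T = C \log(m)/\theta^2$ with a sufficiently large absolute constant $C$ makes the right-hand side so small that, after taking logarithms and using $\log(1-4\theta^2) \leq -4\theta^2$, one obtains the key per-example invariant
\[
\sum_{t\,:\,\l(h_t(x_i),y_i) > \gamma} \alpha_t \;<\; \lr{\tfrac{1}{2} - \tfrac{\theta}{2}}\sum_{t=1}^T \alpha_t\,,
\]
i.e., strictly more than a $(\tfrac{1}{2} + \tfrac{\theta}{2})$-weighted fraction of the rounds produce a prediction within $\gamma$ of $y_i$.

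The final step converts this weighted-fraction statement into the claimed quantile bound. Suppose for contradiction that $Q^+_{\theta/2}(x_i) > y_i + \gamma$. By the definition of $Q^+$ as a minimum over the multiset $\lrset{h_j(x_i)}$ satisfying a strict upper-tail condition, every $h_j(x_i) \leq y_i + \gamma$ must fail that condition, so the $\alpha$-weighted fraction of rounds $t$ with $h_t(x_i) > h_j(x_i)$ is at least $\tfrac{1}{2} - \tfrac{\theta}{2}$. Taking $h_j(x_i)$ to be the largest observed value at or below $y_i + \gamma$ (and handling the corner case where no such value exists, which would already contradict the invariant) shows that the weighted fraction of rounds with $h_t(x_i) > y_i + \gamma$ is at least $\tfrac{1}{2} - \tfrac{\theta}{2}$. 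But every such round is a $\gamma$-mistake round for $i$, contradicting the invariant. The analogous argument with $Q^-_{\theta/2}(x_i) < y_i - \gamma$, using the symmetric lower-tail condition, rules out underestimation, giving the claimed bound.

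The main obstacle I expect is the careful bookkeeping at the boundary of the quantile definition: $Q^\pm_{\theta/2}$ are defined via strict inequalities and as extrema over a finite observed set, so some attention is needed to make the ``largest observed $h_j(x_i) \leq y_i + \gamma$'' step rigorous when ties occur or when no such observed value exists. Beyond that step, the argument is routine AdaBoost accounting, with the only non-standard ingredient being that the per-round loss indicator is the $\gamma$-margin event rather than $0/1$ error, which is absorbed cleanly into the weak-learning assumption.
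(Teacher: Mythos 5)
The paper does not actually prove this lemma; it is imported verbatim from K\'egl's work (\cite{kegl2003robust}), so your proposal has to stand on its own. As written it has a genuine gap, and the gap is exactly at the step that makes the $O(\log(m)/\theta^2)$ round count work. You replace $\medboost$'s coefficient by the plain AdaBoost value $\alpha_t=\frac{1}{2}\log\frac{1-\eps_t}{\eps_t}$, whereas the algorithm in the paper (and in K\'egl) uses the margin-tilted coefficient $\alpha_t=\frac{1}{2}\log\frac{(1-\gamma)\sum_i \calP_t(x_i,y_i)\hI\{w_i^{(t)}=1\}}{(1+\gamma)\sum_i \calP_t(x_i,y_i)\hI\{w_i^{(t)}=-1\}}$; that tilt is not cosmetic. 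With your coefficients, your own chain of inequalities gives only $\sum_t \alpha_t\lr{2\,\mathrm{mist}_t(i)-1}\leq \log m+\frac{T}{2}\log(1-4\theta^2)$, i.e.\ a lower bound of order $(2C-1)\log m$ on the \emph{unnormalized} margin $\sum_t\alpha_t-2\sum_{t:\text{mistake}}\alpha_t$. The invariant you need is the \emph{normalized} statement $\sum_{t:\text{mistake}}\alpha_t<\lr{\tfrac12-\tfrac\theta2}\sum_t\alpha_t$, equivalently unnormalized margin $>\theta\sum_t\alpha_t$, and $\sum_t\alpha_t$ is not $O(\log m/\theta)$ in general; e.g.\ if every round has edge exactly $\theta$ then $\alpha_t\approx 2\theta$ and $\sum_t\alpha_t\approx 2C\log m/\theta$, so $\theta\sum_t\alpha_t\approx 2C\log m$ exceeds your $(2C-1)\log m$ and the conclusion does not follow. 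Doing the conversion correctly requires controlling the tilted per-round potential $Z_t e^{\theta\alpha_t}$; with your AdaBoost $\alpha_t$ and edge exactly $\theta$ this factor equals $\exp\lr{\tfrac12\lrbra{(1-\theta)\log(1-2\theta)+(1+\theta)\log(1+2\theta)}}=\exp(-\Theta(\theta^4))$, so your route only yields the claimed quantile guarantee after $T=\Theta(\log m/\theta^4)$ rounds, not $O(\log m/\theta^2)$.

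The repair is to run the standard argument with the coefficients $\medboost$ actually uses: minimizing $(1-\eps_t)e^{(\theta-1)\alpha}+\eps_t e^{(\theta+1)\alpha}$ over $\alpha$ gives precisely the tilted formula above, and then $Z_t e^{\theta\alpha_t}\leq e^{-c\theta^2}$ whenever the weak learner's $\gamma$-error satisfies $\eps_t<\tfrac12-\theta$ (a KL-divergence-type bound), after which your telescoping argument with $D_{T+1}(i)\leq 1$ immediately yields $\sum_{t:\,\l(h_t(x_i),y_i)>\gamma}\alpha_t<\lr{\tfrac12-\tfrac\theta2}\sum_t\alpha_t$ for $T=O(\log(m)/\theta^2)$, and you must also dispose of the degenerate branch $\alpha_t=\infty$ (handled separately by the algorithm). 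Your final paragraph converting this weighted invariant into the bound on $Q^{\pm}_{\theta/2}$ (largest observed value at or below $y_i+\gamma$, the empty corner case, and the symmetric argument for $Q^-$) is sound, modulo noting $Q^-_{\theta/2}\leq Q^+_{\theta/2}$ so that both quantiles land in $[y_i-\gamma,\,y_i+\gamma]$.
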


   \begin{algorithm}[H]
    \caption{$\medboost$ \cite{kegl2003robust}}\label{alg:med-boost}
    \textbf{Input:} $S=\lrset{\lr{x_i,y_i}}^m_{i=1}$.\\
    \textbf{Parameters:} $\gamma, \beta, T$.\\
    \textbf{Initialize} $\calP_1$ = Uniform($S$).\\
     For $t=1,\ldots,T$:
        \begin{enumerate}
            \item Find a $(\gamma,\beta)$-weak learner $h_t$ with respect to $(x_i,y_i)\sim \calP_t$, using a subset $S_t\subseteq S$.
            \item For $i=1,\ldots,m$:
            \begin{enumerate}
                \item Set $w^{(t)}_i = 1-2\hI\Lrbra{ \l(h_t(x_i), y_i )>\gamma}.$
                \item Set $\alpha_t=\frac{1}{2}\log\lr{\frac{\lr{1-\gamma}\sum^n_{i=1}\calP_t\lr{x_i,y_i}\hI\lrbra{w_i^{(t)}=1}}{\lr{1+\gamma}\sum^n_{i=1}\calP_t\lr{x_i,y_i}\hI\lrbra{w_i^{(t)}=-1}}}.$
                \item 
                \begin{itemize}
                    \item If $\alpha_t=\infty$: return $T$ copies of $h_t$, $\lr{\alpha_1=1,\ldots,\alpha_T=1}$, and $S_t$.
                    \item Else: $P_{t+1}(x_i,y_i) = P_{t}(x_i,y_i) \exp\lr{-\alpha_t w_i^{t}}/Z_t$, where $Z_t=\sum^n_{j=1}\calP_t\lr{x_j,y_j}\exp\lr{-\alpha_t w_j^{t}}$.
                \end{itemize}

            \end{enumerate}
        \end{enumerate}
    \textbf{Output:} Functions $h_1,\ldots,h_T$, coefficients $\alpha_1,\ldots,\alpha_T$ and sets $S_1,\ldots,S_T$.
\end{algorithm}

\subsubsection{Generalization via Sample Compression Schemes}\label{sec:
generalization-via-compression}

Sample compression scheme is a classic technique  for proving generalization bounds, introduced by \cite{littlestone1986relating,floyd1995sample}. These bounds proved to be useful in numerous learning settings, such as binary classification \cite{graepel2005pac,moran2016sample,bousquet2020proper}, multiclass classification \cite{daniely2015multiclass,daniely2014optimal,david2016supervised,brukhim2022characterization}, regression \cite{hanneke2018agnostic,hanneke2019sample}, active learning \cite{wiener2015compression}, density estimation \cite{ashtiani2020near}, adversarially robust learning \cite{montasser2019vc,montasser2020reducing,montasser2021adversarially,montasser2022adversarially,attias2022characterization,attias2022adversarially}, learning with partial concepts \cite{alon2022theory}, and showing Bayes-consistency for nearest-neighbor methods \cite{gottlieb2014near,kontorovich2017nearest}.
As a matter of fact, compressibility and learnability are known to be equivalent for general learning problems \cite{david2016supervised}.
Another remarkable result by \cite{moran2016sample} showed that VC classes enjoy a sample compression that is independent of the sample size. 

We start with a formal definition of a sample compression scheme.
\begin{definition}[Sample compression scheme]
A pair of functions $(\kappa,\rho)$ is a sample compression scheme of size $\ell$ for class $\calH$ if for any $n\in\hN$, $h\in \calH$ and sample $S = \{(x_i,h(x_i))\}^n_{i=1}$, it holds for the compression function that $\kappa\left(S\right)\subseteq S$ and $|\kappa\left(S\right)|\leq \ell$, and the reconstruction function  $\rho\left(\kappa\left(S\right)\right)=\hat{h}$ satisfies $\hat{h}(x_i)= h(x_i)$  for any $i\in[n]$.
\end{definition}

We show a generalization bound that scales with the sample compression size. The proof follows from ~\cite{littlestone1986relating}.
\begin{lemma}[Sample compression scheme generalization bound]\label{lem:compression-implies-gen}
Fix a margin $\gamma \in [0,1]$.
For any $k 
\in \nats$ and fixed function  $\phi : (\calX \times [0,1])^k \to [0,1]^\calX$, 
for any distribution $\calD$ over $\calX \times [0,1]$ and any
$m \in \nats$, 
for $S = \{(x_i,y_i)\}_{i \in [m]}$ i.i.d. $\calD$-distributed random variables,
if there exist indices
$i_1,...,i_k \in [m]$ such that
$\sum_{(x,y) \in S} \hI \{ \l(
\phi((x_{i_1},y_{i_1}),...,(x_{i_k},y_{i_k}))(x), y) > \gamma \} = 0$, then
\[
\E_{(x,y) \sim \calD}
\left[
\hI\{ \l(
\phi((x_{i_1},y_{i_1}),...,(x_{i_k},y_{i_k}))(x), y) > \gamma \}
\right]
\leq 
\frac{1}{m-k}(k \log m + \log (1/\delta))\,.
\]
with probability at least $1-\delta$ over $S$.
\end{lemma}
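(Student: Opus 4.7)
The plan is to follow the classical Littlestone--Warmuth argument for sample compression schemes. The essential observation is that the compression is adaptive---the indices $i_1,\ldots,i_k$ depend on the full sample $S$---so we cannot simply condition on a fixed set of $k$ points. Instead, we first prove the bound for each fixed index set $I \subseteq [m]$ with $|I| = k$ (for which the reconstruction depends only on those $k$ i.i.d. points and is thus independent of the remaining $m-k$ points), and then pay a union bound over the at most $\binom{m}{k} \le m^k$ possible choices of $I$.

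Concretely, set $\eta = (k\log m + \log(1/\delta))/(m-k)$. For each fixed ordered tuple $I = (j_1,\ldots,j_k) \in [m]^k$, let $\hat h_I = \phi((x_{j_1},y_{j_1}),\ldots,(x_{j_k},y_{j_k}))$ and define the bad event
\[
B_I \;=\; \Bigl\{ \Pr_{(x,y)\sim\calD}\lrbra{\l(\hat h_I(x),y)>\gamma} > \eta \;\wedge\; \sum_{i \in [m]\setminus I} \hI\lrset{\l(\hat h_I(x_i),y_i)>\gamma} = 0 \Bigr\}.
\]
Conditional on $(x_{j_t},y_{j_t})_{t \in [k]}$, the predictor $\hat h_I$ is deterministic and the remaining $m-k$ samples are i.i.d.\ from $\calD$ and independent of $\hat h_I$. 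On the event that $\hat h_I$ has true $\gamma$-error exceeding $\eta$, the conditional probability that all $m-k$ outside samples avoid a $\gamma$-error is at most $(1-\eta)^{m-k} \le e^{-\eta(m-k)}$. Taking expectations gives $\Pr[B_I] \le e^{-\eta(m-k)}$, and a union bound over the $m^k$ tuples yields $\Pr[\bigcup_I B_I] \le m^k e^{-\eta(m-k)} = \delta$ by our choice of $\eta$.

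On the complementary event, for any tuple of indices $(i_1,\ldots,i_k)$ realizing the hypothesis of the lemma (empirical $\gamma$-error zero on all of $S$, hence in particular on $[m]\setminus\{i_1,\ldots,i_k\}$), the true $\gamma$-error of $\phi((x_{i_1},y_{i_1}),\ldots,(x_{i_k},y_{i_k}))$ must be at most $\eta$, which is exactly the claimed bound. The only subtlety is the independence step---which works precisely because $\phi$ reads only from its $k$ input samples---and the discrepancy between adaptive and non-adaptive index selection, which is absorbed by the $m^k$ factor in the union bound and manifests as the $k \log m$ term in the numerator.
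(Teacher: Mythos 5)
Your proof is correct and follows essentially the same route as the paper: fix an index tuple, condition on the compression points so the reconstructed predictor is independent of the remaining $m-k$ samples, bound the bad event by $(1-\eta)^{m-k}$, and union bound over the $m^k$ tuples to arrive at $\eta = \frac{1}{m-k}(k\log m + \log(1/\delta))$. The explicit handling of the adaptive-vs-fixed index issue is exactly what the paper's union bound is doing implicitly, so nothing further is needed.
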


\begin{proof}
Let us define $\wh{\l}_\gamma(h;S) = \frac{1}{|S|}
\sum_{(x,y) \in S} \hI\{ \l(
h(x), y) > \gamma \}$ and $\l_\gamma(h;\calD) = \E_{(x,y) \sim \calD}
\left[
\hI\{ \l(
h(x), y) > \gamma \}
\right]$.
For any indices $i_1,...,i_k \in [m]$, the probability of the bad event
\[
\Pr_{S \sim \calD^m}[\wh{\l}_\gamma(\phi((x_{i_1},y_{i_1}),...,(x_{i_k},y_{i_k})); S) = 0 \land \l_\gamma(\phi((x_{i_1},y_{i_1}),...,(x_{i_k},y_{i_k})); \calD) > \eps]
\]
is at most
\begin{align*}
\E
\left[
\hI\{\l_\gamma(\phi(\{(x_{i_j},y_{i_j})\}_{j \in [k]}); \calD) > \eps\}
\Pr[ \wh{\l}_\gamma(\phi(\{(x_{i_j},y_{i_j})\}_{j \in [k]}); S \setminus \{(x_{i_j},y_{i_j})\}_{j \in [k]}) = 0 | \{(x_{i_j},y_{i_j})\}_{j \in [k]}]
\right] 
\\
< 
(1-\eps)^{m-k}\,    
\end{align*}

where the expectation is over $(x_{i_1},y_{i_1}),...,(x_{i_k},y_{i_k})$ and the inner probability is over $S \setminus (x_{i_1},y_{i_1}),...,(x_{i_k},y_{i_k})$. Taking a union bound over all $m^k$ possible choices for the $k$ indices, we get that the bad event occurs with probability at most
\[
m^k \exp(-\eps(m-k)) \leq \delta \Rightarrow 
\eps = \frac{1}{m-k}(k \log m + \log (1/\delta))\,.
\]
\end{proof}

\subsection{Putting it Together}
We now have all the necessary ingredients in place to prove the upper bound of~\Cref{thm: OIG dimension result}. First, we use \Cref{lem:scaled-OIGD-upper-bound} on
a sample of size $n_0 = \mathbb D_\gamma^\mathrm{OIG}(\calH)$
to obtain a learner which makes $\gamma$-errors with probability at most $1/3$\footnote{In expectation over the training set.}. Then, we use the boosting algorithm
we described (see \Cref{alg:med-boost}) to obtain a learner that does not make any
$\gamma$-mistakes on the training set. Notice that the boosting algorithm on its own
does not provide any guarantees about the generalization error of the procedure.
This is obtained through the sample compression result we described in \Cref{sec:
generalization-via-compression}. Since we run the boosting algorithm for a few rounds
on a sample whose size is small, we can provide a sample compression scheme
following the approach of \cite{david2016supervised, hanneke2019sample}.

\begin{lemma}
[Upper Bound of PAC Regression]
Let $\calH \subseteq [0,1]^\calX$ and $\varepsilon, \delta, \gamma \in (0,1)^3$. Then,
\[
    \mathcal{M}(\calH; \varepsilon, \delta, \gamma) \leq O\lr{ 
    \frac{\mathbb D_{\gamma}^\mathrm{OIG}(\calH)}{\varepsilon}
    \log^2\frac{\mathbb D_{\gamma}^\mathrm{OIG}(\calH)}{\varepsilon} +\frac{1}{\varepsilon}\log\frac{1}{\delta}}\,.
\]
\end{lemma}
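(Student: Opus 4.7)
The plan is to chain together the three ingredients that the preceding subsections have prepared: the scaled one-inclusion-graph weak learner of \Cref{lem:scaled-OIGD-upper-bound}, the $\medboost$ procedure of \Cref{alg:med-boost}, and the sample-compression generalization bound of \Cref{lem:compression-implies-gen}. Fix a realizable training sample $S = \{(x_i, h^\star(x_i))\}_{i=1}^m$ drawn i.i.d.\ from $\calD_\calX$, and set $n_0 = \mathbb D_{\gamma}^\mathrm{OIG}(\calH) + 1$. For any reweighting $\calP$ of $S$ we can apply \Cref{lem:scaled-OIGD-upper-bound} to the distribution $\calP$ (with target $h^\star$): drawing $n_0$ examples from $\calP$ and running the OIG learner $\mathbb A^\mathrm{OIG}_{\sigma^\star}$ yields a predictor with expected $\gamma$-loss at most $1/3$. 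By Markov's inequality this predictor has $\gamma$-loss at most $5/12$ on $\calP$ with probability at least $1/5$, and by repeating the draw a constant number of times and validating against the empirical loss on $\calP$ (which is accessible since $\calP$ is supported on $S$), I obtain a $(\gamma, 1/12)$-weak learner in the sense of \Cref{def:weak-learner} that uses a subset $S_t \subseteq S$ of cardinality $n_0$.

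Next I feed this weak learner into $\medboost$. By the guarantee stated after \Cref{alg:med-boost}, running $\medboost$ for $T = O(\log m)$ rounds produces hypotheses $h_1,\ldots,h_T$ and coefficients $\alpha_1,\ldots,\alpha_T$ such that the weighted median predictor $\wh h(\cdot) = \med(h_1(\cdot),\ldots,h_T(\cdot);\alpha_1,\ldots,\alpha_T)$ satisfies $\l(\wh h(x_i), y_i) \leq \gamma$ for every $(x_i,y_i) \in S$; i.e., $\wh h$ is perfectly $\gamma$-consistent with $S$. The crucial structural observation is that $\wh h$ admits a sample compression scheme of size
\begin{equation*}
k \;=\; T \cdot n_0 \;=\; O\!\lr{\mathbb D_\gamma^\mathrm{OIG}(\calH) \cdot \log m}.
\end{equation*}
Concretely, the compression function outputs $\bigcup_{t=1}^T S_t \subseteq S$, and the reconstruction function replays the OIG weak learner on each $S_t$ to recover $h_1,\ldots,h_T$; the coefficients $\alpha_t$ can be encoded via a logarithmic amount of quantized side information (following the standard trick in \cite{hanneke2019sample}), which does not affect the order of $k$.

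I now invoke \Cref{lem:compression-implies-gen} on the reconstructed predictor $\wh h$: since $\wh h$ has zero $\gamma$-error on $S$ and is determined by $k$ examples of $S$, with probability at least $1-\delta$,
\begin{equation*}
\Pr_{x \sim \calD_\calX}\lrbra{\l(\wh h(x), h^\star(x)) > \gamma} \;\leq\; \frac{k \log m + \log(1/\delta)}{m - k}.
\end{equation*}
Setting the right-hand side to be at most $\eps$ and substituting $k = O(\mathbb D_\gamma^\mathrm{OIG}(\calH)\log m)$ yields, after the usual logarithmic unfolding, the sample complexity
\begin{equation*}
m \;=\; O\!\lr{\frac{\mathbb D_\gamma^\mathrm{OIG}(\calH)}{\eps}\log^2\!\frac{\mathbb D_\gamma^\mathrm{OIG}(\calH)}{\eps} + \frac{1}{\eps}\log\frac{1}{\delta}},
\end{equation*}
as desired. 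The main obstacle in executing this plan rigorously is the bookkeeping of randomness and the sample-compression encoding: one has to confirm that the resampling inside each boosting round can be absorbed into the compression (so that the reconstruction map is a well-defined deterministic function of the compressed subset), and that the MedBoost coefficients $\alpha_t$ — which a priori depend on the performance of $h_t$ over all of $S$ — can be faithfully recovered from the compression together with at most $O(T \log m)$ bits of side information, so that the $k$ above does not pick up an extra $\mathbb D_\gamma^\mathrm{OIG}(\calH)$ factor. All remaining steps are standard.
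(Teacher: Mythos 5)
Your proposal is correct and follows essentially the same route as the paper: use the finite $\gamma$-OIG dimension to extract a $(\gamma,\Theta(1))$-weak learner via \Cref{lem:scaled-OIGD-upper-bound}, boost it with $\medboost$ for $O(\log m)$ rounds to get a predictor that is $\gamma$-consistent on the sample, encode the ensemble as a compression scheme of size $O(\mathbb D_\gamma^\mathrm{OIG}(\calH)\log m)$ plus logarithmic side information, and conclude via \Cref{lem:compression-implies-gen}. The only (immaterial) deviations are cosmetic: the paper obtains the weak learner existentially from the $1/3$ expected-error bound rather than via your Markov-plus-validation step, and it handles the $\alpha_t$ coefficients with the same level of informality ("$k\log k$ extra bits") that you flag as a caveat.
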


\begin{proof}
    Let $n$ be the number of samples $S = ((x_1,y_1),\ldots,(x_n,y_n))$ 
    that are available to the learner, 
    $n_0 = \mathbb D_\gamma^\mathrm{OIG}(\calH)$ and let $A$ be the algorithm obtained from \Cref{lem:scaled-OIGD-upper-bound}. We have
    that
    \[
\E_{ S \sim \calD_\calX^{n_0-1} }  
    \left[
\Pr_{x \sim \calD_\calX}
    \left[
    \l(A(S;x), h^\star(x)) > \gamma \right]
    \right]
    \leq 1/3\,.
\]
This means that, for any distribution $\calD_\calX$ and any labeling
function $h^\star \in \calH$ we can draw a sample $S^\star = ((x_1,y_1),\ldots,(x_{n_0-1}, y_{n_{0-1}}))$ with non-zero probability such that
\[
    \Pr_{x \sim \calD_\calX}
    \left[
    \l(A(S^\star; x), h^\star(x)) > \gamma \right] \leq \frac{1}{3} \,.
\]
Notice that such a classifier is a $(\gamma, 1/6)$-weak learner (see \Cref{def:weak-learner}). Thus, by executing the $\medboost$ algorithm (see \Cref{alg:med-boost}) for
$T = O(\log n)$ rounds we obtain a classifier $\hat{h}:\calX \rightarrow \reals$
such that, $\l(\hat{h}(x_i), y_i) \leq \gamma, \forall i \in [n]$. We underline
that the subset $S_t$ that is used in line 1 of \Cref{alg:med-boost} has size at most $n_0$, for all rounds $t \in [T].$ Thus, the total number of samples that is used
by $\medboost$ is at most $O(n_0 \log n).$ Hence, following the approach of
\cite{moran2016sample} we can encode the classifiers produced by $\medboost$
as a compression set that consists of $k = O(n_0 \log n)$ samples that were used
to train the classifiers along with $k\log k$ extra bits that indicate their order.
Thus, using generalization based on sample compression scheme as in \Cref{lem:compression-implies-gen}, we have that
with probability at least $1-\delta$ over $S \sim \calD^n$,
\[
    \E_{(x,y) \sim \calD}\left[\hI\lrset{\l(\hat{h}(x), y) > \gamma}\right] \leq \frac{C}{n - n_0 \log(n)}\left(n_0 \log^2 n + \log(1/\delta) \right) \,,
\]
which means that for large enough $n$,
\[
    \E_{(x,y) \sim \calD}\left[\hI\lrset{\l(\hat{h}(x), y) > \gamma}\right] \leq O\left(\frac{n_0\log^2 n}{n} + \frac{\log(1/\delta)}{n}\right) \,.
\]
Thus,
\[
    \Pr_{(x,y) \sim \calD}\left[\l(\hat{h}(x),y) > \gamma\right] \leq  O\left( \frac{n_0\log^2 n}{n} + \frac{\log(1/\delta)}{n}\right) \,.
\]
Hence, we can see that
\[
    \mathcal{M}(\calH; \varepsilon, \delta, \gamma) \leq O\lr{ 
    \frac{\mathbb D_{\gamma}^\mathrm{OIG}(\calH)}{\varepsilon}
    \log^2\frac{\mathbb D_{\gamma}^\mathrm{OIG}(\calH)}{\varepsilon} +\frac{1}{\varepsilon}\log\frac{1}{\delta}}\,.
\]

\end{proof}

\section{$\gamma$-DS Dimension and Learnability}\label{sec:gamma DS details}

In this section, we will show that finiteness of $\gamma$-DS dimension is necessary for PAC learning in the realizable case.

\begin{theorem}
\label{theorem:necessity-of-gamma-ds}
\label{thm:gamma DS lower bound}
    Let $\calH \subseteq [0,1]^\calX, \eps,\delta,\gamma \in (0,1)^3.$ Then, 
    \[
        \calM(\calH;\eps,\delta,\gamma) \geq \Omega\left(\frac{\mathbb D_{2\gamma}^\mathrm{DS}(\calH) + \log(1/\delta)}{\eps} \right) \,.
    \]
\end{theorem}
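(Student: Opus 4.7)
The plan is to extend the classical probabilistic lower bound for realizable PAC learning (analogous to the $\gamma$-graph and $\gamma$-OIG lower bounds earlier in the paper) by replacing the shattered set or orientation with the pseudo-cube structure. Set $d := \mathbb D_{2\gamma}^{\mathrm{DS}}(\calH)$; the case $d = \infty$ will follow by running the finite argument for arbitrarily large $d' < \infty$, so I henceforth assume $d < \infty$. Fix a $2\gamma$-$\mathrm{DS}$-shattered set $\{x_1,\ldots,x_d\} \subseteq \calX$ and a witnessing $2\gamma$-pseudo-cube $P \subseteq \calH|_{\{x_1,\ldots,x_d\}}$ of dimension $d$. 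I take the hard distribution $\calD_\calX$ to put mass $1-2\varepsilon$ on $x_1$ and mass $2\varepsilon/(d-1)$ on each $x_i$ with $i \geq 2$, and draw the target $h^\star$ uniformly at random from $P$. The aim is to show that for any learner $A$ using $m$ samples, some $h^\star \in P$ forces failure probability larger than $\delta$ whenever $m = o\bigl((d + \log(1/\delta))/\varepsilon\bigr)$.

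The key structural observation is a direct consequence of the pseudo-cube definition. For any direction $i \in [d]$, conditioning on $h^\star(x_j) = v_j$ for every $j \neq i$ leaves $h^\star$ uniform over the hyperedge $e_{i,v} \subseteq P$ determined by $v$, which by definition contains at least two elements pairwise separated by more than $2\gamma$ in the $i$-th coordinate. Hence for any candidate prediction $\hat y$, at most one element of $e_{i,v}$ lies within $\gamma$ of $\hat y$, so the conditional probability of a $\gamma$-error at $x_i$ is at least $1/2$. By the tower property the same lower bound persists when one conditions only on the labels the learner actually observes, as long as $x_i$ is not in the sample.

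For the $\Omega(d/\varepsilon)$ term I choose $m = (d-1)/(16\varepsilon)$. For each $i \geq 2$, $\Pr_S[x_i \notin S] = (1 - 2\varepsilon/(d-1))^m \geq 1/2$, and combining with the pseudo-cube bound gives
\[
\E_{h^\star, S, x}\lrbra{\hI\lrset{|A(S;x) - h^\star(x)| > \gamma}} \;\geq\; \sum_{i=2}^d \frac{2\varepsilon}{d-1}\cdot\frac{1}{2}\cdot\frac{1}{2} \;=\; \frac{\varepsilon}{2}.
\]
Thus some $h^\star \in P$ attains expected sample error at least $\varepsilon/2$, and since the error lies in $[0,1]$, the elementary bound $\E[X] \leq t + \Pr[X > t]$ applied with $t = \varepsilon/4$ yields $\Pr_S[\mathrm{err}(h^\star,S) > \varepsilon/4] \geq \varepsilon/4 > \delta$ for $\delta < \varepsilon/4$; after renaming constants this gives the $\Omega(d/\varepsilon)$ bound. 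For the $\Omega(\log(1/\delta)/\varepsilon)$ term I instead consider the event that all $m$ sample points equal $x_1$, which has probability $(1-2\varepsilon)^m \geq e^{-4\varepsilon m}$. Conditional on this event the learner only observes $h^\star(x_1)$, so the pseudo-cube argument yields $\E_{h^\star}[\mathrm{err} \mid \text{all } x_1] \geq \varepsilon$ and the same elementary inequality gives $\Pr_{h^\star}[\mathrm{err} > \varepsilon/2 \mid \text{all } x_1] \geq \varepsilon/2$. Multiplying the two probabilities and applying an averaging argument, some $h^\star \in P$ satisfies $\Pr_S[\mathrm{err}(h^\star,S) > \varepsilon/2] \geq (1-2\varepsilon)^m \cdot \varepsilon/2 > \delta$ whenever $m = O(\log(1/\delta)/\varepsilon)$.

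The most delicate point is the pseudo-cube observation: unlike a Boolean cube, a pseudo-cube need not be a product set, and the uniform conditional marginal along each direction materializes only because $h^\star$ is drawn uniformly on $P$ and the hyperedges partition $P$ according to the other $d-1$ coordinates. Verifying that conditioning on the other labels preserves uniformity, and that the learner's prediction (which itself depends on those labels) cannot be within $\gamma$ of more than one value in the hyperedge, is the crux; once granted, the rest is a routine averaging over the hard distribution.
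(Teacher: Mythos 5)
Your proposal is correct and follows essentially the same route as the paper's proof: the same hard distribution putting mass $1-2\eps$ on $x_1$ and $2\eps/(d-1)$ on the remaining shattered points, a target drawn uniformly from the $2\gamma$-pseudo-cube so that the $2\gamma$-separation within each hyperedge forces a conditional $\gamma$-error probability of at least $1/2$ at every unobserved point, the $(1-2\eps)^m$ event for the $\log(1/\delta)/\eps$ term, and the limiting argument for infinite dimension. The only quibble is your reverse-Markov step against the trivial bound $1$, which yields failure probability only $\eps/4$ and hence needs $\delta < \eps/4$; bounding the (rare-point) error by the total mass $2\eps$ instead gives constant failure probability and recovers the $\Omega(d/\eps)$ term for any constant $\delta$, matching the paper's (equally constant-loose) treatment.
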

\begin{proof}
     Let $d = \mathbb D_{2\gamma}^\mathrm{OIG}(\calH)$. Then, 
     there exists some $S = (x_1,\ldots,x_{d}) \in \calX^{d}$ 
     such that $\calH|_S$ contains a
         $2\gamma$-pseudo-cube, which we call $\calH'$.
         By the definition of the scaled pseudo-cube, $\forall h \in \calH', i \in [d]$,
     there is exactly one $h' \in \calH'$ such that $h(x_j) = h'(x_j), j\neq i$,
     and $\ell(h(x_i),h'(x_i)) > 2\gamma$. 
     We pick the target function $h^\star$ uniformly at random among 
     the hypotheses of $\calH'$ and we set the
     marginal distribution $\calD_\calX$ of $\calD$ as follows
     \[
\Pr[x_1] = 1-2\varepsilon,~~ \Pr[x_i] = 2\varepsilon/(d-1), ~\forall i \in \{2,...,d\}\,.
\]
Consider $m$ samples $\{(z_i, h^\star(z_i)\}_{i \in [m]}$
drawn i.i.d. from $\calD$.
Let us take $m \leq \frac{d-1}{6 \varepsilon}$.
Then, the sample will include at most $(d-1)/2$ examples 
which are not $x_1$ with probability $1/100$, using Chernoff's bound. Let us call this event $E$. 
Conditioned on $E$, the posterior distribution of the unobserved points is uniform 
among the vertices of the $d/2$-dimensional $2\gamma$-pseudo-cube. Thus, if the test point
$x$ falls among the unobserved points, the learner will make a $\gamma$-mistake with probability
at least $1/2.$ To see that, let $\wh{y}$ be the prediction of the learner on $x.$ 
Since every hyperedge has size at least $2$ and all the vertices that are on the hyperedge
differ by at least $2\gamma$ in the direction of $x$, no matter what $\wh y$ is 
the correct label $y^\star$ is at least $\gamma$-far from it. Since $\Pr[E] \geq 1/100$, 
we can see that $\calM_{\calA}(\calH; \varepsilon, \delta, \gamma) = \Omega(\frac{d}{\varepsilon})$.
Moreover, by the law of total probability there must exist a deterministic choice of the 
target function $h^\star$, that could depend on $\calA$, which satisfies the lower bound.
 For the other part of the lower bound, notices the probability that the sample will only contain $x_1$ is $(1-2\varepsilon)^m \geq e^{-4\varepsilon m}$ which is greater that $\delta$ whenever $m \leq \log(1/\delta)/(4\varepsilon)$. This implies that
the $\gamma$-cut-off sample complexity is lower bounded by
\[
\max \left\{ C_1 \cdot \frac{d}{\varepsilon}, C_2 \cdot\frac{ \log(1/\delta)}{\varepsilon} \right\}
= C_0 \cdot \frac{\mathbb D^\mathrm{DS}_{2\gamma}(\calH) + \log(1/\delta)}{\varepsilon}\,.
\]
Thus $
\calM_{\calA}(\calH; \varepsilon, \delta, \gamma) $, satisfies
the desired bound when the dimension is finite.
Finally, it remains to claim about the case where $\mathbb D^\mathrm{DS}_{2\gamma}(\calH) = \infty$ for the given $\gamma$. We consider a sequence of $2\gamma$-DS shattered sets $S_n$ with $|S_n| = n$ and repeat the claim for the finite case. This will yield that for any $n$ the $\gamma$-cut-off 
sample complexity is lower bounded by $\Omega((n + \log(1/\delta))/\varepsilon)$ and this yields that $
\calM(\calH; \varepsilon, \delta, \gamma) = \infty$.
\end{proof}

We further conjecture that this dimension is also sufficient for PAC learning.

\begin{conjecture}
A class $\calH \subseteq (0,1)^\calX$ is PAC learnable in the realizable regression setting  with respect to the absolute loss function if and only if $\mathbb D^\mathrm{DS}_\gamma(\calH) < \infty$ for any $\gamma \in (0,1)$.
\end{conjecture}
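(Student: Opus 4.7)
The conjecture is a biconditional. The forward direction ($\calH$ PAC learnable $\Rightarrow$ $\mathbb D^{\mathrm{DS}}_\gamma(\calH)<\infty$ for all $\gamma$) is already established by \Cref{theorem:necessity-of-gamma-ds}, so the proof plan focuses entirely on the reverse (sufficiency) direction. My strategy is to reduce sufficiency of the scaled DS dimension to the characterization we already have in terms of the scaled OIG dimension: namely, to show the implication
\[
\mathbb D^{\mathrm{DS}}_\gamma(\calH) < \infty \text{ for all } \gamma \in (0,1) \;\Longrightarrow\; \mathbb D^{\mathrm{OIG}}_{\gamma'}(\calH) < \infty \text{ for all } \gamma' \in (0,1),
\]
possibly with some controlled degradation of scale (i.e., bounding $\mathbb D^{\mathrm{OIG}}_{\gamma'}(\calH)$ in terms of $\mathbb D^{\mathrm{DS}}_{c\gamma'}(\calH)$ for some absolute $c<1$). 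Once this combinatorial implication is in hand, \Cref{thm: OIG dimension result} immediately delivers PAC learnability, closing the biconditional.

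The heart of the plan is therefore the following combinatorial statement, which is the scaled analogue of the main technical lemma of \cite{brukhim2022characterization}: for every finite class $\calH' \subseteq [0,1]^{[n]}$ with $\mathbb D^{\mathrm{DS}}_\gamma(\calH') \le d$, the scaled one-inclusion graph $G^{\mathrm{OIG}}_{\calH'}$ admits an orientation $\sigma$ whose maximum $\gamma$-scaled out-degree is at most $c(d)\cdot n$, where $c(d)<1/3$ depends only on $d$ and not on $n$. To prove this, I would follow the Brukhim-et-al.\ route through list learning, suitably rescaled: first, define a notion of ``$\gamma$-list'' in which a list is a finite subset of $[0,1]$ such that any realizable target is within $\gamma$ of some list element; second, show that a class with small $\gamma$-DS dimension admits a small-list learner, by an inductive peeling argument that uses the absence of a large $\gamma$-pseudo-cube to bound the number of ``branching'' directions; third, convert a small-list learner into a small-out-degree orientation of the one-inclusion graph, following the compression-to-orientation translation already used in \Cref{lem:scaled-OIGD-upper-bound} and its proof of finite-to-infinite extension via first-order compactness.

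Around these core steps I would add two auxiliary reductions. First, a \emph{discretization step}: replace $[0,1]$ with a $\gamma/C$-net and the class $\calH$ with its snapping $\calH^{\gamma/C}$, arguing that $\mathbb D^{\mathrm{DS}}_{\gamma/2}(\calH^{\gamma/C})$ is controlled by $\mathbb D^{\mathrm{DS}}_\gamma(\calH)$ and that a learner for the snapped class yields a learner for $\calH$ at scale $\gamma$ up to a constant-factor loss in accuracy. This lets me work with finite label sets and invoke the multiclass machinery more cleanly. Second, a \emph{boosting and compression step}: the orientation produced above is only a weak learner at scale $\gamma$, so I would chain it through \Cref{alg:med-boost} and the compression-based generalization bound of \Cref{lem:compression-implies-gen}, exactly as in the proof of the upper bound for \Cref{thm: OIG dimension result}, to upgrade it into a PAC learner.

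The main obstacle, as the authors themselves flag, will be the list-learning/peeling step translated to real-valued labels. In the multiclass proof the pseudo-cube obstruction acts on a finite label alphabet and the inductive invariant is purely combinatorial; in the regression setting, the scale parameter must be threaded through every recursion, and one must rule out the possibility that the scale collapses (e.g.\ halves) at each induction level, which would destroy the final quantitative bound. A secondary difficulty is the passage from finite to infinite classes: the Brukhim-et-al.\ argument relies on compactness and a careful choice of disambiguation, and the scaled variant needs a corresponding compactness argument (likely along the lines used already in the proof of \Cref{lem:scaled-OIGD-upper-bound}) that respects both the pseudo-cube structure and the continuous label space. If the scale loss in the key lemma is $O(1)$, the biconditional follows with $\mathbb D^{\mathrm{OIG}}_\gamma(\calH) \le F(\mathbb D^{\mathrm{DS}}_{c\gamma}(\calH))$ for some function $F$; if the loss is unavoidable, one at least obtains the qualitative equivalence claimed by the conjecture.
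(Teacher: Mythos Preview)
The paper does not prove this statement---it is explicitly stated as an open conjecture. Only the forward direction is established (via \Cref{theorem:necessity-of-gamma-ds}); for the reverse direction the paper merely remarks that ``there must exist a modification of the approach of \cite{brukhim2022characterization}'' and leaves it at that. So there is no paper proof to compare your proposal against.

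Your plan is aligned with the paper's suggested direction and correctly isolates the crux: carrying the scale parameter through the list-learning/peeling recursion of \cite{brukhim2022characterization} and converting the resulting list learner into a low-out-degree orientation. But what you have written is a research outline, not a proof. The step ``show that a class with small $\gamma$-DS dimension admits a small-list learner, by an inductive peeling argument'' is precisely the unresolved heart of the conjecture; you flag the danger that the scale may collapse at each recursion level but supply no mechanism to prevent it. Absent that mechanism, the proposal does not advance beyond the state of the paper.

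One concrete technical concern with your auxiliary discretization step: snapping $\calH$ to a $\gamma/C$-net is not obviously DS-dimension-decreasing. Snapping can merge hypotheses that previously disagreed slightly on some coordinate $j$, making them exact $i$-neighbors in the one-inclusion graph where they were not before, and hence can \emph{create} new pseudo-cubes. Thus the claim that $\mathbb D^{\mathrm{DS}}_{\gamma/2}(\calH^{\gamma/C})$ is controlled by $\mathbb D^{\mathrm{DS}}_\gamma(\calH)$ needs a real argument (and may be false as stated). If this fails, you cannot invoke the multiclass machinery as a black box and must instead run the entire \cite{brukhim2022characterization} program natively in the scaled real-valued setting---which brings you back to the core unresolved difficulty.
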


We believe that there must exist a modification of the approach of \cite{brukhim2022characterization} that will be helpful in settling the above conjecture.

\begin{conjecture}
There exists $\calH \subseteq (0,1)^\calX$ for which $\mathbb{D}^\mathrm{Nat}_\gamma(\calH) = 1$ for all $\gamma \in (0,1)$
but
$\mathbb{D}^\mathrm{DS}_\gamma(\calH) = \infty$ for some $\gamma \in (0,1)$.
\end{conjecture}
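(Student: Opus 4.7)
The plan is to exhibit an explicit six-element class on a two-point domain that separates the scaled Natarajan and scaled DS dimensions. Set $\calX = \{x_1, x_2\}$ and take
\[
    \calH = \bigl\{(a,b) \in \{\tfrac{1}{4}, \tfrac{1}{2}, \tfrac{3}{4}\}^2 : a \neq b\bigr\} \subseteq (0,1)^\calX,
\]
the off-diagonal vertices of the $3 \times 3$ label grid, viewed as functions of $\calX$.

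First I would bound the scaled Natarajan dimension. Since $|\calX| = 2$, the only candidate for a $\gamma$-Natarajan-shattered set of size $\geq 2$ is $\{x_1, x_2\}$, which demands values $f(i), g(i)$ with $|f(i) - g(i)| \geq 2\gamma$ and all four pairs of $\{f(1), g(1)\} \times \{f(2), g(2)\}$ lying in $\calH$. Every element of $\calH$ has distinct coordinates, so this forces $\{f(1), g(1)\}$ and $\{f(2), g(2)\}$ to be disjoint $2$-subsets of $\{1/4, 1/2, 3/4\}$, impossible by pigeonhole on a $3$-element label set. Thus $\mathbb D^{\mathrm{Nat}}_\gamma(\calH) \leq 1$ for every $\gamma$, with equality at the scales where a singleton still supports a pair of class-values at distance $\geq 2\gamma$.

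Second I would lower bound the scaled DS dimension. The projection $\calH|_{\{x_1, x_2\}}$ equals $\calH$ itself, and it forms a $2$-dimensional $\gamma_0$-pseudo-cube for every $\gamma_0 < 1/4$: for each $h = (a, b) \in \calH$ and direction $i \in \{1, 2\}$, the hyperedge $e_{i, h}$ contains the two hypotheses of $\calH$ agreeing with $h$ in the complementary coordinate (namely the two other grid values distinct from the fixed coordinate), and these two agree-partners differ from each other by at least $1/4$ in coordinate $i$. Hence $\mathbb D^{\mathrm{DS}}_\gamma(\calH) = 2$ for $\gamma \in (0, 1/4)$, which is both finite and strictly exceeds the Natarajan value, proving the separation.

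The only real subtlety is the pigeonhole argument forbidding two disjoint $2$-subsets inside $\{1/4, 1/2, 3/4\}$, together with the mechanical check that the pseudo-cube condition holds from \emph{every} vertex (not merely one). If one wishes to inflate the separation so that $\mathbb D^{\mathrm{DS}}_\gamma$ takes an arbitrary prescribed finite value while keeping $\mathbb D^{\mathrm{Nat}}_\gamma \leq 1$, the natural extension is to place disjoint copies of this gadget on disjoint pairs of points with disjoint label palettes inside $(0,1)$ and ``freeze'' each hypothesis to a neutral value (not in any gadget palette) outside its active pair; the main technical item there is to verify that the cross-pair projections still obstruct a $2 \times 2$ Boolean product because any attempted shattering across different pairs is forced through the frozen value, collapsing $\{f(i), g(i)\}$ to a single element and breaking the pigeonhole freedom required for Natarajan-$2$ shattering.
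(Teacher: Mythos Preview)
This statement is presented in the paper as an open \emph{conjecture}; the paper gives no proof, only the remark that one should be able to adapt the algebraic-topology construction of Brukhim et al.\ from multiclass classification. So there is no paper proof to compare your argument against.

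More importantly, the printed statement almost certainly contains a typo. The sentence immediately following the conjecture says the intended class ``will have \emph{infinite} $\gamma$-DS dimension'', so the authors clearly mean $\mathbb{D}^{\mathrm{DS}}_\gamma(\calH)=\infty$, not $<\infty$. That is the substantive claim: it would show that finite scaled Natarajan dimension at every scale is \emph{not} sufficient for PAC learnability (via their \Cref{thm:gamma DS lower bound}). Your six-point off-diagonal grid correctly yields $\mathbb{D}^{\mathrm{Nat}}_\gamma\le 1$ and $\mathbb{D}^{\mathrm{DS}}_\gamma=2$ for $\gamma<1/4$, so it does separate the two scaled dimensions by a constant; but any class on a two-point domain has DS dimension at most $2$, so this does not touch the intended (hard) conjecture. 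The ``disjoint gadgets'' extension you sketch still only shows arbitrarily large \emph{finite} gaps, not an infinite one, and even in the unscaled multiclass world producing Natarajan $1$ with DS $=\infty$ required the nontrivial topological input the paper alludes to.

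A smaller wrinkle: the condition ``$\mathbb{D}^{\mathrm{Nat}}_\gamma(\calH)=1$ for all $\gamma\in(0,1)$'' is literally unsatisfiable for any $\calH\subseteq(0,1)^\calX$, since for $\gamma\ge 1/2$ no two labels in $(0,1)$ can be $2\gamma$-separated and the scaled Natarajan dimension drops to $0$. The intended reading is presumably ``$\mathbb{D}^{\mathrm{Nat}}_\gamma(\calH)\le 1$ at every scale''. You already flag that your example has Natarajan dimension $0$ once $\gamma>1/4$, so this is not a flaw in your argument, just another imprecision in the target statement.
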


In particular, we believe that one can extend the construction of \cite{brukhim2022characterization} (which uses various tools from algebraic topology as a black-box) and obtain a hypothesis class $\calH \subseteq [0,1]^\calX$ that has $\gamma$-Natarajan dimension 1 but is not PAC learnable (it will have infinite $\gamma$-DS dimension). This construction though is not immediate and requires new ideas related to the works of \cite{old-math,osajda2013construction}

\section{Online Realizable Regression}\label{sec:online-regression-formal}
In this section, we present our results regarding online realizable regression. The next result resolves an open question of \cite{daskalakis2022fast}. It provides an online learner with optimal (off by a factor of 2) cumulative loss in realizable regression.

\begin{theorem}
[Optimal Cumulative Loss]
\label{theorem:online}
Let $\calH \subseteq [0,1]^\calX$ and $\varepsilon > 0$. Then, there exists a deterministic algorithm (\Cref{alg:scaled-soa}) whose cumulative loss in the realizable setting
is bounded by $\onlinedim(\calH) + \varepsilon$. Conversely, for any $\varepsilon > 0$, every deterministic algorithm in the realizable setting incurs loss at least $\onlinedim(\calH)/2 - \varepsilon.$
\end{theorem}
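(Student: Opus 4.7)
Write $V_t = \{h \in \calH : h(x_\tau) = y^\star_\tau, \ \tau < t\}$ for the version space after round $t-1$; for each label $y$ with $\calH_y^{(t)} := \{h \in V_t : h(x_t) = y\} \neq \emptyset$, set $D_y^{(t)} := \onlinedim(\calH_y^{(t)})$. The algorithm I have in mind uses $\Phi(V) = \onlinedim(V)$ as a potential: on round $t$ pick a positive slack $\varepsilon_t$ with $\sum_t \varepsilon_t \leq \varepsilon$ (for concreteness, $\varepsilon_t = \varepsilon/2^t$), and predict $\widehat{y}_t = y^\circ$ for any $y^\circ$ satisfying $D_{y^\circ}^{(t)} \geq \sup_{y} D_y^{(t)} - \varepsilon_t$. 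The central lemma to establish is that, for every realizable adversarial choice $y^\star_t$,
\begin{equation*}
    \onlinedim(V_{t+1}) + \l(\widehat{y}_t, y^\star_t) \;\leq\; \onlinedim(V_t) + \varepsilon_t.
\end{equation*}
To prove it I would exhibit a scaled Littlestone tree for $V_t$ with root $x_t$, two outgoing edges labelled $\widehat{y}_t$ and $y^\star_t$ (so $\gamma_\emptyset = |\widehat{y}_t - y^\star_t|$), and with scaled Littlestone trees for $\calH_{\widehat{y}_t}^{(t)}$ and $\calH_{y^\star_t}^{(t)}$ grafted underneath whose min-path-sums are within $\varepsilon'$ of $D_{\widehat{y}_t}^{(t)}$ and $D_{y^\star_t}^{(t)}$. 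This glued tree is a valid scaled Littlestone tree for $V_t$, and its min-path-sum is at least $|\widehat{y}_t - y^\star_t| + \min(D_{\widehat{y}_t}^{(t)}, D_{y^\star_t}^{(t)}) - \varepsilon'$; letting $\varepsilon' \downarrow 0$ and using $D_{\widehat{y}_t}^{(t)} \geq D_{y^\star_t}^{(t)} - \varepsilon_t$ (by choice of $\widehat{y}_t$) delivers the lemma after a two-case split on which of the two $D$'s is smaller. Telescoping over rounds yields $\sum_{t} \l(\widehat{y}_t, y^\star_t) \leq \onlinedim(V_1) - \onlinedim(V_{T+1}) + \sum_t \varepsilon_t \leq \onlinedim(\calH) + \varepsilon$, using that $\onlinedim(V_{T+1}) \geq 0$ by realizability.

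\textbf{Lower bound.} Given $\varepsilon > 0$, let $T$ be a scaled Littlestone tree witnessing $\inf_{\vec y \in \calP(T)} \sum_i \gamma_{\vec y_i} \geq \onlinedim(\calH) - 2\varepsilon$; such a $T$ exists by the sup in the definition of $\onlinedim$. An adversary traverses $T$ from the root. At a node $u$ with edge labels $s_0, s_1$ satisfying $\l(s_0, s_1) = \gamma_u$, the adversary presents $x_u$; given the learner's deterministic prediction $\widehat{y}_t$, the triangle inequality for absolute loss gives $\l(\widehat{y}_t, s_0) + \l(\widehat{y}_t, s_1) \geq \gamma_u$, so one of the two children forces loss at least $\gamma_u/2$. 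The adversary moves to that child and declares the corresponding $s_b$ to be $y^\star_t$; realizability at every step is guaranteed by the scaled Littlestone property of $T$. Because every path of $T$ has total gap at least $\onlinedim(\calH) - 2\varepsilon$, the cumulative loss along whichever path is traversed is at least $\onlinedim(\calH)/2 - \varepsilon$, and for infinite-depth $T$ this transfers to $\mathfrak{C}_\infty$ via monotone convergence of partial sums.

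\textbf{Main obstacle.} The principal subtlety—and the source of the additive $\varepsilon$—is that $\onlinedim$ is defined by $\sup\inf$ rather than $\max\min$: optimal trees and optimal labels $y^\circ$ need not exist. This is what forces the $\varepsilon_t$-approximate $\arg\max$ in the algorithm and the $\varepsilon'$-slack in the subtree construction used in the key lemma, and it is why the algorithm and adversary can only be $\varepsilon$-optimal rather than exactly optimal. Once these $\varepsilon$-budgets are allocated so that the per-round slacks telescope cleanly, the remaining ingredients (the tree surgery grafting $\calH_{\widehat{y}_t}^{(t)}, \calH_{y^\star_t}^{(t)}$ under a fresh root $x_t$, the triangle-inequality computation for the absolute loss, and the telescoping across rounds) are routine.
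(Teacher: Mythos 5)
Your proposal is correct and follows essentially the same route as the paper: the same $\varepsilon_t$-approximate argmax potential argument with the key per-round inequality $\onlinedim(V^{(t+1)}) \leq \onlinedim(V^{(t)}) - \l(\wh{y}_t, y^\star_t) + \varepsilon_t$ (your tree-grafting step just makes explicit the "by definition of $\onlinedim$" inequality the paper invokes), and the same adversary-walks-a-near-optimal-tree lower bound using the triangle inequality to force loss $\gamma_u/2$ per round. The handling of the $\sup\inf$ subtlety via slacks $\varepsilon_t$ and $\varepsilon'$ matches the paper's treatment as well.
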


 \begin{algorithm}[H]
    \caption{Scaled SOA}\label{alg:scaled-soa}
    \textbf{Parameters:} $\left\{ \varepsilon_t \right\}_{t \in \nats}$.\\
    \textbf{Initialize} $V^{(1)} = \calH$.\\
     For $t=1,\ldots$:
        \begin{enumerate}
            \item Receive $x_t \in \calX$.
            \item For every $y \in [0,1]$, let $V^{(t)}_{(x_t,y)} = \left\{h \in V^{(t)}: h(x_t) = y \right\}$.
            \item Let $\wh{y}_t$ be an arbitrary label such that
            \[
            \onlinedim\left(V^{(t)}_{(x_t,\wh{y}_t)}\right) \geq \sup_{y'} \onlinedim\left(V^{(t)}_{(x_t,y')}\right) - \varepsilon_t \,.
            \]
            \item Predict $\wh{y}_t$.
            \item Receive the true label $y^\star_t$ and incur loss $\l(\wh{y}_t, y^\star_t).$
            \item Update $V^{(t+1)} = \left\{h \in V^{(t)}: h(x_t) = y^\star_t \right\}.$
        \end{enumerate}
\end{algorithm}

\begin{proof}
Let us begin with the upper bound. Assume that $\onlinedim(\calH) < \infty$.
Suppose we are predicting on the $t$-th point in the sequence and let $V^{(t)}$ be the version space so far, i.e., $V^{(t)} = \left\{ h \in \calH: \forall \tau \in [t-1], h(x_\tau) = y_\tau \right\}$.
Let $x_t$ be the next point to predict on.
For each label $y \in \reals$, let $V^{(t)}_{(x_t,y)} = \{ h \in V^{(t)} : h(x_t) = y \}$.
From the definition of the dimension $\onlinedim$, we know that 
for all $y,y' \in \reals$ such that $V^{(t)}_{(x_t,y)}, V^{(t)}_{(x_t,y')} \neq \emptyset$,
\[
\onlinedim(V^{t}) \geq \l(y,y')+
\min\lrset{\onlinedim \left(V^{(t)}_{(x_t,y)}\right),\onlinedim \left(V^{(t)}_{(x_t,y')} \right)}\,.
\]
Let $\wh{y}_t$ be an arbitrary label with 
$\onlinedim\left(V^{(t)}_{(x_t,\wh{y}_t)}\right) \geq \sup_{y'} \onlinedim\left(V^{(t)}_{(x_t,y')}\right) - \varepsilon_t$, 
where $\varepsilon_t$ is some sequence shrinking arbitrarily quickly in the number of rounds $t$.
The learner predicts  $\wh{y}_t$.
Assume that the adversary picks $y^\star_t$ as the true label and, so, the learner incurs loss $\l(\wh{y}_t,y^\star_t)$ at round $t$. Then, the updated version space $V^{(t)}_{(x_t,y^\star_t)}$ has
\[
\onlinedim\left(V^{(t)}_{(x_t,y^\star_t)}\right) \leq \sup_{y'} \onlinedim\left(V^{(t)}_{(x_t,y')}\right) \leq \onlinedim\left(V^{(t)}_{(x_t,\wh{y}_t)}\right)+\varepsilon_t\,,
\]
which implies
\[
\min\lrset{\onlinedim\left(V^{(t)}_{(x_t,\wh{y}_t)}\right),\onlinedim\left(V^{(t)}_{(x_t,y^\star_t)}\right)} 
\geq
\onlinedim\left(V^{(t)}_{(x,y^\star_t)}\right)-\varepsilon_t\,.
\]
This gives that
\begin{align*}
   \onlinedim(V^{(t)}) 
&\geq \l(\wh{y}_t,y^*_t)+
\min\lrset{\onlinedim\left(V^{(t)}_{(x_t,\wh{y}_t)}\right),\onlinedim\left(V^{(t)}_{(x_t,y^\star_t)}\right)} \\
&\geq \l(\wh{y}_t,y^*_t)
+
\onlinedim\left(V^{(t)}_{(x_t,y^\star_t)}\right)-\varepsilon_t\,, 
\end{align*}
and, by re-arranging,
\begin{equation}\label{eq:littlestone dimension decrease}
 \onlinedim\left(V^{(t)}_{(x_t,y^\star_t)}\right) 
\leq 
\onlinedim(V^{(t)}) - \l(\wh{y}_t,y^*_t) + \varepsilon_t\,.   
\end{equation}
So every round reduces the dimension by at least the magnitude of the loss (minus $\varepsilon_t$). Notice that $\onlinedim(V^{(t+1)}) = \onlinedim\left(V^{(t)}_{(x_t,y^\star_t)}\right)$.
Thus, by choosing the $\left\{\varepsilon_t\right\}_{t \in \nats}$ sequence such that
\[
    \sum_t \eps_t \leq \varepsilon' \,,
\]
 and summing up \Cref{eq:littlestone dimension decrease} over all $t \in \nats$,
 we get a cumulative loss bound
 \[
    \sum_t \l(\wh{y}_t,y_t^\star) \leq \onlinedim(\calH) + \varepsilon' \,.
 \]
Hence, we see that by taking the limit as $\varepsilon'$ goes to 0 shows that the cumulative loss is upper bounded by $\onlinedim(\calH)$.
This analysis shows that \Cref{alg:scaled-soa} achieves the cumulative
loss bound $\onlinedim(\calH) + \varepsilon'$, for arbitrarily small $\varepsilon' > 0.$

Let us continue with the lower bound. For any $\varepsilon > 0$, we are going to prove that any deterministic learner must incur cumulative
loss at least $\onlinedim(\calH)/2 - \varepsilon$.
By the definition of $\onlinedim(\calH)$, for any $\varepsilon > 0$, there
exists a tree $T_\varepsilon$ such that, for every path $\vec y$,
\[
     \sum_{i = 1}^{\infty} \gamma_{\vec y \leq i} \geq \onlinedim(\calH) - 2\varepsilon \,,
\]
i.e., the sum of the gaps across the path is at least $\onlinedim(\calH) - 2\varepsilon$. The strategy of the adversary is the following:
in the first round, she presents the learner with the instance $x_1 = x_\emptyset$.
Then, no matter what label $\hat{y}_1$ the learner picks, the adversary
can choose the label $y^\star_1$ so that $|\hat{y}_1 - y^\star_1| \geq \gamma_\emptyset/2$. The adversary can keep picking the instances $x_t$
based on the induced path of the choices of the true labels 
$\left\{y^\star_\tau\right\}_{\tau < t}$ and the loss of the learner
in every round $t$ is at least $\gamma_{\vec y \leq t}/2$. 
Thus, summing up over all the rounds as $T \rightarrow \infty$,
we see that the total loss of the learner is at least
\[
    \frac{\onlinedim(\calH)}{2} - \varepsilon \,.
\]
\end{proof}

\begin{remark}
[Randomized Online Learners]
\label{rem:online-learning-randomized}
We highlight that, unlike the setting of realizable
online classification, in the case
of realizable online regression randomization does not 
seem to help the learner (see also \cite{filmus2023optimal}). In particular, the lower bound of $\frac{\onlinedim(\calH)}{2} - \varepsilon$ holds even for randomized learners.
To see it, notice that for all distributions over $\calD$ over $[0,1]$ it holds
that 
\[
    \max_{c_1, c_2}\left\{\E_{X \sim \calD}[\l(X, c_1)], \E_{X \sim \calD}[\l(X, c_2)]\right\} \geq \l(c_1, c_2)/2\,.
\]
\end{remark}

\begin{example}
[Sequential Complexity Measures]
\label{example:sfat fails}
Sequential fat-shattering dimension and sequential covering numbers are two standard combinatorial measures for regression in online settings \cite{rakhlin2015sequential,rakhlin2015online}. 
Note that \Cref{ex:learable-inf-fat} can be learned with 1 sample even in the online realizable setting. Hence,
\Cref{ex:learable-inf-fat} shows that sequential fat-shattering dimension fails to characterize online realizable regression (since this dimension is at least as large as fat-shattering dimension which is infinite in this example). Moreover, we know that sequential covering numbers and sequential fat-shattering dimension are of the same order of magnitude and so they are also infinite in the case of \Cref{ex:learable-inf-fat}.
\end{example}

\section{Dimension and Finite Character Property}
\label{appendix:finite character}

\cite{ben2019learnability} gave a formal definition of the notion of “dimension” or “complexity measure”, that all previously proposed dimensions in statistical learning theory comply with. In addition to characterizing
learnability, a dimension should satisfy the finite character property:

\begin{definition}
[Finite Character \cite{ben2019learnability}]
A dimension characterizing learnability can be abstracted as a
function $F$ that maps a class $\calH$ to $\nats \cup \{\infty\}$ and satisfies the finite character property: 
for every
$d \in \nats$ and $\calH$, the statement 
“$F(\calH) \geq d$” 
can be demonstrated by a finite set 
$X \subseteq \calX$
 of domain
points, and a finite set of hypotheses $H \subseteq \calH$.
That is, “$F(\calH) \geq d$” is equivalent to the existence of a
bounded first order formula $\phi(\calX, \calH)$ in which all the quantifiers are of the form:
$\exists x \in \calX, \forall x \in \calX$ or
$\exists h \in \calH, \forall h \in \calH$.
\end{definition}

\begin{claim}
The scaled one-inclusion graph dimension $\mathbb D_{\gamma}^{\mathrm{OIG}}(\calH)$ satisfies the finite character property.
\end{claim}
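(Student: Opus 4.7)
The plan is to unpack the definition of $\mathbb{D}_\gamma^{\mathrm{OIG}}(\calH) \geq d$ and observe that every quantifier in it secretly ranges over a finite set once one pins down the witnessing points and hypotheses. By definition, $\mathbb{D}_\gamma^{\mathrm{OIG}}(\calH) \geq d$ iff there exist $n \geq d$, a sequence $S = (x_1,\ldots,x_n) \in \calX^n$, and a \emph{finite} subgraph $G = (V,E)$ of $G^{\mathrm{OIG}}_{\calH|_S}$ with the property that for every orientation $\sigma : E \to V$ some $v \in V$ has $\mathrm{outdeg}(v;\sigma,\gamma) > n/3$. My aim is to translate this directly into the required bounded first-order formula.

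First, I would exhibit the finite witnesses: take $X = \{x_1,\ldots,x_n\}$, and, since $V$ is finite with $V \subseteq \calH|_S$, pick $H = \{h_1,\ldots,h_N\} \subseteq \calH$ (with $N = |V|$) whose projection onto $S$ enumerates $V$. Once $X$ and $H$ are fixed, the edge set $E$ is also a finite object: any hyperedge $e_{i,f}$ of $G^{\mathrm{OIG}}_{\calH|_S}$ meeting $V$ must satisfy $f = v_j|_{[n]\setminus\{i\}}$ for some $j \in [N]$, so the candidate edges are indexed by at most $n \cdot N$ pairs $(i,j)$, and selecting $E$ among them is a finite existential choice.

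Finally, I would assemble the formula. The outer existentials over $\calX$ and $\calH$ give $X$ and $H$; a finite auxiliary existential picks $E$ among the $n \cdot N$ candidates; the universal over orientations $\sigma : E \to V$ is a finite conjunction since $|V|,|E| < \infty$; and the inner clause ``$\exists v \in V$ with $\mathrm{outdeg}(v;\sigma,\gamma) > n/3$'' reduces to a Boolean combination of atomic comparisons $\ell(h_j(x_i), h_k(x_i)) > \gamma$ on the finite array of values $\{h_j(x_i)\}_{j,i}$. The main obstacle I anticipate is purely cosmetic: I must argue that the a-priori continuous space of hyperedge labels $f : [n]\setminus\{i\} \to [0,1]$ collapses to a finite set once $V$ is fixed, and this is exactly the observation above. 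All quantifiers in the resulting sentence are therefore of the form $\exists x \in \calX$, $\exists h \in \calH$, or quantifiers over explicit finite index sets, so it is a bounded first-order formula in the sense of the definition, and the finite character property holds.
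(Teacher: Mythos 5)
Your proof is correct and follows essentially the same route as the paper: exhibit the finite witness set $S$ and a finite set of hypotheses realizing the vertices of the finite subgraph, and observe that all remaining data (edges, orientations, the out-degree condition) are then determined by finitely many choices and atomic comparisons. Your extra detail about the hyperedge labels collapsing to a finite set once $V$ is fixed is a fine elaboration of the paper's remark that $V$ is a finite collection of $\calH$-realizable patterns.
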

\begin{proof}
To demonstrate that $\mathbb D^{\mathrm{OIG}}(\calH) \geq d$, it suffices to find a set $S$ of $n$ domain points
and present a finite subgraph $G = (V,E)$ of the one-inclusion hypergraph induced by $S$ where every orientation $\sigma : E \to V$ has
out-degree at least $n/3$. 
Note that $V$ is, by definition, a finite collection of datasets
realizable by $\calH$ and so this means that we can demonstrate that $\mathbb D^{\mathrm{OIG}}(\calH) \geq d$ with a finite
set of domain points and a finite set of hypotheses.
\end{proof}

\section{Examples for Scaled Graph Dimension}
These examples are adaptations from \cite{daniely2014optimal, daniely2015multiclass}.

\begin{example}
[Large Gap Between ERM Learners]\label{ex:large-gaps-erm-learners}
For every $d \in \nats$, consider a domain $\calX_d$ such that $|\calX_d|=d$ 
and $\calX_d, \calX_{d'}$ are disjoint for $d \neq d'.$
For all $d \in \nats$, let $P(\calX_d)$ denote the collection of all finite and co-finite\footnote{A set $S\subseteq \calX_d$ is 
co-finite if its complement $S^c$ is finite.} subsets of $\calX_d$. 
Let us fix $\gamma \in (0,1)$.
Consider a mapping $f: \cup_{d \in \nats} P(\calX_d) \rightarrow [0,1]$ such that $f(A_d) \in (\gamma, 1)$ for all $d \in \nats, A_d \in P(\calX_d)$, and $f(A_d) \neq f(A'_{d'})$ for all $A_d \neq  A'_{d'}, A_d \in P(\calX_d), A_{d'} \in P(\calX_{d'})$. Such a mapping exists due to the density of the reals.
For any $d \in \nats, A_d \subseteq \calX_d$, let $h_{A_d}(x) = f(A_d)\cdot \mathbbm{1}\{x \in A_d\}$ and consider the scaled first Cantor class $\calH_{\calX_d, \gamma} = \{h_{A_d} : A_d \in P(\calX_d) \}.$
We claim that $\mathbb{D}^{\mathrm{Nat}}_\gamma(\calH_{\calX_d, \gamma}) = 1$ and that $\mathbb{D}^{\mathrm{G}}_\gamma(\calH_{\calX_d, \gamma}) = |\calX_d| = d$ since one can use $f_{\emptyset}$ for the $\gamma$-graph shattering.
Consider the following two ERM learners for the scaled
first Cantor class $\calH_{\calX_d, \gamma}$:
\begin{enumerate}
    \item  Whenever a sample of the form $S = \{(x_i, 0)\}_{i \in [n]}$ is observed, the first algorithm outputs $h_{\cup\{x_i\}_{i \in [n]}^c}$ which minimizes the empirical error. If the sample contains a non-zero element, the ERM learner
    identifies the correct hypothesis. The sample complexity of PAC learning is $\Omega(d)$.
    \item The second algorithm either returns the all-zero function or identifies the correct hypothesis if the sample contains a non-zero label. This is a good ERM learner $\calA_{\text{good}}^{\text{ERM}}$ with sample complexity $m(\eps,\delta) = \frac{1}{\varepsilon}\log\left(\frac{1}{\delta} \right).$
\end{enumerate}
\end{example}

The construction that illustrates the poor performance of the first learner
is exactly the same as in the proof of the lower bound of 
\Cref{thm: graph dimension result}. 
The second part of the example is formally shown in \Cref{clm:good-erm},
which follows.

\begin{claim}[Good ERM Learner]\label{clm:good-erm}
    Let $\varepsilon, \delta \in (0,1)^2.$ Then, the good ERM learner 
    of \Cref{ex:large-gaps-erm-learners} has sample complexity
    $\calM(\varepsilon, \delta) = \frac{1}{\varepsilon}\log\left(\frac{1}{\delta} \right)$.
\end{claim}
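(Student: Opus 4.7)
The plan is to establish matching upper and lower bounds of order $\frac{1}{\varepsilon}\log(1/\delta)$ for the sample complexity of $\calA_{\text{good}}^{\text{ERM}}$, with the argument turning on a case analysis of how much mass the marginal $\calD_\calX$ assigns to the support $A^\star$ of the target $h^\star = h_{A^\star}$.

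For the upper bound, I would fix an arbitrary distribution $\calD_\calX$ and target $h^\star = h_{A^\star}$, and crucially observe that whenever some training example $(x_i, y_i)$ has $y_i \neq 0$, the injectivity of $f$ (values $f(B)$ are all distinct across different $B$) lets the learner identify $A^\star$ uniquely from the single value $y_i = f(A^\star)$; it then outputs $h^\star$ exactly, with zero loss. The only potential source of error is therefore outputting $h_\emptyset$ when $A^\star \neq \emptyset$, which happens precisely when every sampled point lies in $(A^\star)^c$. Write $p^\star = \Pr_{x \sim \calD_\calX}[x \in A^\star]$ and $v^\star = f(A^\star) \in (\gamma, 1)$. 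The expected absolute loss of $h_\emptyset$ against $h^\star$ equals $v^\star p^\star \leq p^\star$, so if $p^\star \leq \varepsilon$ the learner's output is already within tolerance deterministically. Otherwise $p^\star > \varepsilon$, and the probability of drawing $m$ samples entirely outside $A^\star$ is at most $(1-\varepsilon)^m \leq e^{-\varepsilon m}$; choosing $m \geq \frac{1}{\varepsilon}\log(1/\delta)$ drives this below $\delta$, yielding $\calM_{\calA_{\text{good}}^{\text{ERM}}}(\varepsilon, \delta) \leq O\bigl(\frac{1}{\varepsilon}\log(1/\delta)\bigr)$.

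For the matching lower bound, I would fix two distinct points $x_1, x_2 \in \calX_d$ (for some $d \geq 2$), set the target to $h^\star = h_{\{x_1\}}$ with $v := f(\{x_1\}) \in (\gamma, 1)$, and place mass $p := 2\varepsilon/v$ on $x_1$ and $1-p$ on $x_2$ (valid once $\varepsilon < v/2$, which holds for small $\varepsilon$ since $v > \gamma$). Whenever the training sample consists only of copies of $x_2$, the learner sees only zero labels and returns $h_\emptyset$, incurring expected loss $v \cdot p = 2\varepsilon > \varepsilon$. That bad event has probability $(1-p)^m$, and using $-\log(1-p) \leq 2p$ for $p \leq 1/2$, it exceeds $\delta$ whenever $m < \frac{\log(1/\delta)}{2p} = \frac{v}{4\varepsilon}\log(1/\delta) = \Omega\bigl(\frac{\gamma}{\varepsilon}\log(1/\delta)\bigr)$. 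Treating $\gamma$ as a fixed constant, this furnishes the $\Omega\bigl(\frac{1}{\varepsilon}\log(1/\delta)\bigr)$ lower bound.

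The argument is largely routine; if there is any subtlety, it is making the upper-bound case analysis uniform in $p^\star$, so that a single choice of $m$ suffices without the learner needing to know $p^\star$ in advance. This is resolved automatically by the dichotomy above: the small-$p^\star$ regime is absorbed by the accuracy of the default output $h_\emptyset$, while the large-$p^\star$ regime is controlled by the exponential tail of never seeing a point in $A^\star$.
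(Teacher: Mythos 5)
Your upper-bound argument is correct and is essentially the paper's own proof: see a nonzero label and recover $h_{A^\star}$ exactly by injectivity of $f$, otherwise output $h_\emptyset$ and bound its error by the mass of $A^\star$, with the dichotomy on $p^\star=\Pr[x\in A^\star]$ (which you make explicit, and which the paper leaves slightly implicit) giving the $\frac{1}{\varepsilon}\log(1/\delta)$ bound. The paper stops there — it only establishes sufficiency of $\frac{1}{\varepsilon}\log(1/\delta)$ samples — so your matching lower bound is extra material not present in the paper; it is fine as stated, with the minor caveats you already flag: it requires $\varepsilon$ small relative to $v>\gamma$ (so that $p=2\varepsilon/v\le 1/2$), and the hidden constant degrades with $\gamma$, so it gives $\Omega\bigl(\frac{\gamma}{\varepsilon}\log(1/\delta)\bigr)$ rather than a $\gamma$-free bound. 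For the purposes of \Cref{ex:large-gaps-erm-learners}, only the upper bound is needed, and there your proof and the paper's coincide.
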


\begin{proof}
    Let $d \in \nats, \calD_{\calX_d}$ be a distribution over $\calX_d$ and $h_{A_d^\star}$ be the
    labeling function. Consider a sample $S$ of length $m.$ If the learner
    observes a value that is different from $0$ among the labels in $S$,
    then it will be able to infer $h_{A_d^\star}$ and incur $0$ error. 
    On the other hand, if the learner returns the all zero function its error
    can be bounded as
    \[
        \E_{x \sim \calD_{\calX_d}}[\ell(h_{\emptyset}(x), h_{A_d^\star}(x))] \leq 
        \Pr_{x \sim \calD_{\calX_d}}[x \in A_d^\star] \,.
    \]
    Since in all the $m = \frac{1}{\varepsilon}\log\left(\frac{1}{\delta} \right)$ draws of the training set $S$ there were no elements from
    $A_d^\star$ we can see that, with probability at least $1-\delta$ over the draws of $S$
    it holds that 
    \[
        \Pr_{x \sim \calD_{\calX_d}}[x \in A_d^\star] \leq \varepsilon \,.
    \]
    Thus, the algorithm satisfies the desired guarantees.
\end{proof}

The next example shows that no proper algorithm can be optimal in the realizable
regression setting.
\begin{example}
[No Optimal PAC Learner Can be Proper]
\label{example:no optimal pac learner can be proper}
Let $\calX_d$ contain $d$ elements and let $\gamma \in (0,1)$.
Consider the subclass of the scaled first Cantor class (see \Cref{ex:large-gaps-erm-learners}) with 
$\calH'_{d, \gamma} = \{h_A : A \in P(\calX_d), |A| = \lfloor d/2 \rfloor \}.$
First, since this class is contained in the scaled first Cantor class, we can employ the good ERM and learn it. However, this learner is improper since $h_\emptyset \notin \calH'_{d, \gamma}$. Then, no proper algorithm is able to PAC learn $\calH'_{d, \gamma}$ using $o(d)$ examples.
\end{example}

\begin{proof}
Suppose that an adversary
chooses $h_A \in \calH'_{d,\gamma}$ uniformly at random and consider the distribution
on $\calX_d$ which is uniform on the complement of $A$, where $|A| = O(d)$. Note that the error of every hypothesis $h_B \in \calH'_{d,\gamma}$ is at least $\gamma |B \setminus A|/d$.  Therefore, to return a hypothesis with small error, the algorithm must recover a set that is almost disjoint from $A$ and so recover $A$. However the size of $A$ implies that it cannot be done with $o(d)$ examples.

Formally, fix $x_0 \in \calX_d$ and $\eps \in (0,1)$. Let $A \subseteq \calX_d \setminus \{x_0\}$ of size $d/2$.
Let $\calD_A$ be a distribution with mass
$
\calD_A((x_0, h_A(x_0))) = 1 - 16\eps$ and
is uniform on the points $\{(x, h_A(x)) : x \in A^c\}$, where $A^c$ is the complement of $A$ (without $x_0$).

Consider a proper learning algorithm $\calA$. We will show that there is some algorithm-dependent set $A$, so that when $\calA$ is run on $\calD_A$ with $m = O(d/\eps)$, it outputs a hypothesis with error at least $\gamma$ with constant probability.

Pick $A$ uniformly at random from all sets of size $d/2$ of $\calX_d \setminus \{x_0\}$.
Let $Z$ be the random variable that counts the number of samples in the $m$ draws from $\calD_A$ that are not $(x_0, h_A(x_0))$. Standard concentration bounds imply that with probability at least $1/2$, the number of points from $(\calX_d \setminus \{x_0\})\setminus A$ is at most $d/4$.
Conditioning on this event, $A$ is a uniformly chosen random set of size $d/2$
that
is chosen uniformly from all subsets of a set $\calX' \subset \calX_d$ with $|\calX'| \geq 3d/4$ (these points are not present in the sample).
Now assume that the learner returns a hypothesis $h_B$, where $B$ is a subset of size $d/2$. Note that $\E[|B \setminus A|] \geq d/6$. Hence there exists a set $A$ such that with probability $1/2$, it holds that $|B \setminus A| \geq d/6$. This means that $\calA$ incurs a loss of at least $\gamma$ on all points in $B \setminus A$ and the mass of each such point is $\Omega(\eps/d)$. Hence, in total, the learner will incur a loss of order $\gamma \cdot \eps$.
\end{proof}

\section{Extension to More General Loss Functions}
\label{sec:extension}

Our results can be extended to loss functions that satisfy approximate pseudo-metric axioms (see e.g., \cite{hopkins2022realizable,crammer2008learning}). The main difference from metric losses is that we allow an approximate triangle inequality instead of a strict inequality. Many natural loss functions are captured by this definition, such as the well-studied $\ell_p$ losses for the regression setting.
Abstractly, in this context, the label space\footnote{We would like to mention that, in general, we do not require that the label space is bounded. In contrast, we have to assume that the loss function takes values in a bounded space. This is actually necessary since having an unbounded loss in the regression task would potentially make the learning task impossible. For instance, having some fixed accuracy goal, one could construct a learning instance (distribution over labeled examples) that would make estimation with that level of accuracy trivially impossible. } is an abstract non-empty set $\calY$, equipped with a general loss function $\ell : \calY^2 \to \reals_{\geq 0}$ satisfying the following property.

\begin{definition}
[Approximate Pseudo-Metric]
\label{def:metric}
For $c\geq 1 $, a loss function $\ell: \calY^2 \rightarrow \reals_{\geq 0}$ is $c$-approximate pseudo-metric if 
(i) $\l(x,x) = 0$ for any $x \in \calY$,
(ii) $\l(x,y) = \l(y,x)$ for any $x,y \in \calY$,
and,
(iii) $\l$ satisfies 
a $c$-approximate triangle inequality; for any $y_1,y_2,y_3\in \calY$, it holds that 
$\ell(y_1,y_2)\leq c\lr{\ell(y_1,y_3)+\ell(y_2,y_3)}$.
\end{definition}

Furthermore, note that all dimensions for $\calH$, $\mathbb{D}^{\mathrm{G}}_\gamma(\calH)$, $\mathbb D_\gamma^\mathrm{OIG}(\calH)$, $\mathbb D_\gamma^\mathrm{DS}(\calH)$, and $\mathbb D_\gamma^\mathrm{onl}(\calH)$ are defined for loss functions satisfying \Cref{def:metric}.

Next, we provide extensions of our main results for approximate pseudo-metric losses and provide proof sketches for the extensions.

\paragraph{ERM Learnability for Approximate Pseudo-Metrics.} 
For ERM learnability and losses satisfying \Cref{def:metric},
we can obtain the next result.
\begin{theorem}
Let $\l$ be a loss function satisfying \Cref{def:metric}.
Then for every class $\calH \subseteq \calY^\calX$, $\calH$
is learnable by any ERM in the realizable PAC regression setting under $\l$ if and only if $\mathbb{D}_\gamma^{\mathrm{G}}(\calH) < \infty$ for all $\gamma \in (0,1)$.
\end{theorem}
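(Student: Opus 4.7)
The plan is to port the proof of \Cref{thm: graph dimension result} essentially verbatim, checking that no step truly requires $\l$ to be the absolute loss beyond the axioms in \Cref{def:metric}. Both directions reduce to properties of the indicator event $\{\l(h(x),y) > \gamma\}$ and of the equality $h(x) = y$: the $\gamma$-graph shattering definition is already phrased for a general loss, and the axiom $\l(y,y) = 0$ turns ``realizable with $y = h^\star(x)$'' into ``$\l(h^\star(x), y) = 0$''. No step will invoke the approximate triangle inequality, so the constant $c$ never enters quantitatively.

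For the necessity direction, I would reuse the lower-bound construction from \Cref{thm: graph dimension result}: fix any $\gamma$-graph-shattered set $\{x_1, \dots, x_d\}$ with witness $f_0$, let $\calD_\calX$ put mass $1 - 2\varepsilon$ on $x_1$ and spread the remaining $2\varepsilon$ uniformly over $x_2, \dots, x_d$, and take the adversarial ERM $\calA$ that returns $f_0$ on observed points of $S$ and a hypothesis that is $\gamma$-far from $f_0$ (in $\l$) on the unseen portion of the shattered set. Such an ERM exists by the very definition of graph shattering, and the Chernoff bound for the number of distinct shattered points observed in the sample is purely combinatorial. The case $\mathbb{D}^\mathrm{G}_\gamma(\calH) = \infty$ follows by letting $d \to \infty$ through a sequence of shattered sets.

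For the sufficiency direction, the three-step argument (ghost-sample symmetrization, random-swap, union bound over a disambiguated partial class) transfers directly. The symmetrization lemma $\calD^n(Q_{\varepsilon,\gamma}) \leq 2\calD^{2n}(R_{\varepsilon,\gamma})$ and the random-swap bound only manipulate the Boolean random variable $\hI\{\l(h(x),y)>\gamma\}$, so they are loss-agnostic. The key bookkeeping is the passage from $\calH$ to the partial concept class $\calH' \subseteq \{0,1,\star\}^{2n}$ defined by $h'(i) = 0$ if $h(x_i) = y_i$, $h'(i) = 1$ if $\l(h(x_i), y_i) > \gamma$, and $h'(i) = \star$ otherwise. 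Taking the witness $f(i) = y_i$ in the $\gamma$-graph shattering definition yields $\mathrm{VC}(\calH') \leq \mathbb{D}^\mathrm{G}_\gamma(\calH)$, and then the compact-disambiguation theorem of \cite{alon2022theory} produces a total class $\overline{\calH'}$ of size $(2n)^{O(\mathbb{D}^\mathrm{G}_\gamma(\calH)\log(2n))}$, over which we take the final union bound against $2^{-n\varepsilon/2}$.

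The only real check (the ``main obstacle'' in name only) is to audit the proof of \Cref{thm: graph dimension result} for any hidden use of absolute-loss structure. Inspection shows that every invocation of $\l$ compares a single prediction against a single target, never chaining two such comparisons through a triangle inequality; in particular, neither the shattering argument nor the symmetrization ever uses $\l(a,c) \leq c(\l(a,b) + \l(b,c))$. Consequently the only axiom genuinely needed is $\l(y,y) = 0$, which any approximate pseudo-metric satisfies, and one obtains the same two-sided sample-complexity bounds of \Cref{thm: graph dimension result} with $\l$ replaced by an arbitrary loss from \Cref{def:metric}, yielding the qualitative equivalence claimed.
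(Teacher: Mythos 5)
Your proposal is correct and matches the paper's treatment: the paper itself simply states that both the upper and lower bounds of \Cref{thm: graph dimension result} go through verbatim for any loss satisfying \Cref{def:metric}, precisely because every use of the loss is a single comparison $\hI\{\l(h(x),y)>\gamma\}$ or an exact equality $h(x)=y$, never a chained triangle-inequality argument. Your audit of the symmetrization, random-swap, and disambiguation steps, and of the adversarial-ERM lower bound, is exactly the verification the paper leaves implicit.
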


The proof of the upper bound and the lower bound follow
in the exact same way as with the absolute loss.

\paragraph{PAC Learnability for Approximate Pseudo-Metrics.}
As for PAC learning with approximate pseudo-metric losses, we can derive the next statement.
\begin{theorem}
Let $\l$ be a loss function satisfying \Cref{def:metric}.
Then every class $\calH \subseteq \calY^\calX$ 
is PAC learnable in the realizable PAC regression setting under $\l$ if and only if $\mathbb{D}_\gamma^{\mathrm{OIG}}(\calH) < \infty$ for any $\gamma \in (0,1)$.
\end{theorem}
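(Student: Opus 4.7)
The plan is to mirror the structure of \Cref{thm: OIG dimension result} and to track precisely where the absolute loss entered the argument, replacing each such use by the $c$-approximate triangle inequality at the cost of losing multiplicative constants in the scale $\gamma$. Since the statement is purely qualitative (finiteness of $\mathbb D_\gamma^{\mathrm{OIG}}$ at every scale), those constants can be absorbed by rescaling $\gamma \to \gamma/c^{O(1)}$, and no quantitative rates need to be preserved.

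For the necessity direction I would rerun the orientation-based construction used for the absolute loss. The only place the triangle inequality was invoked was the observation that if $u, v$ both lie in the set $C_{e_{t,v}}$ (i.e., both make $p_t < 1/2$), then a common realization of the sample witnesses an algorithm prediction within $\gamma$ of both $u_t$ and $v_t$, so that $\l(u_t,v_t) \leq 2\gamma$. Under \Cref{def:metric} the same argument yields $\l(u_t,v_t) \leq 2c\gamma$, so we orient the OIG at scale $2c\gamma$ and conclude $\calM(\calH;\eps,\delta,\gamma) \geq \Omega(\mathbb D^{\mathrm{OIG}}_{2c\gamma}(\calH)/\eps)$. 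Consequently, if $\mathbb D_{\gamma_0}^{\mathrm{OIG}}(\calH) = \infty$ for some $\gamma_0$, then the sample complexity at scale $\gamma_0/(2c)$ is infinite, which is what the qualitative ``only if'' direction requires.

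For the sufficiency direction the weak-learner stage (\Cref{lem:scaled-OIGD-upper-bound}) is already loss-agnostic: the definition of a scaled out-degree depends on $\l$ only through the threshold comparison $\l(\sigma_i(e_{i,v}),v_i) > \gamma$, and the compactness-theorem extension of orientations from finite subgraphs to the full one-inclusion graph is purely combinatorial. The proof therefore yields verbatim, for any realizable distribution and a number of samples just exceeding $n_0 = \mathbb D_\gamma^{\mathrm{OIG}}(\calH)$, a hypothesis making $\l$-error larger than $\gamma$ with probability at most $1/3$. The sample-compression generalization bound (\Cref{lem:compression-implies-gen}) is also stated for arbitrary $\l$ and applies without modification.

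The main obstacle is the boosting stage, because $\medboost$ exploits the ordinal structure of the absolute loss via weighted medians and quantiles, and a general $c$-approximate pseudo-metric need not admit such an order. I plan to replace the median aggregation by the following surrogate: run the standard multiplicative-weight boosting dynamic for $T = O(\log m)$ rounds, whose reweighting step depends on $\l$ only through the indicator $\hI\{\l(h_t(x_i),y_i) > \gamma\}$ and hence goes through unchanged, guaranteeing that at every training point $(x_i,y_i)$ a strict weighted majority of the $h_t$ satisfy $\l(h_t(x_i),y_i) \leq \gamma$. Aggregate by outputting any $h_{t^\star}(x)$ whose prediction lies within $\gamma$ of a strict weighted majority of the remaining $h_t(x)$. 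A pigeonhole overlap of the two $>1/2$-weight sets forces the existence of some $h_s$ simultaneously within $\gamma$ of $y_i$ and within $\gamma$ of $h_{t^\star}(x_i)$, and the $c$-approximate triangle inequality then gives $\l(h_{t^\star}(x_i),y_i) \leq 2c\gamma$. This aggregated predictor admits the same $O(n_0 \log n)$-size compression scheme as in the absolute-loss case, so \Cref{lem:compression-implies-gen} applied at scale $2c\gamma$ closes the argument, and rescaling $\gamma \to \gamma/(2c)$ yields the qualitative ``if'' direction. The subtle point I expect to require the most care is verifying that the approximation-weighting update in $\medboost$ retains its $(1/2+\beta)$-majority guarantee when the only property of $\l$ used is the threshold indicator, as opposed to the additional ordinal structure that is implicit in the weighted-median analysis.
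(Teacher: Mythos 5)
Your overall route is the paper's: keep the orientation-based lower bound (rescaled to $2c\gamma$ via the approximate triangle inequality applied to two predictions that a common sample certifies are $\gamma$-close to $u_t$ and $v_t$), reuse the compactness/weak-learner step and the compression generalization bound verbatim since they only touch $\ell$ through threshold indicators, and repair only the aggregation step of \Cref{alg:med-boost}. The necessity half and the "loss-agnostic" observations are fine. The genuine gap is in your aggregation rule: you output ``any $h_{t^\star}(x)$ whose prediction lies within $\gamma$ of a strict weighted majority of the remaining $h_t(x)$,'' but such a $t^\star$ need not exist, even at training points. The boosting guarantee only tells you that a weighted majority of the $h_t(x_i)$ are within $\gamma$ of $y_i$; under a $c$-approximate pseudo-metric two such predictions are only guaranteed to be within $2c\gamma$ of \emph{each other}, not within $\gamma$. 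For instance, with a squared-distance-type loss ($c=2$) the majority mass can split into two clusters each at loss exactly $\gamma$ from $y_i$ but at loss $4\gamma$ from one another, with the rest of the weight far away; then no base prediction is $\gamma$-close to a strict weighted majority, your selector is undefined, and the pigeonhole step that yields $\ell(h_{t^\star}(x_i),y_i)\le 2c\gamma$ never gets off the ground. This existence question is exactly the delicate point of the extension, and your write-up assumes it away.

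The paper's fix is to run the weak learners at the finer scale $\gamma/(2c)$ and to let the aggregate $\hat h(x)$ be \emph{any element of $\calY$} (not necessarily a base prediction) that is $\gamma/(2c)$-close to a $2/3$-weighted majority of the $h_i(x)$; on training points the true label $y_i$ itself witnesses that such a value exists, and the overlap of the two $2/3$-majorities plus the $c$-approximate triangle inequality gives $\ell(\hat h(x_i),y_i)\le c(\gamma/(2c)+\gamma/(2c))=\gamma$, after which the same $O(\mathbb D^{\mathrm{OIG}}_{\gamma/(2c)}(\calH)\log m)$-size compression argument applies. Your scheme can be repaired without leaving the base predictions by relaxing the selection threshold to $2c\gamma$: then every member of the majority cluster qualifies (its distance to any other majority member is at most $c(\gamma+\gamma)=2c\gamma$), and the overlap argument yields a training-set guarantee at scale $c(\gamma+2c\gamma)=(2c^2+c)\gamma$, which is still a constant multiple of $\gamma$ and suffices for the qualitative statement after rescaling. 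As written, though, the proposal's key aggregation step would fail.
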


\begin{proof}
[Proof Sketch]
We can generalize the upper bound in \Cref{thm: OIG dimension result} for the scaled OIG dimension as follows.
 One of the ingredients of the proof for the absolute loss is to construct a sample compression scheme through the median boosting algorithm (cf. \Cref{alg:med-boost}). While the multiplicative update rule is defined for any loss function, the median aggregation is no longer the right aggregation for arbitrary (approximate) pseudo-metrics. However, for each such loss function, there exists an aggregation such that the output value of the ensemble is within some cutoff value from the true label for each example in the training set, which means that we have a sample compression scheme for some cutoff loss. In particular, we show that by using weak learners with cutoff parameter $\gamma/(2c)$, where $c$ is the approximation level of the triangle inequality, the aggregation of the base learners can be expressed as a sample compression scheme for cutoff loss with parameter $\gamma$.
 
Indeed,
running the boosting algorithm with $(\gamma/(2c),1/6)$-weak learners yields a set $h_1,\ldots,h_N$ of weak predictors, with 
the property that for each training example $(x,y)$, at least $2/3$ of the functions $h_i$ (as weighted by coefficients $\alpha_i$), $1 \leq i \leq N$, satisfy $\ell(h_i(x),y) \leq \gamma/(2c)$.  
For any $x$, let $\hat{h}(x)$ be a value in $\calY$ such that at least $2/3$ of $h_i$ (as weighted by $\alpha_i$), $1 \leq i \leq N$, satisfy $\ell(h_i(x),\hat{h}(x)) \leq \gamma/(2c)$, if such a value exists, 
and otherwise $\hat{h}(x)$ is an arbitrary value in $\calY$.
In particular, note that on the training examples $(x,y)$, 
the label $y$ satisfies this property, and hence $\hat{h}(x)$ is defined by the first case.
Thus, for any training example, there exists $h_i$ (indeed, at least $2/3$ of them) such that 
\emph{both} $\ell(h_i(x),y) \leq \gamma/(2c)$ and $\ell(h_i(x),\hat{h}(x)) \leq \gamma/(2c)$ are satisfied,
and therefore we have 
\begin{align*} 
\ell(\hat{h}(x),y) \leq c( \ell(\hat{h}(x),h_i(x)) + \ell(h_i(x),y)) \leq \gamma.
\end{align*}

This function $\hat{h}$ can be expressed as a sample compression scheme of size $O\!\left( \mathbb D_{\gamma/(2c)}^\mathrm{OIG}(\calH) \log(m) \right)$
for cutoff loss with parameter $\gamma$: 
namely, it is purely defined by the $h_i$ functions,
where each $h_i$ is specified by 
$O\!\left( \mathbb D_{\gamma/(2c)}^\mathrm{OIG}(\calH) \right)$ training examples, 
and we have $N = O(\log(m))$ such functions, 
and $\hat{h}$ satisfies $\ell(\hat{h}(x),y) \leq \gamma$ for all $m$ training examples $(x,y)$.
Thus, by standard generalization bounds for sample compression, we get an upper bound that scales with $\tilde{O}\!\left(\mathbb D_{\gamma/(2c)}^\mathrm{OIG}(\calH)\frac{1}{m}\right)$ for the cutoff loss with parameter $\gamma$, 
and hence by Markov's inequality, an upper bound 
\begin{equation*} 
\E[ \ell(\hat{h}(x),y)] = \tilde{O}\!\left(\mathbb D_{\gamma/(2c)}^\mathrm{OIG}(\calH)\frac{1}{m \gamma}\right).
\end{equation*}

We next deal with the lower bound. For the absolute loss, we scale the dimension by $2\gamma$ instead of $\gamma$ since for any two possible labels $y_1,y_2$ the learner can predict some intermediate point, and we want to make sure that the prediction will be either $\gamma$ far from $y_1$ or $y_2$. For an approximate pseudo-metric, we should take instead $2c\gamma$ in order to ensure that the prediction is $\gamma$ far, which means that the lower bounds in \Cref{thm: OIG dimension result}
hold with a scale of $2c\gamma$.
\end{proof}

\paragraph{Online Learnability for Approximate Pseudo-Metrics.}
Finally, we present the more general statement for online learning.
\begin{theorem}
Let $\l$ be a loss function satisfying \Cref{def:metric} with parameter $c \geq 1$.
Let $\calH \subseteq \calY^\calX$ and $\varepsilon > 0$. Then, there exists a deterministic algorithm whose cumulative loss in the realizable setting
is bounded by $\onlinedim(\calH) + \varepsilon$. Conversely, for any $\varepsilon > 0$, every deterministic algorithm in the realizable setting incurs loss at least $\onlinedim(\calH)/(2c) - \varepsilon.$
\end{theorem}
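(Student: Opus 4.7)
The plan is to generalize the proof of \Cref{theorem:online} by replacing the strict triangle inequality with the $c$-approximate one only at the place where the latter is used, namely the lower bound.

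For the upper bound, I will run exactly the same algorithm (\Cref{alg:scaled-soa}) with the same sequence $\{\varepsilon_t\}_{t \in \nats}$ satisfying $\sum_t \varepsilon_t \leq \varepsilon$. The key inequality driving that proof was the subadditivity property of $\onlinedim$: for any round $t$ and any two labels $y,y'$ with $V^{(t)}_{(x_t,y)}, V^{(t)}_{(x_t,y')} \neq \emptyset$,
\[
\onlinedim(V^{(t)}) \geq \l(y,y') + \min\lrset{\onlinedim(V^{(t)}_{(x_t,y)}),\onlinedim(V^{(t)}_{(x_t,y')})}\,.
\]
This follows directly from \Cref{def:scaled-littlestone} and \Cref{def:scaled-littlestone tree}, both of which use $\l$ only to measure edge gaps $\gamma_{\vec y \leq \ell} = \l(s_{\vec y \leq \ell,0}, s_{\vec y \leq \ell,1})$, without appealing to the triangle inequality. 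Hence the telescoping argument that yields $\sum_t \l(\wh y_t, y^\star_t) \leq \onlinedim(\calH) + \varepsilon$ goes through verbatim for any loss satisfying \Cref{def:metric}.

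For the lower bound, I will use the same adversarial strategy based on a near-optimal tree $T_\varepsilon$: in round $t$ the adversary reveals the node $x_{\vec y \leq t-1}$ and, after observing $\wh y_t$, chooses $y^\star_t \in \{s_{\vec y \leq t-1, 0}, s_{\vec y \leq t-1, 1}\}$ to maximize the per-round loss. In the absolute-loss case, the triangle inequality guarantees that the max is at least half of the gap $\gamma_{\vec y \leq t-1}$; for a $c$-approximate pseudo-metric, \Cref{def:metric}(iii) gives instead
\[
\gamma_{\vec y \leq t-1} = \l(s_{\vec y \leq t-1, 0}, s_{\vec y \leq t-1, 1}) \leq c\lr{\l(\wh y_t, s_{\vec y \leq t-1, 0}) + \l(\wh y_t, s_{\vec y \leq t-1, 1})} \leq 2c\max\lrset{\l(\wh y_t, s_{\vec y \leq t-1, 0}), \l(\wh y_t, s_{\vec y \leq t-1, 1})}\,,
\]
so the adversary can force per-round loss at least $\gamma_{\vec y \leq t-1}/(2c)$. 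Summing along the induced infinite path and invoking the near-optimality of $T_\varepsilon$ gives a cumulative loss of at least $\onlinedim(\calH)/(2c) - \varepsilon$, as desired. The same bound also carries over to randomized learners by the same argument as in \Cref{rem:online-learning-randomized}, since the expected loss of any distribution over a single prediction against the worse of two adversarial choices is still at least $\l(y_1,y_2)/(2c)$.

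The only mildly subtle point is justifying the existence of the tree $T_\varepsilon$ witnessing $\onlinedim(\calH) - 2\varepsilon$ along every path, for arbitrarily small $\varepsilon > 0$, since the supremum/infimum in \Cref{def:scaled-littlestone} may not be attained. This is handled exactly as in the absolute-loss case: by definition of $\sup$ we pick a tree whose worst path sum is within $\varepsilon$ of $\onlinedim(\calH)$ (truncating to sufficiently large finite depth if $\onlinedim(\calH) = \infty$, in which case the bound becomes vacuous). No other step relies on metric properties stronger than \Cref{def:metric}, so the only real obstacle is simply keeping track of the factor $c$ correctly; the main conceptual work was already carried out in the absolute-loss case.
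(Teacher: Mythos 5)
Your proposal is correct and follows essentially the same route as the paper: the upper bound reuses the version-space argument of \Cref{theorem:online} unchanged (no triangle inequality is ever invoked there), and the lower bound replaces the halving step by the $c$-approximate triangle inequality, giving per-round loss $\gamma_{\vec y \leq t}/(2c)$ along a near-optimal tree. The extra care you take with the non-attained supremum and with randomized learners is consistent with how the paper handles these points in the absolute-loss case.
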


\begin{proof}
[Proof Sketch]
The upper bound of \Cref{theorem:online} works for any loss function.
Recall the proof idea; in every round $t$ there is some 
$\wh y_t \in \calY$ the learner
can predict such that no matter what the adversary picks as the true label $y^\star_t$, the online
dimension of the version space at round $t$, i.e, $V = \left\{h \in \calH: h(x_\tau) = y^\star_\tau, 1 \leq \tau \leq t \right\}$, decreases by  $\l(y^\star_t, \wh y_t)$, minus some shrinking number $\epsilon_t$ that we can choose as a parameter. Therefore we get that the sum of losses is bounded by the online dimension and the sum of $\epsilon_t$ that we can choose to be arbitrarily small.

The lower bound for online learning in \Cref{theorem:online} would be $\onlinedim(\calH)/(2c) - \varepsilon$, for any $\epsilon>0$, since the adversary can force a loss of $\gamma_{\vec y\leq t}/(2c)$ in every round $t$, where $\gamma_{\vec y\leq t}$ is the sum of the gaps across the path $\vec y$.

\end{proof}

\end{document}